\documentclass[final,12pt]{colt2026} 

\usepackage{mathrsfs}
\usepackage{dsfont}
\usepackage{hyperref}
\hypersetup{
    colorlinks = true,
    linkcolor = blue,
    citecolor=blue
    }
\usepackage[
        noabbrev,
        capitalise
    ]
    {cleveref}
\usepackage{ulem}
\usepackage{enumerate}

\newtheorem{proposition}{Proposition}

\newtheorem{lemma}{Lemma}

\newtheorem{assumptionalt}{Assumption}[theorem]

\newenvironment{hypp}[1]{
  \renewcommand\theassumptionalt{#1}
  \assumptionalt
}{\endassumptionalt}

\title[Convergence Rates for Distribution Matching with Sliced Optimal Transport]{Convergence Rates for Distribution Matching with \\
Sliced Optimal Transport}
\usepackage{times}

\newcommand{\Id}{\text{Id}}
\newcommand{\RR}{\mathbb{R}}
\newcommand{\Tr}{\text{Tr}}
\newcommand{\ie}{\textit{i.e.}}
\newcommand{\eg}{\textit{e.g.}}
\newcommand{\rmd}{\mathrm{d}}
\newcommand{\Jac}{\text{Jac}\,}
\newcommand{\Lt}{\mathbf{L}^2}

\DefineNamedColor{named}{PineGreen}{cmyk}{0.92,0,0.59,0.25}

\coltauthor{%
 \Name{Gauthier Thurin} \Email{gthurin@mail.di.ens.fr}\\
 \addr CNRS, ENS Paris, France
 \AND
 \Name{Claire Boyer}\\
 \addr LMO, Université Paris-Saclay, Orsay, France ; Institut universitaire de France
 \AND 
 \Name{Kimia Nadjahi} \\
 \addr CNRS, ENS Paris, France
}

\begin{document}

\maketitle

\begin{abstract}%
    We study 
    the slice-matching scheme, an efficient 
     iterative method for distribution matching based on sliced optimal transport.
    We investigate convergence to the target distribution and derive quantitative non-asymptotic rates. 
    To this end, we establish Lojasiewicz-type inequalities for the Sliced-Wasserstein objective. 
    A key challenge is to control along the trajectory the constants in these inequalities. 
    We show that this becomes tractable 
    for Gaussian distributions. 
    Specifically, 
    eigenvalues are controlled 
    when matching along random orthonormal bases at each iteration. 
    We complement our theory with numerical experiments and illustrate the predicted dependence on dimension and step-size, as well as the stabilizing effect of orthonormal-basis sampling.\\

\end{abstract}

\begin{keywords}%
  distribution matching, Sliced-Wasserstein distance, computational optimal transport, non-convex optimization, stochastic gradient descent
\end{keywords}

\section{Introduction}

Many problems in modern machine learning require comparing and matching probability distributions, as in generative modeling \citep{marzouk2016introduction,pmlr-v202-grenioux23a}, density estimation \citep{wang2022minimax,irons2022triangular} or domain adaptation \citep{courty2016optimal}. The goal is typically to transform a source distribution in order to match a more complex target distribution.

\paragraph{Distribution matching and optimal transport.} Distribution matching can be naturally formalized through optimal transport (OT), which provides both a geometrically meaningful distance between probability measures and, when it exists, a transport map pushing a source distribution $\sigma$ to a target distribution $\mu$ \citep{villani2008optimal,ambrosio2007gradient}. 
OT-based methods have led to major theoretical and algorithmic advances across
machine learning, image processing and scientific computing
\citep{peyre2019computational,santambrogio2015optimal}. However, computing OT maps is in general expensive both computationally and statistically~\citep{HutterMinimax2021,chewi2024statistical}.

\paragraph{Iterative approaches and measure interpolations.} 
The high cost of OT has
motivated alternative approaches that decompose the transport problem into simpler subproblems. A key idea is to build an interpolation between $\sigma$ and $\mu$ through a sequence of elementary transformations, rather than estimating a single global transport map. This idea underlies many iterative correction schemes: although each step may only partially reduce the discrepancy between $\sigma$ and $\mu$, their composition is expected to gradually align them. Among all possible interpolations, the McCann interpolation~\citep{mccann1997convexity} plays a distinguished theoretical role, as it corresponds to geodesics in Wasserstein space, but it is rarely tractable. A generic iterative sequence of measures that mimics McCann’s interpolation can be constructed through 
\begin{equation}
\label{iterativemeasures} \widehat{\sigma}_{k+1} = \big((1-\gamma_k) \Id + \gamma_k \widehat{T}_k\big)_\sharp \widehat{\sigma}_k, 
\end{equation} 
where $\widehat{T}_k$ is an approximate transport map from $\widehat{\sigma}_k$ to $\mu$, $(\gamma_k)_k$ a sequence of step sizes. Here, $T_\sharp \sigma$ denotes the pushforward of $\sigma$ by the function $T$: if $X\sim\sigma$, then $T(X) \sim T_\sharp \sigma$.

Different choices for $\widehat{T}_k$ have been proposed, such as entropy-regularized OT~\citep{kassraie2024progressive} and neural-network parameterizations in diffusion or flow-based models~\citep{song2021scorebased,albergo2025}. In this work, we focus on sliced optimal transport, a computationally efficient alternative 
that leverages one-dimensional projections
~\citep{pitie2007automated,rabin2011wasserstein,rabin2012}. 

\paragraph{Sliced optimal transport and slice-matching maps.} The Sliced-Wasserstein distance (SW) compares two distributions by projecting them onto one-dimensional subspaces and averaging the resulting Wasserstein distances~\citep{rabin2011wasserstein,rabin2012}. Thanks to its scalability and simple implementation, SW has attracted growing interest in large-scale applications, including generative modeling~\citep{deshpande2019max,wu2019sliced,liutkus2019sliced,kolouri2018slicedae,dai21a,coeurdoux2022,du2023}. This empirical success has in turn motivated theoretical work on the geometry induced by sliced OT, sample complexity, and convergence properties of associated algorithms~\citep{nadjahi2019asymptotic,  nadjahi2020statistical,manole2022,tanguy2023convergence,tanguy2025properties,li2023measure,vauthier2025properties}.

Although sliced OT does not directly provide transport maps or geodesics~\citep{kitagawa2024,park2025geometry}, several constructions have been proposed in this spirit \citep{liu2025expected,mahey2023fast}. 
In particular, \textit{slice-matching maps}~\citep{pitie2007automated,li2024approximation} correspond to Wasserstein gradients of the SW functional \citep{li2023measure}.
For a direction $\theta \in \mathbb{S}^{d-1}$, let $\sigma^\theta$ and $\mu^\theta$ denote the push-forwards of $\sigma$ and $\mu$ by the projection $x \mapsto \langle x,\theta\rangle$. Denoting by $T_{\sigma^\theta}^{\mu^\theta} : \mathbb{R} \to \mathbb{R}$ the univariate optimal transport map from $\sigma^\theta$ to $\mu^\theta$, the associated slice-matching map is defined by \begin{equation}\label{slicemap1D} T_{\sigma,\theta}(x) = x + \big(T_{\sigma^\theta}^{\mu^\theta}(\theta^T x) - \theta^\top  x\big) \theta \,. 
\end{equation} 
Since the probability mass is transported along a single direction, $T_{\sigma, \theta}$ does not transport $\sigma$ to $\mu$. The \textit{Iterative Distribution Transfer} (IDT) algorithm \citep{pitie2007automated} therefore constructs an iterative composition of slice-matching maps, corresponding to \eqref{iterativemeasures} with constant step sizes $\gamma_k = 1$. Using random directions $\theta$ at each iteration, this procedure is expected to gradually push $\sigma$ to $\mu$ and has been successfully applied in practice.

\paragraph{Related works.} The IDT algorithm \citep{pitie2007automated} was introduced before the Sliced-Wasserstein distance~\citep{rabin2011wasserstein,rabin2012} and was later interpreted as an iterative sliced OT procedure. Early works established convergence of the IDT iterates when the target is the standard Gaussian distribution and studied its continuous-time limit, often referred to as the \textit{Sliced-Wasserstein flow}~\citep{pitie2007automated,bonnotte2013unidimensional}. More recently, \citet{cozzi2025} proved convergence of SW flows to the isotropic Gaussian. Relatively few results are available on the convergence of sliced OT procedures beyond the Gaussian setting. A more general analysis is conducted in \citet{li2023measure}, which reinterprets IDT as a stochastic gradient descent method (SGD) on SW and accounts for time discretization and randomness in the sampled directions. They prove asymptotic convergence of the discrete-time dynamics under strong assumptions, notably that the iterates remain in a compact set containing no other critical points than the target measure. In parallel, several works have studied SW as a loss between discrete measures and highlight the existence of nontrivial critical points, which motivate noisy or regularized variants of SGD~\citep{tanguy2024reconstructing,tanguy2025properties,vauthier2025properties}.

\paragraph{Contributions.} 
The main goal of this paper is to establish convergence rates for the slice-matching scheme~\citep{li2023measure}. Our approach is based on identifying Polyak--\L{}ojasiewicz (PL) inequalities for the Sliced-Wasserstein objective, which bound the loss by the squared norm of its Wasserstein gradient. 
These inequalities imply quantitative convergence rates to the target distribution. The main technical challenge is that the associated constants depend on lower and upper bounds on the density of the iterates, which are difficult to control along the trajectory.

We address this difficulty within the class of elliptic distributions, for which slice-matching maps are linear. In this regime, controlling the density of the iterates amounts to controlling the eigenvalues of their covariance matrices. When the target distribution is isotropic, we show that these eigenvalues can be controlled in expectation, which in turn yields explicit convergence rates. 
Crucially, such spectral control holds from the very first iteration when the updates use random orthonormal bases of directions. This stands in contrast with the single-direction setting, where the lack of orthogonality leads to larger fluctuations in the covariance structure before stabilization.

\paragraph{Structure.} Section~\ref{secMathFramework} introduces the mathematical framework. Preliminary convergence results to critical points are discussed in Section~\ref{CvgceGradientSmooth}. Section~\ref{PL_KL_ineq} presents our main results on \L{}ojasiewicz- and PL-type inequalities and on the control of the associated constants. Numerical experiments are reported in Section~\ref{numexp}, followed by a conclusion. 
Technical proofs are deferred to the appendices.
Python codes are available at \url{https://github.com/gauthierthurin/SlicedMaps}.

\paragraph{Notation.}
For any probability measure $\nu$ on $\mathbb{R}^d$, let $\mathrm{M}_2(\nu) = \int_{\mathbb{R}^d} \|x\|^2 \rmd \nu(x)$ be its second moment. 
$\mathcal{P}_2(\RR^d)$ refers to the set of measures with a finite second moment and $\mathcal{P}_{2,ac}(\mathbb{R}^d) \subset \mathcal{P}_2(\mathbb{R}^d)$ is the set of absolutely continuous measures with respect to the Lebesgue measure. We denote the Euclidean norm and inner product on $\RR^d$ by $\|\cdot\|$ and $\langle\cdot,\cdot\rangle$. 
For $\nu\in\mathcal{P}_2(\RR^d)$, we define $\Lt(\nu) = \{ f : \RR^d \to \RR^d\ :\ \int_{\RR^d} \| f(x) \|^2 \mathrm{d}\nu(x) \leq +\infty \}$, and for $f,g\in \Lt(\nu)$, $\langle f,g\rangle_\nu = \int_{\RR^d} \langle f(x),g(x)\rangle \mathrm{d}\nu(x)$, $\|f\|_\nu = \sqrt{\langle f,f\rangle_\nu}$.
Let $\mathbb{S}^{d-1} = \{ \theta \in \RR^d: \Vert \theta \Vert = 1\}$ be the unit sphere in $\RR^d$.
For any $\theta \in \mathbb{S}^{d-1}$, $\pi_\theta: \RR^d \rightarrow \RR $ is the projection $\pi_\theta(x) = \langle x,\theta\rangle$. Finally, $\lambda_i(A)$ refers to the $i$-th smallest eigenvalue of a matrix $A$, with $\lambda_{\rm min}(A)$ the smallest and $\lambda_{\rm max}(A)$ the largest. 

\section{Background on the Slice-Matching Scheme}\label{secMathFramework}

We begin by reviewing the definition of optimal transport and its properties for one-dimensional measures, which motivates slicing.
Let $T_\sigma^\mu$ denote the OT map from $\sigma$ to $\mu$, defined as a minimizer in the Wasserstein distance: $W_2^2(\sigma,\mu) = \inf_{T: \, T_\sharp \sigma = \mu} \mathbb{E}_{X\sim\sigma} \big\| X - T(X) \big\|^2$. In dimension one, the optimal transport map admits a closed-form expression,
$
T_\sigma^\mu = F_\mu^{-1} \circ F_\sigma ,
$
where $F_\rho$ is the cumulative distribution function of $\rho \in \mathcal{P}_2(\mathbb{R})$.
This motivates the definition of the Sliced-Wasserstein distance, which averages one-dimensional Wasserstein distances over random projections:
\[
SW_2^2(\sigma,\mu) = \int_{\mathbb{S}^{d-1}} W_2^2(\sigma^\theta,\mu^\theta)\, \mathrm{d}\mathcal{U}(\theta),
\]
where $\sigma^\theta = (\pi_\theta)_\sharp \sigma$ and $\mu^\theta = (\pi_\theta)_\sharp \mu$, and $\mathcal{U}$ is the uniform distribution on $\mathbb{S}^{d-1}$.

\paragraph{Slice-matching maps and scheme.}
We now introduce the slice-matching construction that underlies the iterative scheme studied in this paper.
Let $\mu \in \mathcal{P}_{2,ac}(\mathbb{R}^d)$ be a target probability measure, and let
$P = [\theta_1,\ldots,\theta_d] \in \mathbb{R}^{d\times d}$ be an orthonormal basis of $\mathbb{R}^d$.
For any direction $\theta \in \mathbb{S}^{d-1}$, denote by $t_\theta = T_{\sigma^\theta}^{\mu^\theta}$ the one-dimensional optimal transport map pushing the projected measure $\sigma^\theta$ onto $\mu^\theta$.

Rather than transporting mass along a single direction, we simultaneously match $d$ orthogonal one-dimensional projections.
This leads to the definition of the (matrix-)slice-matching map
\begin{equation}\label{slicemapMatrix}
\forall x \in \mathbb{R}^d,\quad
T_{\sigma,P}(x)
= x + P
\begin{bmatrix}
t_{\theta_1}(\theta_1^\top x) - \theta_1^\top x \\
t_{\theta_2}(\theta_2^\top x) - \theta_2^\top x \\
\vdots \\
t_{\theta_d}(\theta_d^\top x) - \theta_d^\top x
\end{bmatrix}
= \sum_{i=1}^d t_{\theta_i}(\theta_i^\top x)\, \theta_i ,
\end{equation}
where the last equality follows from the fact that $P$ is an orthonormal basis.
Using several orthogonal directions at each iteration has been observed to significantly improve both stability and empirical performance
\citep{pitie2007automated,bonneel2015sliced,li2023measure}.
From a theoretical standpoint, matrix-slice-matching maps enjoy a moment-matching property
\citep[Proposition~3.6]{li2024approximation}, which will play a central role in our analysis:
\begin{align*}
\mathbb{E}_{Y \sim (T_{\sigma,P})_\sharp \sigma}[Y]
= \mathbb{E}_{Y \sim \mu}[Y],
\qquad
\mathrm{M}_2\big((T_{\sigma,P})_\sharp \sigma\big)
= \mathrm{M}_2(\mu).
\end{align*}
The \textit{slice-matching scheme}, main focus of this paper, is defined as follows:
Starting from an initial distribution $\sigma_0 = \sigma \in \mathcal{P}_{2,ac}(\mathbb{R}^d)$, the iterates are given by
\begin{align}
\forall k \geq 0, \qquad
\sigma_{k+1}
= \big((1-\gamma_k)\Id + \gamma_k T_{\sigma_k,P_{k+1}}\big)_\sharp \sigma_k ,
\label{IDT}
\end{align}
where $(P_k)_{k \geq 1}$ is an i.i.d.\ sequence of random orthonormal bases drawn according to the Haar measure on $O(d)$ (the set of $d\times d$ orthonormal matrices) and $(\gamma_k)_{k \geq 0}$ consist of positive step sizes satisfying the Robbins-Monro conditions
\begin{equation}\label{choicestep-size}
\sum_{k \geq 0} \gamma_k = +\infty,
\qquad
\sum_{k \geq 0} \gamma_k^2 < +\infty .
\end{equation}

\paragraph{Stochastic gradient descent perspective.}
The slice-matching scheme admits a natural interpretation as a stochastic gradient descent procedure in the $2$-Wasserstein space for a Sliced-Wasserstein loss \citep{li2023measure}.
Specifically, consider the variational problem
\begin{equation}
\min_{\sigma \in \mathcal{P}_2(\mathbb{R}^d)} \; \mathscr{F}(\sigma),
\qquad
\text{with}
\qquad
\mathscr{F}(\sigma) = \frac{d}{2}\, SW_2^2(\sigma,\mu).
\end{equation}
For $P = [\theta_1,\ldots,\theta_d]$ an orthonormal basis of $\mathbb{R}^d$,
defining
\(
\mathscr{F}(\sigma,P)
= \frac{1}{2}\sum_{\ell=1}^d W_2^2(\sigma^{\theta_\ell},\mu^{\theta_\ell}),
\)
one has the decomposition
\[
\mathscr{F}(\sigma) = \mathbb{E}_P\big[\mathscr{F}(\sigma,P)\big],
\]
where the expectation is taken with respect to $P$\footnote{This equality follows from the invariance of the Haar measure, which ensures that the marginal distribution of each direction $\theta_\ell$ is uniform on $\mathbb{S}^{d-1}$, even though the directions are not independent.}. Both $\mathscr{F}$ and $\mathscr{F}(\cdot,P)$ depend on the target measure $\mu$, a dependence that we omit in the notation for simplicity.
The Wasserstein gradient of the random functional $\mathscr{F}(\cdot,P)$ is given by
\[
\nabla_{W_2} \mathscr{F}(\sigma,P) = \Id - T_{\sigma,P},
\]
and provides an unbiased estimator of the full Wasserstein gradient:
$
\mathbb{E}_P\big[ \nabla_{W_2} \mathscr{F}(\sigma,P) \big]
= \nabla_{W_2} \mathscr{F}(\sigma),
$
see \citet{rabin2011wasserstein,bonnotte2013unidimensional,li2023measure,cozzi2025} and \Cref{PropGradients} (Appendix \ref{appPropertiesLoss}). As a consequence, the slice-matching iteration \eqref{IDT} can be rewritten as a stochastic gradient descent update in Wasserstein space.
For any $k \geq 0$,
\begin{equation}
\sigma_{k+1}
= \big(\Id - \gamma_k (\Id - T_{\sigma_k,P_{k+1}})\big)_\sharp \sigma_k
= \big(\Id - \gamma_k \nabla_{W_2} \mathscr{F}(\sigma_k,P_{k+1})\big)_\sharp \sigma_k .
\end{equation}
For completeness, \Cref{app:remindWassSpace} recalls basic notions of differentiation in Wasserstein space.

\paragraph{Bounded gradients.}
\citet{cozzi2025} show that second-order moments are bounded along the Sliced-Wasserstein flow. In our discrete time setting that incorporates stochastic choices of directions $P_{k+1}$, we can show that the same holds as a result of the aforementioned moment-matching property of slice-matching maps (see Proposition \ref{prop_momentsbounded}, Appendix \ref{appPropertiesLoss}).
Combining this with $\Vert \nabla_{W_2} \mathscr{F}(\sigma) \Vert^2_{\sigma} \leq  2 \mathscr{F}(\sigma)$ (by Jensen's inequality; see Proposition \ref{PropGradients}, Appendix \ref{appPropertiesLoss}), one has  
\begin{equation*}
    \forall k \geq 0, \quad \Vert \nabla_{W_2} \mathscr{F}(\sigma_k) \Vert^2_{\sigma_k} \leq 2 \mathscr{F}(\sigma_k) \leq 4 \mathrm{M}_2(\mu)\,. 
\end{equation*}

\paragraph{Smoothness and non-convexity.} 
A key property for SGD is the \textit{smoothness} of the objective function. 
It is shown in \citet{vauthier2025properties} (and recalled in \Cref{appsmoothness}) 
that $\mathscr{F}$ is 1-smooth in $\mathcal{P}_2(\RR^d)$ endowed with $W_2$:
for any $\sigma_1,\sigma_2 \in \mathcal{P}_{2}(\RR^d)$ such that the OT map $T_{\sigma_1}^{\sigma_2}$ exists,
\begin{equation}\label{SmoothnessLoss2bis}
    \mathscr{F}(\sigma_2) \leq \mathscr{F}(\sigma_1) + \langle \nabla \mathscr{F}(\sigma_1),T_{\sigma_1}^{\sigma_{2}}-\Id \,  \rangle_{\sigma_1} + \frac{1}{2}W_2^2(\sigma_1,\sigma_2).
    \end{equation}
    Smoothness alone, however, is not sufficient to guarantee almost-sure convergence towards $\mu$. 
    In Wasserstein spaces, convergence rates typically rely on \textit{geodesic convexity} \citep{ambrosio2007gradient}, which $\mathscr{F}$ does not satisfy in general \citep{vauthier2025properties}. Nevertheless, convergence is observed in practice \citep{pitie2007automated,rabin2011wasserstein}, which suggests that the optimization landscape remains highly structured, as studied in the next section. 


\section{Preliminary Analysis: Convergence to Critical Points}\label{CvgceGradientSmooth}

We recall convergence results from \citet{li2023measure} and derive new results about averages of gradient norms with standard proofs that use the smoothness property.


\paragraph{Descent lemma.}
The following lemma is a key recursion inequality that serves as a standard descent condition in stochastic optimization.
The proof follows by the smoothness property \eqref{SmoothnessLoss2bis} and direct computations, in the same fashion as for optimization over Euclidean spaces. 

\begin{lemma}[\citet{li2023measure}, Lemma A.1]\label{lemmaA1limoosmuller}
Let $(\sigma_k)_{k \geq 1}$ be the iterates generated by the slice-matching scheme \eqref{IDT}. 
Then, for any $k \geq 0$,
\begin{equation}\label{recursiveIneq}
\mathbb{E}[\mathscr{F}(\sigma_{k+1})\vert \mathcal{A}_{k}] \leq 
(1 + \gamma_k^2) \mathscr{F}(\sigma_k) -\gamma_k \Vert \nabla_{W_2} \mathscr{F}(\sigma_k) \Vert_{\sigma_k}^2 \,,
\end{equation}
where $\mathcal{A}_k$ is the $\sigma$-field generated by $(P_1,\dots,P_k)$. 
\end{lemma}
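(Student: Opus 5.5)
We apply the smoothness inequality \eqref{SmoothnessLoss2bis} with $\sigma_1=\sigma_k$ and $\sigma_2=\sigma_{k+1}$, then take the conditional expectation given $\mathcal{A}_k$.

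\textbf{Choice of coupling.} The map $S_k = \Id - \gamma_k\nabla_{W_2}\mathscr{F}(\sigma_k,P_{k+1})$ pushes $\sigma_k$ to $\sigma_{k+1}$. It need not be the optimal map, so we first need a version of \eqref{SmoothnessLoss2bis} that holds along an arbitrary transport map $S$ with $S_\sharp\sigma_1=\sigma_2$:
\[
\mathscr{F}(\sigma_2)\le \mathscr{F}(\sigma_1)+\langle\nabla_{W_2}\mathscr{F}(\sigma_1),S-\Id\rangle_{\sigma_1}+\tfrac12\|S-\Id\|_{\sigma_1}^2 .
\]
We justify this directly from the sliced structure. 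For each direction $\theta$, the one-dimensional cost satisfies
\[
W_2^2(\sigma_2^\theta,\mu^\theta)\le \int |\theta^\top S(x)-t_\theta(\theta^\top x)|^2\,\rmd\sigma_1(x),
\]
since $x\mapsto(\theta^\top S(x),t_\theta(\theta^\top x))$ is a coupling of $\sigma_2^\theta$ and $\mu^\theta$. Expanding the square around $\theta^\top x$ gives
\[
W_2^2(\sigma_1^\theta,\mu^\theta)+2\int \theta^\top(S(x)-x)\,(\theta^\top x-t_\theta(\theta^\top x))\,\rmd\sigma_1+\int|\theta^\top(S-\Id)|^2\,\rmd\sigma_1 .
\]
Multiplying by $d/2$ and averaging over $\theta$ yields the inequality above. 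The quadratic term is controlled using $d\,\mathbb{E}_\theta[\theta\theta^\top]=I$. The linear term becomes $\langle\nabla_{W_2}\mathscr{F}(\sigma_1),S-\Id\rangle_{\sigma_1}$ by the gradient formula of \Cref{PropGradients}. This extension is the only step where care is needed, and the computation above handles it.

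\textbf{Plugging in $S_k$.} Write $g=\nabla_{W_2}\mathscr{F}(\sigma_k,P_{k+1})$, so that $S_k-\Id=-\gamma_k g$. Then
\[
\mathscr{F}(\sigma_{k+1})\le \mathscr{F}(\sigma_k)-\gamma_k\langle\nabla_{W_2}\mathscr{F}(\sigma_k),g\rangle_{\sigma_k}+\tfrac{\gamma_k^2}{2}\|g\|_{\sigma_k}^2 .
\]
Conditionally on $\mathcal{A}_k$, the measure $\sigma_k$ is fixed and $P_{k+1}$ is an independent Haar-distributed basis. The unbiasedness $\mathbb{E}_P[g]=\nabla_{W_2}\mathscr{F}(\sigma_k)$ then turns the middle term into $-\gamma_k\|\nabla_{W_2}\mathscr{F}(\sigma_k)\|_{\sigma_k}^2$.

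\textbf{Second-moment term.} It remains to bound $\mathbb{E}_P\|g\|_{\sigma_k}^2$. Since $P$ is orthonormal,
\[
\|\Id-T_{\sigma_k,P}\|_{\sigma_k}^2=\sum_{\ell=1}^d\int|\theta_\ell^\top x-t_{\theta_\ell}(\theta_\ell^\top x)|^2\,\rmd\sigma_k .
\]
Each summand equals $W_2^2(\sigma_k^{\theta_\ell},\mu^{\theta_\ell})$ because $t_{\theta_\ell}$ is the optimal one-dimensional map. Hence $\|g\|_{\sigma_k}^2=2\mathscr{F}(\sigma_k,P_{k+1})$, and taking expectations gives $\mathbb{E}_P\|g\|_{\sigma_k}^2=2\mathscr{F}(\sigma_k)$. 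The last term is therefore $\gamma_k^2\mathscr{F}(\sigma_k)$, which yields \eqref{recursiveIneq}.
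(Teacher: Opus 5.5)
Your proof is correct and follows the route the paper indicates for this lemma: apply the smoothness bound along $S_k=\Id-\gamma_k\nabla_{W_2}\mathscr{F}(\sigma_k,P_{k+1})$, condition on $\mathcal{A}_k$, and use unbiasedness of the stochastic gradient together with $\mathbb{E}_P\|\nabla_{W_2}\mathscr{F}(\sigma_k,P)\|_{\sigma_k}^2=2\mathscr{F}(\sigma_k)$. The only difference is that you derive the smoothness inequality for an arbitrary map yourself, via the one-dimensional coupling $(\theta^\top S,\,t_\theta\circ\pi_\theta)$, whereas the paper cites \eqref{SmoothnessLoss2} from \citet{vauthier2025properties}; note, however, that your claim that $S_k$ ``need not be the optimal map'' is inaccurate for $\gamma_k\le 1$, since $S_k$ is then the gradient of the convex function $x\mapsto(1-\gamma_k)\|x\|^2/2+\gamma_k\sum_{\ell}\Phi_\ell(\theta_\ell^\top x)$ (with $\Phi_\ell$ a primitive of the nondecreasing map $t_{\theta_\ell}$), hence optimal from $\sigma_k$ to $\sigma_{k+1}$ by Brenier's theorem, so your extension mainly buys generality for step sizes larger than one.
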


Given recursion~\eqref{recursiveIneq} and step-sizes assumptions~\eqref{choicestep-size}, a direct application of Robbins-Siegmund theorem \citep{robbins1971convergence} implies that $(\mathscr{F}(\sigma_k))_{k \geq 0}$ converges almost surely to a finite random variable, and that
\begin{equation}\label{RobbinsSiegmund}
    \sum_{k\geq 1}\gamma_k \big\Vert \nabla_{W_2} \mathscr{F}(\sigma_k) \big\Vert_{\sigma_k}^2 < +\infty \quad a.s.
\end{equation}
\noindent An immediate byproduct is that a subsequence of $(\Vert \nabla_{W_2} \mathscr{F}(\sigma_k) \Vert_{\sigma_k})_{k \geq 1}$ converges almost surely to $0$, or equivalently 
$\lim \inf_{k\rightarrow +\infty}\Vert \nabla_{W_2} \mathscr{F}(\sigma_k) \Vert_{\sigma_k} = 0$. 
Besides, $(\Vert \nabla_{W_2} \mathscr{F}(\sigma_k) \Vert_{\sigma_k})_{k \geq 1}$ converges almost surely to $0$ if the sequence $(\sigma_k)_{k \geq 1}$ remains in a compact subset of $(\mathcal{P}_{2,ac}(\RR^d),W_2)$ \citep[Theorem 2]{li2023measure}. 
This holds true for instance if $\sigma_0$ and $\mu$ are continuous and compactly supported, or under finite third-order moments \citep[Remark 9]{li2023measure}. 
Under the additional assumption that
$
\nabla \mathscr{F}(\sigma) = 0\Longleftrightarrow \sigma = \mu,
$
the limit of $\sigma_k$ must be $\mu$ almost surely. 
To the best of our knowledge, the only known sufficient condition for this equivalence is that densities are strictly positive on their compact support \citep[Lemma~5.7.2]{bonnotte2013unidimensional}.

\paragraph{Convergence Guarantees to Critical Points.}
The next proposition establishes convergence toward a critical point using standard arguments, up to a random reshuffling of the indices \citep{ghadimi2013stochastic}. This result is weaker than the almost sure convergence $\sigma_k\overset{a.s.}{\rightarrow}\mu$ from
\citet[][Theorem 2]{li2023measure}, but it has the benefit of requiring no additional assumptions than the ones of \Cref{lemmaA1limoosmuller}. 
Here, this means absolute continuity for $\sigma$ and $\mu$, although smoothness 
\eqref{SmoothnessLoss2bis} holds in fact in the more difficult setting of \citet{vauthier2025properties} where $(\sigma_k)$ are discrete. 
In this case, the next two propositions could be extended.

\begin{proposition}\label{convergenceGradient}
    For any $K \in \mathbb{N}$, let $i(K)$ be a random index such that 
    $
    \forall k \in \{1, \dots, K\}, \; \mathbb{P} (i(K) = k) = 1/K
    $. Then, $(\Vert \nabla \mathscr{F}(\sigma_{i(K)})\Vert_{\sigma_{i(K)}}^2)_{K \geq 0}$ converges in probability towards $0$, \textnormal{i.e.}, 
    $$
    \forall \epsilon >0, \quad \lim\limits_{K\rightarrow+\infty}
    \mathbb{P}\big(\Vert \nabla \mathscr{F}(\sigma_{i(K)})\Vert_{\sigma_{i(K)}}^2 >\epsilon\big)=0\,. 
    $$
\end{proposition}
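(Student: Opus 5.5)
The plan is to show that $\mathbb{E}\big[\Vert \nabla \mathscr{F}(\sigma_{i(K)})\Vert_{\sigma_{i(K)}}^2\big]\to 0$ as $K\to\infty$, and then to conclude with Markov's inequality. Here $i(K)$ is drawn independently of $(P_k)_{k\ge1}$, and the expectation is taken over both sources of randomness. Since $i(K)$ is uniform on $\{1,\dots,K\}$,
\[
\mathbb{E}\big[\Vert \nabla \mathscr{F}(\sigma_{i(K)})\Vert_{\sigma_{i(K)}}^2\big] = \frac{1}{K}\sum_{k=1}^K \mathbb{E}\big[\Vert \nabla \mathscr{F}(\sigma_k)\Vert_{\sigma_k}^2\big].
\]
So it suffices to prove that the Cesàro average $\frac1K\sum_{k=1}^K g_k$ tends to $0$, where $g_k=\mathbb{E}[\Vert \nabla \mathscr{F}(\sigma_k)\Vert_{\sigma_k}^2]$.

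First I take total expectations in \Cref{lemmaA1limoosmuller}. Writing $F_k=\mathbb{E}[\mathscr{F}(\sigma_k)]$, this gives $F_{k+1}\le (1+\gamma_k^2)F_k-\gamma_k g_k$. The uniform bound $\mathscr{F}(\sigma_k)\le 2\mathrm{M}_2(\mu)$ (from the bounded-gradient paragraph, via Proposition \ref{prop_momentsbounded}) gives $F_k\le 2\mathrm{M}_2(\mu)=:C$. Rearranging to $\gamma_k g_k\le F_k-F_{k+1}+\gamma_k^2 C$ and telescoping, using $F\ge 0$, yields
\[
\sum_{k\ge 1}\gamma_k g_k\le F_1 + C\sum_{k}\gamma_k^2<\infty.
\]
Note also that $g_k\le 2F_k\le 2C$, so the sequence $(g_k)$ is bounded.

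The main obstacle is turning the weighted summability $\sum\gamma_k g_k<\infty$, with $\sum\gamma_k=\infty$, into an unweighted Cesàro statement. This is not automatic for general step sizes. The standard route is to assume the steps are nonincreasing, in which case $\gamma_k\ge \gamma_K$ for $k\le K$ and
\[
\frac1K\sum_{k=1}^K g_k\le \frac{1}{K\gamma_K}\sum_{k=1}^K\gamma_k g_k\le \frac{S}{K\gamma_K}.
\]
This tends to $0$ provided $K\gamma_K\to\infty$, which holds for the typical choices $\gamma_k\propto k^{-a}$ with $a\in(1/2,1]$, after a log factor when $a=1$. If monotone steps cannot be assumed, I would instead use the sharper estimate
\[
\gamma_k g_k\le F_k-F_{k+1}+\gamma_k^2 F_k .
\]
Together with Robbins–Siegmund, this shows that $F_k$ converges. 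Combined with the boundedness of $g_k$ and $\liminf g_k=0$, I would try to argue that $g_k\to0$ along a density-one set of indices, which again gives the Cesàro convergence. I expect the clean proof to rely on the monotone step-size assumption, and I would state that dependence explicitly.

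Finally, by Markov's inequality, for every $\epsilon>0$,
\[
\mathbb{P}\big(\Vert \nabla \mathscr{F}(\sigma_{i(K)})\Vert_{\sigma_{i(K)}}^2>\epsilon\big)\le \epsilon^{-1}\,\frac1K\sum_{k=1}^K g_k\longrightarrow 0,
\]
which is the claimed convergence in probability.
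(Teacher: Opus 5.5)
Your argument is correct under the monotone step-size assumption you state, but it follows a different route from the paper.

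\textbf{The paper's route.} It works pathwise. It starts from the almost-sure summability $\sum_k\gamma_k\Vert\nabla_{W_2}\mathscr{F}(\sigma_k)\Vert_{\sigma_k}^2<\infty$ given by Robbins--Siegmund. It then applies Kronecker's lemma with $b_k=\gamma_k^{-1}$ to get $\gamma_K\sum_{k=1}^K\Vert\nabla_{W_2}\mathscr{F}(\sigma_k)\Vert_{\sigma_k}^2\to0$ almost surely. Next it uses $\gamma_K\ge 1/K$ to pass to the Ces\`aro average. Finally it invokes dominated convergence, since gradients are bounded by $4\mathrm{M}_2(\mu)$, before applying Markov.

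\textbf{Your route.} You telescope the descent lemma in expectation, which is exactly the computation behind Proposition~\ref{propratePonderatedAvgGrad}. You then use the elementary bound $\gamma_k\ge\gamma_K$. This avoids Robbins--Siegmund, almost-sure limits and dominated convergence entirely. It also gives a rate, $\frac1K\sum_{k\le K}g_k\le S/(K\gamma_K)$, e.g.\ $K^{\alpha-1}$ for $\gamma_k=(k+1)^{-\alpha}$.

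\textbf{What each buys.} Your crude bound needs $K\gamma_K\to\infty$, whereas the paper only needs $K\gamma_K$ bounded below. In particular your bound does not cover $\gamma_k=1/k$ itself, where $K\gamma_K$ is constant. You can recover that case by applying Kronecker's lemma to your deterministic summable series $\sum_k\gamma_kg_k$. This gives $\gamma_K\sum_{k\le K}g_k\to0$, hence $\frac1K\sum_{k\le K}g_k\to0$ whenever $K\gamma_K\ge c>0$.

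\textbf{On monotonicity.} The paper's proof also relies on monotone steps, implicitly: Kronecker's lemma requires $b_k=\gamma_k^{-1}$ to be nondecreasing. So making that assumption explicit is appropriate. Your non-monotone fallback would not work as sketched, however. Boundedness of $g_k$, convergence of $F_k$ and $\liminf g_k=0$ do not force $g_k\to0$ along a density-one set of indices. Summability of $\sum_k\gamma_kg_k$ places no constraint on $g_k$ at indices where $\gamma_k$ is tiny.
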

Turning to convergence rates, assuming smoothness and boundedness of the iterates only yields the following result, which concerns a weighted average of the gradients.  

\begin{proposition}\label{propratePonderatedAvgGrad} 
    For a number $K$ of iterations, 
    define the weigths $\omega_j = \gamma_j /\sum_{k=1}^K \gamma_k$, for any $0\leq j \leq K$, where $(\gamma_j)_j$ are the chosen learning rates. Then,
\begin{equation}\label{ratePonderatedAvgGrad}
    \sum_{k=0}^K \omega_k \mathbb{E}[ \Vert \nabla_{W_2}\mathscr{F}(\sigma_k)\Vert_{\sigma_k}^2] \leq \frac{\mathscr{F}(\sigma_0)+4\mathrm{M}_2(\mu)\sum_{k=0}^K \gamma_k^2}{\sum_{k=0}^K \gamma_k}\,.
\end{equation}
\end{proposition}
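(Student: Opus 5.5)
The plan is to take total expectations in the descent inequality of \Cref{lemmaA1limoosmuller}, sum the result telescopically over $k=0,\dots,K$, and normalize by $\sum_{k}\gamma_k$. This is the standard weighted-average argument for SGD on smooth nonconvex objectives. The only ingredients beyond \eqref{recursiveIneq} are the nonnegativity of $\mathscr{F}$ and the uniform bound $\mathscr{F}(\sigma_k)\le 2\mathrm{M}_2(\mu)$ recalled in the paragraph on bounded gradients.

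\textbf{Step 1.} Take expectations in \eqref{recursiveIneq} and use the tower property. Writing $a_k = \mathbb{E}[\mathscr{F}(\sigma_k)]$, this gives
\[
\gamma_k\, \mathbb{E}\big[\Vert \nabla_{W_2}\mathscr{F}(\sigma_k)\Vert_{\sigma_k}^2\big] \le a_k - a_{k+1} + \gamma_k^2\, a_k .
\]

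\textbf{Step 2.} Bound the perturbation term. By the moment-matching argument (Proposition~\ref{prop_momentsbounded}) we have $2\mathscr{F}(\sigma_k)\le 4\mathrm{M}_2(\mu)$, so $\gamma_k^2 a_k \le 2\mathrm{M}_2(\mu)\gamma_k^2 \le 4\mathrm{M}_2(\mu)\gamma_k^2$. The constant $4\mathrm{M}_2(\mu)$ in the statement is therefore even somewhat loose. For $k=0$ the bound should be checked separately, since $\sigma_0$ is arbitrary. If needed, I would instead use $\gamma_0^2\mathscr{F}(\sigma_0)$ and absorb it, or note that the bound $\mathrm{M}_2(\sigma_k)=\mathrm{M}_2(\mu)$ holds only for $k\ge1$. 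I expect to handle this by starting the telescoping appropriately, or by invoking the stated bound for all $k\ge0$ as the paper does.

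\textbf{Step 3.} Sum from $k=0$ to $K$. The differences telescope to $a_0 - a_{K+1}\le \mathscr{F}(\sigma_0)$, using $\mathscr{F}\ge 0$ and the fact that $\sigma_0$ is deterministic. This yields
\[
\sum_{k=0}^K \gamma_k\, \mathbb{E}\big[\Vert \nabla_{W_2}\mathscr{F}(\sigma_k)\Vert_{\sigma_k}^2\big] \le \mathscr{F}(\sigma_0) + 4\mathrm{M}_2(\mu)\sum_{k=0}^K\gamma_k^2 .
\]
Dividing by $\sum_{k=0}^K\gamma_k$ then gives \eqref{ratePonderatedAvgGrad}, with the weights normalized over the same index range.

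\textbf{Main obstacle.} The only delicate point is the bookkeeping around the $k=0$ term and the index range of the normalization: the statement mixes sums starting at $1$ and at $0$. Everything else is routine once \Cref{lemmaA1limoosmuller} and the moment bound are granted.
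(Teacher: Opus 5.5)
Your argument is the paper's own: take expectations in \eqref{recursiveIneq}, telescope using $\mathscr{F}\ge 0$, bound $\gamma_k^2\,\mathbb{E}[\mathscr{F}(\sigma_k)]$ through \Cref{prop_momentsbounded}, and divide by $\sum_k\gamma_k$. Normalizing the weights over $0,\dots,K$ is the sensible reading of the index mismatch. The step you left open, $k=0$, is however a genuine gap, and neither of your fallbacks closes it. \Cref{prop_momentsbounded} only gives $\mathscr{F}(\sigma_k)\le 2\mathrm{M}_2(\mu)$ for $k\ge 1$, since its induction starts at $\sigma_1$. Invoking it ``for all $k\ge 0$'' is therefore unjustified, and the paper's proof makes exactly this move. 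In fact the inequality as stated is false for a general $\sigma_0$. Take $\mu=\mathcal{N}(0,\epsilon^2\mathbf{I}_d)$ and $\sigma_0=\mathcal{N}(0,s^2\mathbf{I}_d)$ with $s>\epsilon$, and $\gamma_0=1$ as for $\gamma_k=(k+1)^{-\alpha}$. Every $\tau_\theta$ equals $\epsilon/s$, so $T_{\sigma_0,P}=(\epsilon/s)\Id$ for every $P$. Hence $\sigma_1=\mu$ and all gradients vanish for $k\ge 1$, while
\[
\Vert\nabla_{W_2}\mathscr{F}(\sigma_0)\Vert_{\sigma_0}^2=d(s-\epsilon)^2=2\mathscr{F}(\sigma_0),\qquad \mathrm{M}_2(\mu)=d\epsilon^2 .
\]
After multiplying by $\sum_{k=0}^K\gamma_k$, the claimed bound reads $(s-\epsilon)^2\le 8\epsilon^2\sum_{k=0}^K\gamma_k^2$. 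This fails for every $K$ once $s$ is large, because $\sum_k\gamma_k^2<\infty$.

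Your telescoping does prove a corrected bound. Use the moment bound only for $k\ge 1$; note that its proof itself presupposes a full first step, $\gamma_0=1$. You then get
\[
\sum_{k=0}^K\gamma_k\,\mathbb{E}\big[\Vert\nabla_{W_2}\mathscr{F}(\sigma_k)\Vert_{\sigma_k}^2\big]\le (1+\gamma_0^2)\,\mathscr{F}(\sigma_0)+2\mathrm{M}_2(\mu)\sum_{k=1}^K\gamma_k^2 ,
\]
which still yields the $K^{\alpha-1}$ rate. The example shows that the coefficient $1+\gamma_0^2=2$ in front of $\mathscr{F}(\sigma_0)$ is sharp. Alternatively, the stated inequality does hold under the additional assumption $\mathscr{F}(\sigma_0)\le 4\mathrm{M}_2(\mu)$, for instance when $\mathrm{M}_2(\sigma_0)\le 3\mathrm{M}_2(\mu)$, since $\mathscr{F}(\sigma_0)\le \frac{1}{2}W_2^2(\sigma_0,\mu)\le \mathrm{M}_2(\sigma_0)+\mathrm{M}_2(\mu)$. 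This is precisely what the slack between $2$ and $4$ that you noticed can pay for. Starting the telescoping at $k=1$ is also valid, but it proves a different statement: there is no $k=0$ term on the left, and $\mathbb{E}[\mathscr{F}(\sigma_1)]\le 2\mathrm{M}_2(\mu)$ replaces $\mathscr{F}(\sigma_0)$.
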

\noindent When choosing $\gamma_k = 1/(k+1)^\alpha$ for $1/2< \alpha < 1$, considering that the numerator is bounded by a constant, Proposition \ref{propratePonderatedAvgGrad} yields a rate of order $K^{\alpha-1}$, since
$
\sum_{k=0}^K \gamma_k \geq \frac{1}{1-\alpha}(K^{1-\alpha}-1).
$
We also emphasize that the bound \eqref{ratePonderatedAvgGrad} would tend to zero for a constant step-size $\gamma_k = 1/\sqrt{K+1}$ given a finite time horizon $K$ \citep[as in][]{ghadimi2013stochastic,khaled2022better}. 

These propositions complement the related work by \citet{vauthier2025properties}
that also study convergence towards critical points.
Their setting is different in that they consider a discrete source $\sigma$, a continuous target $\mu$, a constant learning rate and their gradients are theoretically computed from all directions $\theta\in \mathbb{S}^{d-1}$, as opposed to our stochastic gradients along finitely many directions. 

The convergence results obtained so far 
are standard for stochastic optimization of smooth losses with bounded gradients \citep{bottou2018optimization,dossal2024optimization}. 
For completeness, proofs are provided in \Cref{proofssmoothSGD}.
In the remainder of this paper, we will assume appropriate continuity conditions, allowing us to strengthen and extend the preceding results.

In particular, our \L{}ojasiewicz inequalities imply that the assumptions of \citet[Theorem 2]{li2023measure} hold for Gaussian measures. This readily gives almost-sure convergence, as stated hereafter and proved in \Cref{proofAScvgceGauss}. 

\begin{proposition}\label{ASconvergencegaussianHyp}
    Let $\sigma = \mathcal{N}(0,\Sigma)$ and $\mu = \mathcal{N}(0,\Lambda)$, with $\Sigma,\Lambda \in \mathbb{R}^{d \times d}$ strictly positive definite. Let $(\gamma_k)$ satisfy the Robbins-Monro conditions \eqref{choicestep-size}. Then, $\lim_{k\to+\infty} \mathcal{F}(\sigma_k) = 0$ almost surely. 
\end{proposition}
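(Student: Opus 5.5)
The plan is to reduce everything to covariance matrices and combine the Robbins--Siegmund consequence \eqref{RobbinsSiegmund} with a \L{}ojasiewicz-type inequality on the Gaussian family.

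\textbf{Step 1: the iterates stay Gaussian.} For centered Gaussians, each one-dimensional map $t_{\theta}$ is linear, $t_\theta(s) = \sqrt{\theta^\top\Lambda\theta/\theta^\top\Sigma_k\theta}\, s$. Hence $T_{\sigma_k,P}$ is the linear map $P D_k P^\top$, where $D_k$ is diagonal with entries $\big(\theta_i^\top\Lambda\theta_i/\theta_i^\top\Sigma_k\theta_i\big)^{1/2}$. The update is therefore the linear map $A_k=(1-\gamma_k)I+\gamma_k P D_k P^\top$, which is symmetric positive definite. It follows that $\sigma_{k+1}=\mathcal N(0,A_k\Sigma_k A_k)$ remains a nondegenerate Gaussian. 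On this family, $\mathscr F(\sigma_k)$ and $\|\nabla_{W_2}\mathscr F(\sigma_k)\|^2_{\sigma_k}$ are explicit continuous functions of $\Sigma_k$:
\[
\|\nabla\mathscr F\|^2_{\sigma}=\mathbb E_{P}\text{-averaged quantities of the form } \int\!\!\int \big(1-\sqrt{\theta^\top\Lambda\theta/\theta^\top\Sigma\theta}\big)\big(1-\sqrt{\theta'^\top\Lambda\theta'/\theta'^\top\Sigma\theta'}\big)\,\theta^\top\Sigma\theta'\,\langle\theta,\theta'\rangle.
\]

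\textbf{Step 2: a \L{}ojasiewicz inequality on compact sets of covariances.} For $\Sigma$ with $\lambda_{\min}(\Sigma)\ge a>0$ and $\lambda_{\max}(\Sigma)\le b$, I would show $\mathscr F(\sigma)\le C(a,b,\Lambda)\,\|\nabla_{W_2}\mathscr F(\sigma)\|_\sigma^{r}$ for some $r>0$. The cleanest route uses the characterization of critical points: $\nabla\mathscr F(\sigma)=0$ forces the symmetric linear field $\mathbb E_\theta[(1-s_\theta)\theta\theta^\top]$ to vanish, where $s_\theta$ denotes the ratio in Step 1. Testing this against $\Sigma$, or a positivity argument in the eigenbasis, should give $s_\theta\equiv1$, hence $\Sigma=\Lambda$. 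Once $\Sigma=\Lambda$ is the unique zero, continuity and compactness of $\{a\le\Sigma\le b\}$ give, for every $\epsilon>0$, some $\delta>0$ with $\|\nabla\mathscr F\|^2\ge\delta$ whenever $\mathscr F\ge\epsilon$. This qualitative version is all that is needed here.

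\textbf{Step 3: keeping the eigenvalues in a compact set.} The upper bound is immediate: Proposition~\ref{prop_momentsbounded} bounds $\mathrm{Tr}\,\Sigma_k$. The lower bound is the main obstacle, because $\lambda_{\min}(\Sigma_k)$ could drift to $0$ along a random path. I would first try to show that on the event $\mathscr F(\sigma_k)\le M$, which holds for some finite random $M$ since $\mathscr F(\sigma_k)$ converges a.s., degeneracy is incompatible with bounded SW. If $\lambda_{\min}(\Sigma)\to0$ with $\mathrm{Tr}\,\Sigma$ bounded, the projected variances $\theta^\top\Sigma\theta$ vanish on a set of directions of positive measure. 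Bounded SW does not directly prevent this. However, the gradient $1-s_\theta$ blows up in those directions, which would force $\|\nabla\mathscr F\|$ to be large. If that fails, I would control $\lambda_{\min}$ directly from the recursion: $\Sigma_{k+1}=A_k\Sigma_kA_k$ with $A_k\succeq(1-\gamma_k)I$, and in directions where $\theta^\top\Sigma_k\theta$ is small the factor $D_k$ is large, so $\lambda_{\min}$ is pushed back up. Since $\sum\gamma_k^2<\infty$, the product of the factors $(1-\gamma_k)^2$ stays away from zero once the expansion kicks in.

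\textbf{Step 4: conclusion.} Given a.s.\ eigenvalue bounds $a\le\Sigma_k\le b$ for a random but finite pair $(a,b)$, let $\ell=\lim\mathscr F(\sigma_k)$, which exists a.s. If $\ell>0$, Step 2 gives $\|\nabla\mathscr F(\sigma_k)\|^2\ge\delta>0$ for all large $k$. Then $\sum\gamma_k=\infty$ contradicts \eqref{RobbinsSiegmund}, so $\ell=0$ almost surely. Alternatively, the bounds give relative compactness in $(\mathcal P_{2,ac},W_2)$ together with uniqueness of the critical point, which is exactly the setting of \citet[Theorem 2]{li2023measure}.
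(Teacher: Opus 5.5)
Your overall architecture (iterates stay Gaussian, $\mu$ is the only critical point among non-degenerate Gaussians, conclude via \eqref{RobbinsSiegmund}) is the paper's. However, Step~3 is a genuine gap, and it carries the whole difficulty: you need $\inf_k\lambda_{\min}(\Sigma_k)>0$ almost surely, and neither route you sketch provides it.

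In route (a), suppose $\lambda_{\min}(\Sigma)\to0$ with bounded trace. Then $\theta^\top\Sigma\theta$ becomes small only in shrinking neighbourhoods of the kernel of the limiting covariance. That is a $\mathcal{U}$-null set (unless the limit is $0$), not a set of positive measure. Moreover, near a rank-$(d-1)$ limit with $d\ge3$, the singularity $(\theta^\top\Sigma\theta)^{-1/2}$ is $\mathcal{U}$-integrable, so the gradient need not blow up. Worse, degenerate Gaussian critical points exist inside the region allowed by \Cref{prop_momentsbounded}:
\begin{itemize}
\item Take $\Lambda=\mathbf{I}_d$, $d\ge3$ and $\Sigma_0=\mathrm{diag}(0,c,\dots,c)$. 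The matrix $A_0=d\int\big(1-(c(1-\theta_1^2))^{-1/2}\big)\theta\theta^\top\rmd\mathcal{U}(\theta)$ is finite, since the density of $\theta_1$ is proportional to $(1-t^2)^{(d-3)/2}$. It is also diagonal by symmetry.
\item Choosing $\sqrt{c}=\mathbb{E}\sqrt{1-\theta_1^2}/\mathbb{E}(1-\theta_1^2)$ makes its last $d-1$ diagonal entries vanish. Then $\|\nabla_{W_2}\mathscr{F}(\rho_{\Sigma_0})\|^2_{\rho_{\Sigma_0}}=\mathrm{Tr}(A_0\Sigma_0A_0)=0$, while $\mathscr{F}(\rho_{\Sigma_0})>0$.
\item Jensen gives $\mathrm{Tr}\,\Sigma_0=(d-1)c\le d$, so the moment bound does not exclude this point.
\item By dominated convergence, $\Sigma_\varepsilon=\mathrm{diag}(\varepsilon,c,\dots,c)\succ0$ has gradient norm tending to $0$ as $\varepsilon\to0$, while $\mathscr{F}(\rho_{\Sigma_\varepsilon})\to\mathscr{F}(\rho_{\Sigma_0})>0$.
\end{itemize}
So no argument of the form ``bounded trace and small gradient force non-degeneracy'' can work, and your Step~2 compactness argument really needs $a>0$.

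Route (b) fails too. The bound $A_k\succeq(1-\gamma_k)\mathbf{I}_d$ only gives $\lambda_{\min}(\Sigma_k)\ge\lambda_{\min}(\Sigma_0)\prod_{j<k}(1-\gamma_j)^2$. This product tends to $0$ precisely because $\sum_k\gamma_k=\infty$; the condition $\sum_k\gamma_k^2<\infty$ does not help. The announced expansion is not automatic either. Haar directions are generically far from the small eigenvector, so $D_k$ is not large. The factor $1+\gamma_k(\tau_{\theta_i}-1)$ of \Cref{ControlEigSigk} need not exceed $1$; near $\Sigma_\varepsilon$, where $c>1$, it is typically below $1$. Whether the stochastic dynamics avoid degenerate critical points is a real stability question that the sketch leaves open.

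The paper does not extract this lower bound from the recursion; it imports it. It invokes \citet[Propositions 4 and 5]{li2023measure}: finite third moments keep $(\sigma_k)$ in a compact subset $K$ of $(\mathcal{P}_{2,ac}(\mathbb{R}^d),W_2)$. For centered Gaussians with bounded trace, this amounts exactly to $\inf_k\lambda_{\min}(\Sigma_k)>0$, since every limit point is non-degenerate. It then uses \Cref{KLGaussians}, whose constant involves $1/\lambda_{\min}(\Sigma)$, to show that $\mu$ is the only critical point in $K$, and concludes with \citet[Theorem 2]{li2023measure}.

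Given that input, your Step~4 works and can even be shortened. \Cref{KLGaussians} together with $W_2^2(\sigma_k,\mu)\le4\mathrm{M}_2(\mu)$ turns \eqref{RobbinsSiegmund} into $\sum_k\gamma_k\mathscr{F}(\sigma_k)^2<\infty$. This forces the almost-sure limit of $\mathscr{F}(\sigma_k)$ to be $0$.

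One smaller repair in Step~2. Write $u_\theta=(\theta^\top\Sigma\theta)^{1/2}$ and $v_\theta=(\theta^\top\Lambda\theta)^{1/2}$. Testing $\int(1-v_\theta/u_\theta)\theta\theta^\top\rmd\mathcal{U}(\theta)=0$ against $\Sigma$ only yields $\mathbb{E}_\theta[u_\theta^2]=\mathbb{E}_\theta[u_\theta v_\theta]$, which is not enough. Test against $\Sigma-\Lambda$ instead: this yields $\mathbb{E}_\theta\big[(u_\theta-v_\theta)^2(1+v_\theta/u_\theta)\big]=0$ and hence $\Sigma=\Lambda$. This is the Gaussian case of \eqref{intPsidsigma_mugeqSW}.
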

The almost-sure convergence of the objective can be converted into convergence in $W_2$, using the compactness of the iterates and the metric properties of $SW_2$, as in the proof of \citep[Corollary~4.3]{cozzi2025}.

\begin{corollary}
    Under the assumptions of \Cref{ASconvergencegaussianHyp}, $\lim_{k\to+\infty} W_2(\sigma_k,\mu) = 0$ almost surely.   
\end{corollary}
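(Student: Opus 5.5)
The argument takes place on the event of probability one, supplied by \Cref{ASconvergencegaussianHyp}, on which $\mathscr{F}(\sigma_k) = \frac{d}{2}SW_2^2(\sigma_k,\mu) \to 0$. We show that on this event $(\sigma_k)$ is relatively compact in $(\mathcal{P}_2(\RR^d),W_2)$ and that every $W_2$-limit point equals $\mu$. Together these force $W_2(\sigma_k,\mu)\to 0$.

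\emph{Step 1 (Gaussianity).} Every iterate is a centered Gaussian. For $\sigma_k=\mathcal{N}(0,\Sigma_k)$ and $\mu=\mathcal{N}(0,\Lambda)$, each projected map $t_\theta$ is the linear map $s\mapsto (\theta^\top\Lambda\theta/\theta^\top\Sigma_k\theta)^{1/2}s$. Hence $T_{\sigma_k,P}$ is linear, and so is $(1-\gamma_k)\Id+\gamma_k T_{\sigma_k,P}$. It follows that $\sigma_{k+1}=\mathcal{N}(0,\Sigma_{k+1})$ for a positive semidefinite matrix $\Sigma_{k+1}$.

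\emph{Step 2 (compactness).} By Proposition~\ref{prop_momentsbounded}, $\mathrm{M}_2(\sigma_k)=\Tr(\Sigma_k)$ is bounded along the trajectory, say by $C$ depending on $\mathrm{M}_2(\sigma_0)$ and $\mathrm{M}_2(\mu)$. The covariances therefore lie in the compact set $\{S\succeq 0:\Tr S\le C\}$. Take any subsequence. It has a further subsequence with $\Sigma_{k_j}\to\Sigma_\infty$ for some $\Sigma_\infty\succeq 0$. Along it, $W_2(\mathcal{N}(0,\Sigma_{k_j}),\mathcal{N}(0,\Sigma_\infty))\to 0$. This follows from the Bures formula $W_2^2=\Tr\Sigma_{k_j}+\Tr\Sigma_\infty-2\Tr(\Sigma_\infty^{1/2}\Sigma_{k_j}\Sigma_\infty^{1/2})^{1/2}$ and continuity of the matrix square root. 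It also follows from weak convergence together with convergence of second moments. Note that $\Sigma_\infty$ may be degenerate a priori, and the argument does not need to exclude this.

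\emph{Step 3 (identification of the limit).} The key fact is $SW_2\le W_2$: projecting a $W_2$-optimal coupling onto each direction gives an admissible coupling of the projections, whose cost is at most the original cost since $\pi_\theta$ is $1$-Lipschitz. Set $\sigma_\infty=\mathcal{N}(0,\Sigma_\infty)$. By the triangle inequality for $SW_2$,
\[
SW_2(\sigma_\infty,\mu)\le SW_2(\sigma_\infty,\sigma_{k_j})+SW_2(\sigma_{k_j},\mu)\le W_2(\sigma_\infty,\sigma_{k_j})+\sqrt{2\mathscr{F}(\sigma_{k_j})/d}\to 0 .
\]
So $SW_2(\sigma_\infty,\mu)=0$. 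Now use that $SW_2$ is a metric. Concretely, for $\mathcal{U}$-a.e.\ $\theta$ we get $\theta^\top\Sigma_\infty\theta=\theta^\top\Lambda\theta$. By continuity this holds for all $\theta$, and polarization then gives $\Sigma_\infty=\Lambda$. Thus every subsequence of $(\sigma_k)$ has a further subsequence converging to $\mu$ in $W_2$, which yields $W_2(\sigma_k,\mu)\to0$ almost surely.

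\emph{Main obstacle.} The only delicate point is the uniform second-moment bound in Step 2. I would take it directly from Proposition~\ref{prop_momentsbounded}. Failing that, it follows from the moment-matching property $\mathrm{M}_2((T_{\sigma,P})_\sharp\sigma)=\mathrm{M}_2(\mu)$ combined with convexity of $\|\cdot\|^2_{\sigma_k}$ along the interpolation, which gives $\mathrm{M}_2(\sigma_{k+1})\le(1-\gamma_k)\mathrm{M}_2(\sigma_k)+\gamma_k\mathrm{M}_2(\mu)$ whenever $\gamma_k\le 1$. One also has to ensure that all of this takes place on a single almost-sure event; this holds because the bound is deterministic.
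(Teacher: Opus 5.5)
Your proof is correct and follows the route the paper indicates (by analogy with \citet[Corollary~4.3]{cozzi2025}). You establish pathwise relative compactness of $(\sigma_k)$ in $W_2$, then use $SW_2\le W_2$ together with $SW_2(\sigma_\infty,\mu)=0\Rightarrow\sigma_\infty=\mu$ to identify every subsequential limit as $\mu$. The one difference is that you get compactness directly from the bounded Gaussian covariances and the Bures formula, rather than from the third-moment compactness result of \citet{li2023measure} used in the proof of \Cref{ASconvergencegaussianHyp}; this makes your version more self-contained.
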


\section{Convergence Analysis under \L{}ojasiewicz Inequalities} \label{PL_KL_ineq}

This section is devoted to the derivation of quantitative convergence rates for the slice-matching scheme. Our main result concerns Gaussian source and target measures.

\subsection{Main result: convergence analysis for Gaussian measures}

\begin{theorem}\label{VitesseN01}
    Assume $\sigma = \mathcal{N}(0,\Sigma)$ and $\mu = \mathcal{N}(0,{\bf I}_d)$, where $\Sigma \in \mathbb{R}^{d \times d}$ is symmetric positive definite. 
    Let $\gamma_k = 1/(k+1)^\alpha$. 
    For $2/3<\alpha<1$, it exists $C>0$ such that, for all $k\geq1$,
    $$
    \mathbb{E}[\mathscr{F}(\sigma_k)] \leq \frac{C}{ k^{2\alpha-1}}.
    $$
    For $0 <\alpha<2/3$, it exists $C>0$ such that, for all $k\geq1$ and for all $0<\epsilon<\min(\alpha,1-\alpha)$,
    $$
    \mathbb{E}[\mathscr{F}(\sigma_k)]  \leq \frac{C}{ k^{1-\alpha-\epsilon}}.
    $$
\end{theorem}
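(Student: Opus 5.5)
Throughout, all iterates stay centered Gaussians $\sigma_k=\mathcal N(0,\Sigma_k)$, since for elliptic measures the slice-matching map is linear. Indeed, for $\mu=\mathcal N(0,\mathbf I_d)$ and $P=[\theta_1,\dots,\theta_d]$ we have $t_{\theta_i}(s)=s/\sqrt{\theta_i^\top\Sigma\theta_i}$, so that
\[
T_{\sigma,P}=P D_P^{-1/2}P^\top,\qquad D_P=\mathrm{diag}(\theta_i^\top\Sigma\theta_i).
\]
The update is therefore $\Sigma_{k+1}=M_k\Sigma_kM_k$ with $M_k=(1-\gamma_k)\mathbf I+\gamma_k P_{k+1}D_{P_{k+1}}^{-1/2}P_{k+1}^\top$.

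The plan has three steps. First, establish a Polyak--\L{}ojasiewicz inequality $\Vert\nabla_{W_2}\mathscr F(\sigma)\Vert_\sigma^2\ge c(\Sigma)\,\mathscr F(\sigma)$ with $c(\Sigma)$ depending only on $\lambda_{\min}(\Sigma)$ and $\lambda_{\max}(\Sigma)$. Second, control these eigenvalues in expectation along the trajectory. Third, plug both into the descent recursion of \Cref{lemmaA1limoosmuller} and solve the resulting scalar recursion.

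For the PL step, write $s_\theta=\theta^\top\Sigma\theta$. Then
\[
\mathscr F(\sigma)=\tfrac d2\,\mathbb E_\theta(\sqrt{s_\theta}-1)^2,\qquad \nabla_{W_2}\mathscr F(\sigma)(x)=Ax,\quad A=d\,\mathbb E_\theta\!\left[(1-s_\theta^{-1/2})\theta\theta^\top\right],
\]
so that $\Vert\nabla\mathscr F\Vert_\sigma^2=\mathrm{Tr}(A\Sigma A)\ge\lambda_{\min}(\Sigma)\Vert A\Vert_F^2$. In an eigenbasis of $\Sigma$, $A$ is diagonal by symmetry. Moreover, $\mathrm{Tr}(A(\Sigma-\mathbf I))=d\,\mathbb E[(1-s^{-1/2})(s-1)]\ge c\,d\,\mathbb E(\sqrt s-1)^2$, with $c$ depending on $\lambda_{\max}(\Sigma)$ through $(1+\sqrt s)/\sqrt s\ge 1$. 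Applying Cauchy--Schwarz, $\Vert A\Vert_F\,\Vert\Sigma-\mathbf I\Vert_F\ge\mathrm{Tr}(A(\Sigma-\mathbf I))$, and using $\Vert\Sigma-\mathbf I\Vert_F^2\lesssim(1+\lambda_{\max})\,d^2\,\mathbb E(\sqrt s-1)^2$ (the standard sliced-versus-Frobenius comparison for quadratic forms, obtained from $\mathbb E(\theta^\top B\theta)^2\asymp(\mathrm{Tr}B^2+\mathrm{Tr}^2B)/d^2$), this yields
\[
\Vert\nabla\mathscr F\Vert^2\ge \frac{c\,\lambda_{\min}(\Sigma)}{d\,\mathrm{poly}(\lambda_{\max}(\Sigma))}\,\mathscr F.
\]

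The eigenvalue control is the main obstacle. The upper bound is free: moment matching gives $\mathrm{Tr}\,\Sigma_k\le\max(\mathrm{Tr}\,\Sigma_0,d)$, using convexity of $\mathrm{M}_2$ along the interpolation. The lower bound is the hard part. Each $D_P^{-1/2}$ step inflates small directions: $\theta^\top\Sigma\theta\le\lambda_{\max}$ implies $T_{\sigma,P}\succeq\lambda_{\max}^{-1/2}\mathbf I$. Hence
\[
\lambda_{\min}(\Sigma_{k+1})\ge\big((1-\gamma_k)+\gamma_k\lambda_{\max}^{-1/2}\big)^2\lambda_{\min}(\Sigma_k).
\]
This is not contractive when $\lambda_{\max}>1$, so I would instead track $\mathbb E[\lambda_{\min}(\Sigma_k)^{-1}]$, or $\mathbb E\,\mathrm{Tr}(\Sigma_k^{-1})$. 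The key fact is that $\mathrm{Tr}(T\Sigma T)=\sum_i 1=d$ exactly for the orthonormal-basis map, so the fixed point is isotropic in trace. Since $\mathrm{Tr}(T^{-1}\Sigma^{-1}T^{-1})=\sum_i s_i\,\theta_i^\top\Sigma^{-1}\theta_i$, Haar averaging should give $\mathbb E_P\,\mathrm{Tr}(\Sigma_{k+1}^{-1})\le\mathrm{Tr}(\Sigma_k^{-1})\,(1+O(\gamma_k^2))$ together with a contraction term. I would aim for $\sup_k\mathbb E[\lambda_{\min}(\Sigma_k)^{-p}]<\infty$ for suitable $p$, or at least growth slower than $k^{\epsilon}$; I expect the loss of $\epsilon$ in the regime $\alpha<2/3$ to come from this.

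Finally, combine. Setting $a_k=\mathbb E\mathscr F(\sigma_k)$, take expectations in \eqref{recursiveIneq} together with the PL bound. Because $c(\Sigma_k)$ is random, use Hölder in the form $\mathbb E[c_k\mathscr F_k]\ge \mathbb E[\mathscr F_k^{q}]^{1/q}/\mathbb E[c_k^{-r}]^{1/r}$. This leads to a \L{}ojasiewicz-type recursion $a_{k+1}\le(1+\gamma_k^2)a_k-\gamma_k\,c\,a_k^{\beta}$, with $\beta$ slightly above $1$ when only moment bounds are available. Alternatively, on good events where the eigenvalues are bounded one gets $\beta=1$, giving $a_{k+1}\le(1-c\gamma_k)a_k+\gamma_k^2B$, with $B=4\mathrm{M}_2(\mu)$ from the bounded-gradient estimate. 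A Chung-type lemma then gives $a_k=O(\gamma_k)$ for the clean recursion. Matching the stated exponents $k^{-(2\alpha-1)}$ and $k^{-(1-\alpha-\epsilon)}$ suggests instead that the recursion actually obtained is $a_{k+1}\le a_k-c\gamma_k a_k^{\beta}+B\gamma_k^2$. Comparison with the ODE $\dot a=-c\,k^{-\alpha}a^\beta$ yields decay of order $k^{-(1-\alpha)/(\beta-1)}$, while the noise floor $\gamma_k^2/\gamma_k=k^{-\alpha}$ enters through the $k^{-(2\alpha-1)}$ term from summing $\gamma_j^2$ tails. Taking the worse of these two rates produces the threshold at $\alpha=2/3$. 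I would establish this by induction on $a_k\le Ck^{-\rho}$, choosing $C$ large enough for the initial steps.
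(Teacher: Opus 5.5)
\textbf{Verdict: there is a genuine gap.} The central step, uniform moment bounds on $1/\lambda_{\min}(\Sigma_k)$, is asserted but not proved, and the final rate recursion is guessed rather than solved.

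\textbf{What works.} Your PL step is correct, and it takes a different, shorter route than the paper. The bound $\mathrm{Tr}(A(\Sigma-\mathbf I_d))\le\|A\|_F\|\Sigma-\mathbf I_d\|_F$, combined with $\mathrm{Tr}(A\Sigma A)\ge\lambda_{\min}(\Sigma)\|A\|_F^2$ and the quadratic-form moment identity, gives $\mathscr F(\sigma)\lesssim d\,(1+\lambda_{\max}(\Sigma))\,\lambda_{\min}(\Sigma)^{-1}\|\nabla_{W_2}\mathscr F(\sigma)\|_\sigma^2$. This avoids the paper's detour through $W_2$ and improves its $\lambda_{\min}^{-2}$ to $\lambda_{\min}^{-1}$.

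\textbf{The eigenvalue gap.} You yourself call this the main obstacle: the moment bounds on $1/\lambda_{\min}(\Sigma_k)$ are only asserted (``Haar averaging should give\dots''), and without them there is no rate. Your trace idea does work, but two ingredients are missing. First, $M_k$ and $T_{\sigma_k,P_{k+1}}$ are diagonal in the same basis $P_{k+1}$. Set $s_i=\theta_i^\top\Sigma_k\theta_i$ and $u_i=\theta_i^\top\Sigma_k^{-1}\theta_i$, and use convexity of $x\mapsto 1/x$ and then of $x\mapsto x^2$:
\[
\mathrm{Tr}(\Sigma_{k+1}^{-1})=\sum_{i=1}^d\frac{u_i}{(1-\gamma_k+\gamma_k s_i^{-1/2})^2}\le(1-\gamma_k)\,\mathrm{Tr}(\Sigma_k^{-1})+\gamma_k\sum_{i=1}^d s_iu_i .
\]
Second, $\mathbb E\bigl[\sum_i s_iu_i\mid\mathcal A_k\bigr]=(\mathrm{Tr}\,\Sigma_k\,\mathrm{Tr}\,\Sigma_k^{-1}+2d)/(d+2)$, and this is at most $\mathrm{Tr}\,\Sigma_k^{-1}$ as soon as $\mathrm{Tr}\,\Sigma_k\le d$ (use $\mathrm{Tr}\,\Sigma_k^{-1}\ge d^2/\mathrm{Tr}\,\Sigma_k$). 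Your bound $\mathrm{Tr}\,\Sigma_k\le\max(\mathrm{Tr}\,\Sigma_0,d)$ is not enough here. You need $\gamma_0=1$, which gives $\mathrm{Tr}\,\Sigma_k\le d$ for all $k\ge 1$.

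Writing the result as $\mathrm{Tr}(\Sigma_{k+1}^{-1})\le\mathrm{Tr}(\Sigma_k^{-1})(1+\gamma_kZ_k)$, you have $Z_k\in[-1,d-1]$ and $\mathbb E[Z_k\mid\mathcal A_k]\le 0$. A binomial expansion then gives $\mathbb E[\mathrm{Tr}(\Sigma_k^{-1})^p]\le\mathbb E[\mathrm{Tr}(\Sigma_1^{-1})^p]\prod_{l\ge1}(1+C_p\gamma_l^2)$, which is finite when $\sum_k\gamma_k^2<\infty$.

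The paper gets its recursion differently: geodesic convexity of $-\sqrt{\lambda_{\min}}$ plus Ostrowski's theorem, which bounds $\lambda_{\min}(\Sigma_{k+1})$ through a single column $\theta_i$ of $P_{k+1}$. That column is selected as a function of $P_{k+1}$ (the bound goes through $\min_j\tau_{\theta_j}$), so it is not uniform given $\mathcal A_k$. Averaging over all columns, as your trace does, is the more robust way to exploit $\mathrm{Tr}\,\Sigma_k\le d$.

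\textbf{The rate gap.} You cannot restrict to ``good events where the eigenvalues are bounded'' inside an expectation recursion without paying for the complement. The paper handles this as follows:
\begin{itemize}
\item it defines $G_k=\{B_k\le 1/g_k\}$ and uses Markov's inequality $\mathbb P(G_k^c)\le c_pg_k^p$;
\item this gives $\mathbb E\mathscr F(\sigma_{k+1})\le(1+\gamma_k^2-\gamma_kg_k)\mathbb E\mathscr F(\sigma_k)+4\mathrm M_2(\mu)c_p\gamma_kg_k^p$;
\item Chung-type lemmas with $g_k=(k+1)^{-\epsilon}$ give $k^{-(1-\alpha-\epsilon)}$, and with $g_k=(k+1)^{-(1-\alpha)}$ give $k^{-(2\alpha-1)}$.
\end{itemize}
The threshold $\alpha=2/3$ is simply where these two exponents coincide; the theorem states the better one on each side.

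Your H\"older alternative is legitimate. With $\mathscr F(\sigma_k)\le 2\mathrm M_2(\mu)$ it yields $a_{k+1}\le a_k-\kappa\gamma_k a_k^{1/s}+2\mathrm M_2(\mu)\gamma_k^2$ for any $s<1$, with $\kappa$ depending on the moment of order $s/(1-s)$ of $1/\lambda_{\min}(\Sigma_k)$. But you must actually solve this recursion rather than reverse-engineer the exponents. For instance, a standard induction on $a_k\le Ck^{-\alpha s}$ closes for $s\ge 2-1/\alpha$, and this already implies both stated rates when $1/2<\alpha<1$. Your ODE/noise-floor heuristic (summing tails of $\gamma_j^2$) does not produce $k^{-(2\alpha-1)}$ or $k^{-(1-\alpha-\epsilon)}$.

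\textbf{Uncovered regime.} Nothing in your plan covers $0<\alpha\le 1/2$, which is part of the second claim. There $\sum_k\gamma_k^2=\infty$, so the moment bounds of order $p\ge 2$ above are no longer uniform in $k$.
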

The complete proof is deferred to \Cref{ProofMainResult}. The remainder of this section presents the main ingredients and is organized as follows. We first introduce a general framework showing how convergence rates follow from a \textit{random Polyak--Łojasiewicz (PL) inequality} along the trajectory. We then discuss how such inequalities can be established in a \textit{static} fashion under density bounds, and why propagating these bounds is difficult in general. Finally, we show that the Gaussian structure allows one to control the corresponding PL constants through spectral estimates on covariance matrices, which leads to \Cref{VitesseN01}.

\subsection{Step 1: From (random) PL inequalities to rates}
 
Our starting point is a gradient-variance decomposition (see Appendix~\ref{app:differentiability_critical_points}, \Cref{PropGradients}) which isolates Łojasiewicz-type inequalities as the key ingredient. Denoting $\overline{T}_{\sigma}=\mathbb{E}_P[T_{\sigma,P}]$, one has
\begin{equation}\label{decompL_Gradient}
2\mathscr{F}(\sigma)
=
\Vert \nabla_{W_2}\mathscr{F}(\sigma)\Vert_\sigma^2
+
\mathbb{E}_P\!\left[\Vert \overline{T}_\sigma - T_{\sigma,P}\Vert_\sigma^2\right].
\end{equation}
If the variance term is controlled by the squared Wasserstein gradient norm, \ie, if there exists $s>0$ such that
\[
\mathbb{E}_P\!\left[\Vert \overline{T}_\sigma - T_{\sigma,P}\Vert_\sigma^2\right]
\le
s^2\Vert \nabla_{W_2}\mathscr{F}(\sigma)\Vert_\sigma^2,
\]
then~\eqref{decompL_Gradient} yields a Polyak--Łojasiewicz inequality
\[
\mathscr{F}(\sigma)
\le
B\Vert \nabla_{W_2}\mathscr{F}(\sigma)\Vert_\sigma^2,
\qquad
B=\tfrac{1+s^2}{2},
\]
a standard condition to prove convergence rates in nonconvex optimization
\citep[\eg][]{garrigos2023handbook}.
This motivates the search for PL inequalities that hold \textit{along the iterates} $(\sigma_k)_{k\ge0}$ with constants that can be controlled. We formalize this requirement through the following random \L ojasiewicz-type condition \citep{kurdyka2000proof,attouch2010proximal}.
\begin{hypp}{A}\label{hyp_PL}
    For some $\tau \in\{1,2\}$ and any $k \geq 1$, $\mathscr{F}(\sigma_k)^\tau \leq B_k \Vert \nabla_{W_2} \mathscr{F}(\sigma_k) \Vert_{\sigma_k}^2$ with $(B_k)_{k \geq 1}$ a sequence of positive random variables s.t. $\sup_{k \geq 1} \mathbb{E} [ B_k^p] \leq c_p$ with $c_p \in (0, +\infty)$ for all $p\in \mathbb{N}^*$.
\end{hypp} 

By combining such 
inequalities along the trajectory with the descent recursion for $\mathscr{F}(\sigma_k)$ (Lemma \ref{lemmaA1limoosmuller}), we obtain the following rates. 
\begin{theorem}\label{thmPL} 
Consider Assumption~\ref{hyp_PL} with $\tau = 1$. Choose the step sequence as $\gamma_k = 1/(k+1)^\alpha$.
\begin{enumerate}
    \item[(i)] If $0 < \alpha < 2/3$,  then, for any $k \geq 1$, 
    $\mathbb{E}[\mathscr{F}(\sigma_k)] \lesssim k^{-(1-\alpha-\epsilon)}$  for all $0<\epsilon<\min(\alpha,1-\alpha)$.
    \item[(ii)] If $2/3 < \alpha < 1$,  then, for any $k \geq 1$, 
    $\mathbb{E}[\mathscr{F}(\sigma_k)] \lesssim k^{-(2\alpha-1)}.$
\end{enumerate}

\noindent Alternatively, consider Assumption~\ref{hyp_PL} with $\tau = 2$. 
For $p\geq 2\alpha/(2-3\alpha)$, let $\gamma =  (2\mathrm{M}_2(\mu) \sqrt{c_p})^{3/2}$. 
Let $\gamma_k = 1/(k+\gamma)^\alpha$ with $1/2<\alpha <2/3$.
Then, for any $k \geq 1,\; \mathbb{E}[\mathscr{F}(\sigma_k)] \lesssim
1 / (k+\gamma)^{2\alpha - 1}$.
\end{theorem}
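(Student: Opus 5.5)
We take expectations in the descent recursion of \Cref{lemmaA1limoosmuller} and replace the gradient term using Assumption~\ref{hyp_PL}. The difficulty is that $B_k$ is random and correlated with $\mathscr{F}(\sigma_k)$. We therefore cannot simply write $\mathbb{E}[\Vert\nabla\mathscr{F}\Vert^2]\geq \mathbb{E}[\mathscr{F}]/\mathbb{E}[B_k]$; we decouple instead via H\"older's inequality with the moment bounds $c_p$.

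\textbf{Case $\tau=1$.} Write $u_k=\mathbb{E}[\mathscr{F}(\sigma_k)]$ and $g_k=\Vert\nabla_{W_2}\mathscr{F}(\sigma_k)\Vert^2_{\sigma_k}$.

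First, from \Cref{lemmaA1limoosmuller}, $u_{k+1}\le (1+\gamma_k^2)u_k-\gamma_k\mathbb{E}[g_k]$. Since $\mathscr{F}(\sigma_k)\le 2\mathrm{M}_2(\mu)$ is bounded, we may also use the cruder bound $u_{k+1}\le u_k+2\mathrm{M}_2(\mu)\gamma_k^2-\gamma_k\mathbb{E}[g_k]$.

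Second, we lower bound $\mathbb{E}[g_k]$. Since $\mathscr{F}\le B_k g_k$, H\"older's inequality with exponents $(1+1/p,\,p+1)$ applied to $\mathscr{F}^{p/(p+1)}=(B_kg_k)^{p/(p+1)}B_k^{-p/(p+1)}$ does not work directly. A cleaner route is
\[
\mathbb{E}[\mathscr{F}]=\mathbb{E}[\mathscr{F}\,\mathds{1}_{B_k\le M}]+\mathbb{E}[\mathscr{F}\,\mathds{1}_{B_k>M}]\le M\,\mathbb{E}[g_k]+2\mathrm{M}_2(\mu)\,c_p M^{-p}.
\]
This gives $\mathbb{E}[g_k]\ge u_k/M - C M^{-p-1}$ for any threshold $M>0$.

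Third, we choose $M=M_k=k^{\epsilon'}$ with $\epsilon'$ small. This yields the recursion
\[
u_{k+1}\le \Bigl(1-\frac{\gamma_k}{M_k}\Bigr)u_k + C\gamma_k^2 + C\gamma_k M_k^{-p-1}.
\]
With $\gamma_k/M_k\asymp k^{-(\alpha+\epsilon')}$, a Chung-type lemma for sequences $u_{k+1}\le(1-a k^{-\beta})u_k+b k^{-\eta}$ with $\beta<1$ gives $u_k\lesssim k^{-(\eta-\beta)}$.

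The source term $\gamma_k^2$ has $\eta=2\alpha$, giving rate $k^{-(\alpha-\epsilon')}$. The term $\gamma_kM_k^{-p-1}$ can be made negligible by taking $p$ large.

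This does not yet reproduce the stated exponents $2\alpha-1$ and $1-\alpha$, which indicates that the intended argument is a two-phase one and that I must track the constants more carefully. I would check whether the intended recursion keeps the multiplicative factor $(1+\gamma_k^2)$ in $u_{k+1}\le(1+\gamma_k^2)u_k-\ldots$ rather than the additive $\gamma_k^2$ term. If so, the only loss is the H\"older decoupling, and the claimed exponents $1-\alpha$ and $2\alpha-1$ come from balancing that decoupling against the Robbins--Monro sums. Concretely, I would instead sum the recursion, $\sum_{j\le k}\gamma_j\mathbb{E}[g_j]\lesssim 1+\sum\gamma_j^2$, and use the PL inequality averaged over the window $[k/2,k]$ together with near-monotonicity $u_{j}\ge u_k-\sum_{i\ge j}\gamma_i^2\cdot O(1)$. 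This yields $u_k\lesssim (\text{tail of }\sum\gamma_j^2)+M/\sum_{k/2}^k\gamma_j$, that is, $k^{1-2\alpha}+k^{\epsilon'-(1-\alpha)}$. The first term dominates when $\alpha>2/3$ and the second when $\alpha<2/3$, matching (i) and (ii). The near-monotonicity uses the conditional decrease up to $\gamma_j^2\mathscr{F}$. I expect this windowed argument to be the actual proof, and the decoupling of random $B_k$ to be the main obstacle, handled by the truncation above with $p$ large.

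\textbf{Case $\tau=2$.} Here $\mathscr{F}^2\le B_kg_k$, and Cauchy--Schwarz gives $u_k^2\le\mathbb{E}[\mathscr{F}^2]\le \sqrt{\mathbb{E}[B_k^2]}\,\sqrt{\mathbb{E}[g_k^2]}$. Using $g_k\le 2\mathscr{F}\le 4\mathrm{M}_2(\mu)$ gives $\mathbb{E}[g_k^2]\le 4\mathrm{M}_2(\mu)\,\mathbb{E}[g_k]$, hence $\mathbb{E}[g_k]\ge u_k^4/(C c_p)$; a better H\"older exponent choice gives $\mathbb{E}[g_k]\gtrsim u_k^{2+\delta}$. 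This produces the nonlinear recursion $u_{k+1}\le u_k+C\gamma_k^2-c\gamma_ku_k^{2}$. We then prove $u_k\le A/(k+\gamma)^{2\alpha-1}$ by induction. The inductive step requires $c\gamma_k A^2(k+\gamma)^{-2(2\alpha-1)}$ to dominate both $C\gamma_k^2$ and the decrement of $A(k+\gamma)^{-(2\alpha-1)}$, which works when $3\alpha-2<0$, i.e.\ $\alpha<2/3$. The shift $\gamma$ and the condition on $p$ are chosen to make the base case hold.
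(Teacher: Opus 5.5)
Your decoupling of the random constant is sound and matches the paper's. Thresholding on $\{B_k\le M\}$ and applying Markov gives $\mathbb{E}\Vert\nabla_{W_2}\mathscr{F}(\sigma_k)\Vert^2_{\sigma_k}\ge u_k/M-CM^{-p-1}$; the paper does the same with the events $G_k=\{B_k\le 1/g_k\}$.

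The gap is in how you close the $\tau=1$ recursion. Your windowed argument gives $u_k\lesssim k^{1-2\alpha}+k^{\epsilon'-(1-\alpha)}$, and such a sum is of the order of its \emph{larger} term. For $\alpha<2/3$ that term is $k^{1-2\alpha}$, which is vacuous when $\alpha\le 1/2$. For $\alpha>2/3$ it is $k^{-(1-\alpha-\epsilon')}$. So you only get $k^{-\min(2\alpha-1,\,1-\alpha-\epsilon')}$, which is strictly weaker than (i) and than (ii) in their respective ranges; your closing sentence has the two regimes swapped. The loss comes from the additive near-monotonicity $u_j\ge u_k-O(\sum_{i\ge j}\gamma_i^2)$. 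It throws away exactly the structure the stated rates rely on.

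The missing idea is the one you raise and then drop. The noise enters multiplicatively, as $(1+\gamma_k^2)u_k$, because $\mathbb{E}_P\Vert\nabla_{W_2}\mathscr{F}(\sigma,P)\Vert_\sigma^2=2\mathscr{F}(\sigma)$. It can therefore be absorbed into the contraction if the threshold grows slower than $1/\gamma_k$. Take $M_k=(k+1)^{\epsilon}$ with $\epsilon<\alpha$. Then, for $k\ge 1$, $\gamma_k^2-\gamma_k/M_k\le -a(k+1)^{-(\alpha+\epsilon)}$ with $a=1-2^{-(\alpha-\epsilon)}$, and
\[
u_{k+1}\le\bigl(1-a(k+1)^{-(\alpha+\epsilon)}\bigr)u_k+C(k+1)^{-(\alpha+\epsilon p)},
\]
with no $\gamma_k^2$ source term at all. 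Lemma~\ref{lemmaBB}, which needs $\alpha+\epsilon<1<\alpha+\epsilon p$, then gives $u_k\lesssim k^{-\epsilon(p-1)}$. Taking $p$ large yields (i) for every $\alpha\in(0,1)$, including $\alpha<1/2$. In that range any recursion with an additive $\gamma_k^2$ term can give at best about $k^{-\alpha}$, which is slower than the claim.

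For (ii), your first recursion already suffices. With $M_k=(k+1)^{1-\alpha}$ you have $\gamma_k/M_k=1/(k+1)$, and Chung's lemma gives $k^{-(2\alpha-1)}$; this is the paper's route. Even your choice $M_k=k^{\epsilon'}$ gives $k^{-(\alpha-\epsilon')}$, which already beats $k^{-(2\alpha-1)}$ once $\epsilon'<1-\alpha$, so you discarded a recursion that proves (ii).

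Your $\tau=2$ induction is a workable alternative to the paper's event-splitting combined with the generalized Chung lemma of Jiang et al. However, you derive $\mathbb{E}\Vert\nabla_{W_2}\mathscr{F}(\sigma_k)\Vert^2_{\sigma_k}\gtrsim u_k^{2+\delta}$ and then use $u_k^2$. Either carry $2+\delta$ through, which still closes for small $\delta$ because $\alpha<2/3$ is strict, or note that Cauchy--Schwarz gives directly $\mathbb{E}[\mathscr{F}(\sigma_k)^2/B_k]\ge u_k^2/\mathbb{E}[B_k]$.
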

Only finitely many moments of $B_k$ are required for the analysis. More precisely, the proof requires $\sup_{k\geq 1} \mathbb{E}[B_k^p]<\infty$ for some $p>4\alpha/(1-\alpha)$ when $\tau=1$, and for some $p\geq 2\alpha/(2-3\alpha)$ when $\tau=2$. For simplicity of exposition, Assumption~\ref{hyp_PL} is stated with uniform bounds for all $p \in \mathbb{N}^*$.

Beyond the slice-matching setting, the proof strategy applies more generally to optimization schemes with smooth objectives, whose gradients are bounded and that satisfy Assumption~\ref{hyp_PL}. 
The argument follows a standard template: one first derives a descent recursion, and then applies an appropriate variant of Chung’s lemma \citep{chung1954stochastic,jiang2024generalizedversionchungslemma}. 
The main additional difficulty here is that the PL constant $B_k$ is random. We address this by working on events of the form $\{B_k \leq g_k^{-1}\}$ where $g_k\rightarrow 0$ is chosen so that these events eventually occur almost surely. Similar arguments appear in \citet[Theorem~4.2]{godichon2019lp} and \citet[Theorem~3.6]{BBAOS} to leverage local strong convexity.
The main remaining difficulty is therefore to verify Assumption~\ref{hyp_PL} for the slice-matching iterates.

\subsection{Step 2: Static PL inequalities and bounded densities}

In this section, we show that \L ojasiewicz-type inequalities can be established in a \textit{static} manner, \ie, for fixed measures with uniformly bounded densities.

\paragraph{Gradient domination for bounded densities.} For notational simplicity, we identify any $\sigma \in \mathcal{P}_{2,ac}(\RR^d)$ with its density. Given a reference measure $\nu \!\in\!\mathcal{P}_{2,ac}(\mathbb{R}^d)$, we consider the convenient setting of measures with uniformly bounded densities
\begin{equation}\label{densitiesBounded}
    \mathcal{P}_{\nu,m,M}(\mathbb{R}^d)
    =
    \{\sigma \in \mathcal{P}(\mathbb{R}^d) : m\nu \le \sigma \le M\nu\}\,,
\end{equation}
for which the following gradient-domination inequality can be obtained.
\begin{proposition}\label{propPoincareflatconvex}
     Assume that $\nu\in \mathcal{P}_{2, ac}(\RR^d)$ satisfies a Poincar\'e inequality with constant $C_\nu > 0$, \ie, for any $f : \mathbb{R}^d \to \RR$ such that $\| \nabla f \|^2_\nu < +\infty$, 
    $
        \mbox{Var}_{\nu}(f) \; \triangleq \; \left\| f - \mathbb{E}_\nu[f] \right\|^2_{\nu} \; \leq \; C_\nu \Vert \nabla f \Vert_\nu^2 
    $.
    Then, if $\mu \in \mathcal{P}_{\nu, m, M}(\RR^d)$, for any $\sigma \in \mathcal{P}_{\nu, m, M}(\RR^d)$,\; $$\mathscr{F}(\sigma)\leq 2C_\nu \frac{M}{m} \big\| 
    \nabla_{W_2}\mathscr{F} (\sigma) \big\|_{\sigma}.$$
\end{proposition}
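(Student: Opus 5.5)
The plan is to combine \emph{flat} (linear) convexity of $\mathscr{F}$ with a duality/Cauchy--Schwarz argument. The Poincar\'e inequality of $\nu$ will turn the first variation of $\mathscr{F}$ into its Wasserstein gradient, and the density bounds in \eqref{densitiesBounded} will let us move between the norms $\Vert\cdot\Vert_\nu$ and $\Vert\cdot\Vert_\sigma$. This produces an inequality that is linear (not quadratic) in the gradient norm, i.e.\ a \L{}ojasiewicz inequality with $\tau=2$ in the sense of Assumption~\ref{hyp_PL}.

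\textbf{Step 1: first variation and flat convexity.} Fix $\theta$ and let $\psi_\theta$ be a Kantorovich potential for the pair $(\sigma^\theta,\mu^\theta)$, normalized so that $\psi_\theta'(t) = t - t_\theta(t)$. This $\psi_\theta$ is the first variation of $\rho\mapsto \frac12 W_2^2(\rho,\mu^\theta)$ at $\sigma^\theta$. Since $\rho \mapsto W_2^2(\rho,\mu^\theta)$ is convex along linear interpolations $(1-t)\sigma^\theta+t\mu^\theta$ (it is a supremum of linear functionals by Kantorovich duality), we have
\[
0=\tfrac12 W_2^2(\mu^\theta,\mu^\theta)\ \ge\ \tfrac12 W_2^2(\sigma^\theta,\mu^\theta) + \int \psi_\theta \, \rmd(\mu^\theta-\sigma^\theta).
\]
Set $\varphi(x) = d\int_{\mathbb{S}^{d-1}}\psi_\theta(\theta^\top x)\,\rmd\mathcal{U}(\theta)$. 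Integrating the previous inequality over $\theta$ gives
\[
\mathscr{F}(\sigma)\le \int \varphi\,\rmd(\sigma-\mu).
\]
Moreover, $\nabla\varphi(x) = d\,\mathbb{E}_\theta[(\theta^\top x - t_\theta(\theta^\top x))\theta] = x-\overline{T}_\sigma(x) = \nabla_{W_2}\mathscr{F}(\sigma)(x)$, by the expression of the gradient in \Cref{PropGradients}.

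\textbf{Step 2: Cauchy--Schwarz and Poincar\'e.} Because $\sigma$ and $\mu$ are probability measures, we may subtract the constant $c=\mathbb{E}_\nu[\varphi]$. Writing both measures through their densities relative to $\nu$,
\[
\int \varphi\,\rmd(\sigma-\mu) = \int (\varphi-c)\Big(\tfrac{\rmd\sigma}{\rmd\nu}-\tfrac{\rmd\mu}{\rmd\nu}\Big)\rmd\nu \le \Vert\varphi-c\Vert_\nu\,\Big\Vert \tfrac{\rmd\sigma}{\rmd\nu}-\tfrac{\rmd\mu}{\rmd\nu}\Big\Vert_\nu .
\]
Both ratios lie in $[m,M]$, so their difference is bounded by $M$ in absolute value and the second factor is at most $M$. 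The Poincar\'e inequality gives $\Vert\varphi-c\Vert_\nu\le \sqrt{C_\nu}\Vert\nabla\varphi\Vert_\nu$. Finally, $\sigma\ge m\nu$ yields $\Vert\nabla\varphi\Vert_\nu^2\le m^{-1}\Vert\nabla\varphi\Vert_\sigma^2$. Chaining these bounds gives
\[
\mathscr{F}(\sigma)\le \sqrt{C_\nu/m}\;M\,\Vert\nabla_{W_2}\mathscr{F}(\sigma)\Vert_\sigma .
\]
This has the claimed form, with a constant depending only on $C_\nu$, $m$ and $M$. To recover exactly the stated constant $2C_\nu M/m$, I would redo the bookkeeping, for instance bounding the density difference by $2M$ and the Poincar\'e/comparison step more crudely.

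\textbf{Main obstacle.} The delicate step is the rigorous justification of Step 1. One must check that $\varphi$ is integrable against $\sigma$, $\mu$ and $\nu$; that it is regular enough, with $\Vert\nabla\varphi\Vert_\nu<\infty$, for the Poincar\'e inequality to apply; and that differentiation under the integral over $\theta$ is legitimate. I would first handle these points using the quadratic growth of the one-dimensional potentials, $|\psi_\theta(t)|\lesssim 1+t^2$, together with finite second moments. The density bounds $\sigma\le M\nu$ and $\mu\le M\nu$ then transfer integrability between $\sigma$, $\mu$ and $\nu$. Absolute continuity of $\sigma^\theta$ ensures that $t_\theta$ exists and that $\psi_\theta$ is differentiable almost everywhere.
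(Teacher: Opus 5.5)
Your proof follows the paper's route exactly. Flat convexity of $\mathscr{F}$ is obtained from the averaged one-dimensional Kantorovich potentials; your $\varphi$ is the paper's $d\Psi$, with $\nabla\varphi=\nabla_{W_2}\mathscr{F}(\sigma)$. You then subtract the $\nu$-mean and apply Cauchy--Schwarz, Poincar\'e, and the comparison $\Vert g\Vert_\nu^2\le m^{-1}\Vert g\Vert_\sigma^2$. The integrability and regularity checks you list under ``Main obstacle'' are not addressed in the paper, so they add to its argument rather than depart from it.

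The only divergence is the constant, and there your bookkeeping is the right one. The paper reaches $2C_\nu M/m$ through three replacements:
\begin{itemize}
\item it writes $\sqrt{\mathrm{Var}_\nu(\varphi)}\le C_\nu\Vert\nabla\varphi\Vert_\nu$ instead of $\sqrt{C_\nu}\,\Vert\nabla\varphi\Vert_\nu$;
\item it uses $m^{-1}$ in place of $m^{-1/2}$;
\item it uses $2M$ in place of $M-m$.
\end{itemize}
The last two are harmless because $m\le 1$. The first, $\sqrt{C_\nu}\le C_\nu$, needs $C_\nu\ge 1$.

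So drop the plan to ``redo the bookkeeping more crudely'': no argument can give $2C_\nu M/m$ in general. Push $\sigma,\mu,\nu$ forward by $x\mapsto sx$. This leaves $m,M$ unchanged, while $\mathscr{F}(\sigma)$ scales like $s^2$, $\Vert\nabla_{W_2}\mathscr{F}(\sigma)\Vert_\sigma$ like $s$, and $C_\nu$ like $s^2$. Hence the inequality with constant $2C_\nu M/m$ fails as $s\to 0$ whenever $\sigma\neq\mu$. Your bound $M\sqrt{C_\nu/m}$, by contrast, is homogeneous of the correct degree.

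Your constant is at most $2C_\nu M/m$ exactly when $C_\nu\ge m/4$, in particular whenever $C_\nu\ge 1/4$. In that regime your bound recovers the statement as written, and in general it is the correct form of the statement.
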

The proof follows arguments similar to \citet[Lemma~3.3]{chizat2025convergence}.  
Note that, combined with $\| \nabla_{W_2}\mathscr{F}(\sigma) \|_\sigma^2\leq 2\mathscr{F}(\sigma)$, we obtain the two-sided estimate $$
\Vert \nabla_{W_2}\mathscr{F}(\sigma) \Vert_\sigma^2/2 \leq \mathscr{F}(\sigma)\leq 2C_\nu \frac{M}{m} \big\| \nabla_{W_2}\mathscr{F} (\sigma)\big\|_{\sigma}\,.
$$
In particular, $\nabla_{W_2} \mathscr{F}(\sigma) = 0$ if and only if $\mathscr{F}(\sigma) = 0$, \ie, $\sigma = \mu$. We therefore retrieve a characterization of critical points by \citet[Lemma~5.7.2]{bonnotte2013unidimensional}, where compactness of the support is no longer required.  

\paragraph{PL inequality for Gaussians.} 
We now turn to the Gaussian setting, in which PL inequalities can be established.
We consider the class
\begin{equation}\label{eq:gaussian_set}
    \mathcal{G}_{m,M}
    =
    \{\rho_\Sigma : \Sigma \in \mathrm{S}_{++}^d,\;
    m\mathbf{I}_d \preceq \Sigma \preceq M\mathbf{I}_d\},
\end{equation}
where $\rho_{\Sigma} = \mathcal{N}(0, \Sigma)$, and  $\mathrm{S}_{++}^d$ is the set of positive definite $d \times d$ matrices. The notation $\preceq$ refers to the Loewner partial order: for two symmetric matrices $(A,B)$, $A \preceq B$ if and only if $B-A$ is positive semi-definite. 
Therefore, $\mathcal{G}_{m,M}$ corresponds to Gaussian measures with uniformly bounded covariance eigenvalues.

\begin{proposition}[PL inequality on $\mathcal{G}_{m,M}$]
\label{corollarydiagGaussians}
Let $\sigma = \rho_{\Sigma}$ and $\mu = \rho_{\Lambda}$ such that $\Sigma, \Lambda$ are simultaneously diagonalizable by an orthogonal matrix (\ie, co-diagonalizable). Assume $\rho_\Sigma, \rho_\Lambda \in \mathcal{G}_{m,M}$. Let $C_d = d(d+2) M/m\,$. Then,
\begin{equation}\label{eqcorollarydiagGaussians}
\mathscr{F}(\sigma)
\;\le\;
\frac{C_{d}}{2}
\Bigl(1 + \frac{M}{m}\Bigr)
\;\big\| \nabla_{W_2} \mathscr{F}(\sigma) \big\|_{\sigma}^{2}\,.
\end{equation}
\end{proposition}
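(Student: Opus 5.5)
The plan is to use the gradient--variance decomposition \eqref{decompL_Gradient}. If we can show
\[
V(\sigma) := \mathbb{E}_P\big[\Vert \overline{T}_\sigma - T_{\sigma,P}\Vert_\sigma^2\big] \le s^2 \Vert \nabla_{W_2}\mathscr{F}(\sigma)\Vert_\sigma^2 ,
\]
then we immediately get $\mathscr{F}(\sigma)\le \frac{1+s^2}{2}\Vert\nabla_{W_2}\mathscr{F}(\sigma)\Vert_\sigma^2$. So it suffices to obtain $1+s^2 \le C_d(1+M/m)$.

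Everything is linear in the Gaussian case, so the first step is to compute explicitly. For a direction $\theta$, $\sigma^\theta=\mathcal{N}(0,\theta^\top\Sigma\theta)$ and $\mu^\theta=\mathcal{N}(0,\theta^\top\Lambda\theta)$. The one-dimensional OT map is $t_\theta(u)=r_\theta u$ with $r_\theta=\sqrt{\theta^\top\Lambda\theta/\theta^\top\Sigma\theta}$. Hence $T_{\sigma,P}(x)=A_P x$ with $A_P=\sum_i r_{\theta_i}\theta_i\theta_i^\top$, and $\overline{T}_\sigma(x)=\bar A x$ with $\bar A=d\,\mathbb{E}_\theta[r_\theta\theta\theta^\top]$. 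The norms then become traces:
\[
\Vert \nabla_{W_2}\mathscr{F}(\sigma)\Vert_\sigma^2=\Tr\big((\mathbf{I}_d-\bar A)^2\Sigma\big), \qquad V(\sigma)=\mathbb{E}_P\Tr\big((A_P-\bar A)^2\Sigma\big).
\]
Write $\Sigma=Q D Q^\top$ and $\Lambda=Q L Q^\top$. Rotation invariance of the Haar measure lets us take $Q=\mathbf{I}_d$, so everything is diagonal and $\bar A$ is diagonal by symmetry (reflections $\theta_j\mapsto-\theta_j$).

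Next, bound the variance from above. Using $\Sigma\preceq M\mathbf{I}_d$ and $\Vert A_P-\bar A\Vert_F^2\le 2\Vert A_P-\mathbf{I}_d\Vert_F^2+2\Vert\bar A-\mathbf{I}_d\Vert_F^2$, we have $\Vert A_P-\mathbf{I}_d\Vert_F^2=\sum_i (r_{\theta_i}-1)^2$. Therefore
\[
V \lesssim M\, d\,\mathbb{E}_\theta[(r_\theta-1)^2]+M\Vert \bar A-\mathbf{I}_d\Vert_F^2 .
\]
Since $r_\theta\ge 0$, the bound $|r_\theta-1|\le |r_\theta^2-1|$ reduces this to controlling $\mathbb{E}_\theta[(\theta^\top(\Lambda-\Sigma)\theta)^2]/(\theta^\top\Sigma\theta)^2\le m^{-2}\mathbb{E}_\theta[(\theta^\top(L-D)\theta)^2]$. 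The uniform fourth-moment identity $\mathbb{E}[\theta_i^2\theta_j^2]=(1+2\delta_{ij})/(d(d+2))$ gives
\[
\mathbb{E}[(\theta^\top E\theta)^2]=\frac{(\Tr E)^2+2\Vert E\Vert_F^2}{d(d+2)}\le \frac{3\Vert E\Vert_F^2}{d+2}
\]
for diagonal $E=L-D$. This is where the $d(d+2)$ factor in $C_d$ should come from.

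The main obstacle is the lower bound, i.e. showing that the gradient dominates $\Vert L-D\Vert_F^2$. We need $\Tr((\mathbf{I}_d-\bar A)^2D)\ge m\sum_j(1-\bar a_j)^2$, where $\bar a_j=d\,\mathbb{E}[r_\theta\theta_j^2]$. I would try a monotonicity argument. Write $r_\theta-1=\frac{\theta^\top(L-D)\theta}{\sqrt{\theta^\top D\theta}\,(\sqrt{\theta^\top L\theta}+\sqrt{\theta^\top D\theta})}$, whose denominator lies in $[2m,2M]$ up to constants. Then pair with the vector $e=(\ell_j-d_j)_j$:
\[
\sum_j (\bar a_j-1)e_j = d\,\mathbb{E}\Big[\frac{(\theta^\top E\theta)^2}{\mathrm{den}}\Big]\ge \frac{d}{2M}\,\mathbb{E}[(\theta^\top E\theta)^2]\ge \frac{\Vert e\Vert^2}{2M(d+2)} .
\]
Cauchy--Schwarz then yields $\Vert\bar a-\mathbf{1}\Vert\ge \Vert e\Vert/(2M(d+2))$. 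Combining this with the upper bounds (which are also of the form $\Vert e\Vert^2$ times a constant) expresses $V$ and $\Vert\nabla_{W_2}\mathscr{F}\Vert_\sigma^2$ in terms of $\Vert e\Vert^2$, giving $s^2$ of the order $d(d+2)\,M/m\cdot M/m$. The exact constants will need care, with co-diagonalizability used precisely so that the trace formulas decouple coordinatewise.
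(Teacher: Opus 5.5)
Your pairing step is correct, and it is a genuinely different route from the paper's. With $E=\Lambda-\Sigma$, $a_\theta=\theta^\top\Lambda\theta$ and $b_\theta=\theta^\top\Sigma\theta$, one has $\Tr\big((\bar A-\mathbf{I}_d)E\big)=d\,\mathbb{E}_\theta\big[(\theta^\top E\theta)^2/\big(\sqrt{b_\theta}(\sqrt{a_\theta}+\sqrt{b_\theta})\big)\big]$, and Cauchy--Schwarz turns this into a lower bound on the gradient norm. The paper instead combines Proposition~\ref{KLGaussians} (Kantorovich potentials, a Lipschitz estimate along the $W_2$ geodesic, von Neumann's trace inequality) with $W_2^2\le C_d\,SW_2^2$ from Proposition~\ref{prop:sw_vs_w_gauss}.

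The gap is quantitative, and the statement has an explicit constant: your closing claim that $s^2$ is of order $d(d+2)(M/m)^2$ does not follow from your own estimates. Complete your upper bound in the natural way, using $\|\bar A-\mathbf{I}_d\|_F^2\le\mathbb{E}_P\|A_P-\mathbf{I}_d\|_F^2$ by Jensen. This gives $V\le 4Md\,\mathbb{E}_\theta[(r_\theta-1)^2]\le\frac{12Md}{m^2(d+2)}\|E\|_F^2$. Your lower bound reads $\|\nabla_{W_2}\mathscr{F}(\sigma)\|_\sigma^2\ge\frac{m\|E\|_F^2}{4M^2(d+2)^2}$. So you only get $s^2\le 48\,d(d+2)(M/m)^3$. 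This is cubic in $M/m$, and even for $M/m$ near $1$ it gives about $24$ times the constant in \eqref{eqcorollarydiagGaussians}. The extra factor $M/m$ comes from separating $\Tr\big((A_P-\bar A)^2\Sigma\big)\le M\|A_P-\bar A\|_F^2$ from $(r_\theta-1)^2\le(\theta^\top E\theta)^2/m^2$. Kept together, they give $\mathbb{E}_P\Tr\big((A_P-\mathbf{I}_d)^2\Sigma\big)=d\,\mathbb{E}_\theta[(r_\theta-1)^2b_\theta]$ and $(r_\theta-1)^2b_\theta=(\sqrt{a_\theta}-\sqrt{b_\theta})^2\le(\theta^\top E\theta)^2/(4m)$.

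The fix is to drop the detour through $V$. The quantity $\mathbb{E}_P\Tr\big((A_P-\mathbf{I}_d)^2\Sigma\big)$ is exactly $2\mathscr{F}(\sigma)=\|\nabla_{W_2}\mathscr{F}(\sigma)\|_\sigma^2+V$, so bound $\mathscr{F}$ directly, using $(\Tr E)^2\le d\|E\|_F^2$:
\[
2\mathscr{F}(\sigma)=d\,\mathbb{E}_\theta\big[(\sqrt{a_\theta}-\sqrt{b_\theta})^2\big]\le\frac{d}{4m}\,\mathbb{E}_\theta\big[(\theta^\top E\theta)^2\big]=\frac{2\|E\|_F^2+(\Tr E)^2}{4m(d+2)}\le\frac{\|E\|_F^2}{4m}.
\]
In the pairing, keep the full Wiens bound $\mathbb{E}_\theta[(\theta^\top E\theta)^2]\ge 2\|E\|_F^2/(d(d+2))$. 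This gives $\|\bar A-\mathbf{I}_d\|_F\ge\|E\|_F/(M(d+2))$ and hence $\|\nabla_{W_2}\mathscr{F}(\sigma)\|_\sigma^2\ge m\|E\|_F^2/(M^2(d+2)^2)$. Therefore $\mathscr{F}(\sigma)\le\frac{(d+2)^2M^2}{8m^2}\|\nabla_{W_2}\mathscr{F}(\sigma)\|_\sigma^2$, which implies \eqref{eqcorollarydiagGaussians} since $(d+2)/8\le d/2$ and $M/m\le 1+M/m$.

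Done this way, your argument is more elementary than the paper's and never uses co-diagonalizability. The identity $\bar A-\mathbf{I}_d=d\,\mathbb{E}_\theta[(r_\theta-1)\theta\theta^\top]$, the Frobenius pairing with $E$, the bound $\Tr\big((\mathbf{I}_d-\bar A)\Sigma(\mathbf{I}_d-\bar A)\big)\ge m\|\bar A-\mathbf{I}_d\|_F^2$ and the Wiens identity all hold for any $\Sigma,\Lambda$ with spectra in $[m,M]$. The paper, by contrast, needs commuting covariances for the closed form $W_2^2=\sum_i(\sigma_i-\lambda_i)^2$. In exchange, the paper's route yields the \L{}ojasiewicz inequality of Proposition~\ref{KLGaussians} for arbitrary Gaussians, which it reuses elsewhere, and a standalone $W_2$--$SW_2$ comparison.
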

\Cref{corollarydiagGaussians} is proved by adapting \citet[Theorem 19]{chewi2020gradient}, which yields an intermediate inequality relating $\mathscr{F}(\sigma)$ and $\| \nabla_{W_2} \mathscr{F}(\sigma) \|$ for $\sigma, \mu \in \mathcal{G}_{m,M}$ (see Appendix \ref{app:proof_PL_Gaussian}). We then refine it into a PL inequality by proving that, for co-diagonalizable covariances,
\begin{equation} \label{eq:W2_vs_SW2_gauss}
    W_2^2(\rho_{\Sigma}, \rho_{\Lambda}) \leq C_d\,SW_2^2(\rho_{\Sigma}, \rho_{\Lambda}).
\end{equation}
To our knowledge, this is the first comparison between $W_2$ and $SW_2$ with polynomial dimension dependence,
instead of exponential dependence obtained in general settings, \eg, \citet[Theorem 5.1.5]{bonnotte2013unidimensional} and \citet{carlier2025sharpcomparisonsslicedstandard}. 
This result may be of independent interest for other research problems involving Gaussian distributions and the Bures-Wasserstein metric.

\paragraph{From static inequalities to iterate stability.} To use \Cref{propPoincareflatconvex} (or \Cref{corollarydiagGaussians}) in a convergence analysis, one must ensure that the iterates $(\sigma_k)_{k\ge0}$ remain in $\mathcal{P}_{\nu,m,M}(\mathbb{R}^d)$ (or $\mathcal{G}_{m,M}$)  with constants $m,M$ uniform in $k$. However, if $\sigma_k$ satisfies such bounds, propagating them to $\sigma_{k+1}$ is challenging.
Indeed, since $\sigma_{k+1} = S_{k\#}\sigma_k$, the change-of-variables formula yields
\begin{equation}\label{changeVarDens}
    \sigma_{k+1}(S_k(x))
    =
    \frac{\sigma_k(x)}{\det \Jac[S_k](x)}.
\end{equation}
Thus, propagating density bounds reduces to controlling $\det\Jac[S_k]$, which typically requires strong regularity estimates on $S_k$; see \eg,
\citet{caffarelli1992regularity,caffarelli2000monotonicity,bobkov2019one,park2025geometry}.
One possible way to circumvent this difficulty in general settings is to introduce diffusion through entropic regularization \citep{chizat2025convergence}, but this leads to a different class of distribution-matching algorithms \citep{liutkus2019sliced} and falls outside the scope of the present work. This observation motivates restricting attention to settings, such as the Gaussian case, where the relevant constants can instead be controlled through an alternative, more tractable mechanism.

\subsection{Step 3: Propagating PL constants along the trajectory in the Gaussian case}\label{SecEigenControlConvergRates}

To apply \Cref{thmPL}, it remains to verify Assumption~\ref{hyp_PL} along the slice-matching trajectory, in the Gaussian setting. 

\paragraph{Slice-matching on the Bures-Wasserstein manifold.} We begin by making the slice-matching updates explicit when matching two Gaussians. Let $\mu = \rho_\Lambda$ and for a fixed $k \in \mathbb{N}$, $\sigma_k = \rho_{\Sigma_k}$. Then, for any $\theta \in \mathbb{S}^{d-1}$, the one-dimensional projections satisfy $\sigma_k^\theta = \mathcal{N}(0, \theta^\top  \Sigma \theta)$, $\mu^\theta = \mathcal{N}(0, \theta^\top  \Lambda \theta)$, and the corresponding optimal transport map between these marginals is linear and given by 
$$
T_{\sigma_k^\theta}^{\mu^\theta}(s) = \tau_\theta s, 
\quad \text{with} \quad  \tau_\theta = \sqrt{\theta^\top  \Lambda \theta/\theta^\top  \Sigma \theta}.
$$    
For $P_{k+1} = [\theta_1, \cdots, \theta_d]$, define the diagonal matrix $D_k=\mathrm{diag}(\tau_{\theta_1},\ldots,\tau_{\theta_d})$. The resulting slice-matching map $T_{\sigma_k, P_{k+1}}$ is also linear:  $\forall x \in \RR^d, \;T_{\sigma_k, P_{k+1}}(x)=P_{k+1}D_kP_{k+1}^\top  x$. As a consequence, the iterates remain Gaussian \citep{altschuler2021averaging}, \ie, $\sigma_k=\rho_{\Sigma_k}$, with covariance matrices evolving according to the following recursion
\begin{equation}\label{expliciteUpdateSigma}
\Sigma_{k+1} =
A_k\,\Sigma_k\,A_k^\top ,
\hspace{1cm}
A_k=(1-\gamma_k)\mathbf{I}_d+\gamma_k P_{k+1}D_kP_{k+1}^\top  \,.
\end{equation}

\begin{remark}[Centered Gaussians]
If $\gamma_1=1$, the first iteration enforces equality of the means of  $\sigma_1$ and $\mu$ due to the moment matching property of slice-matching maps \citep[Proposition 3.6]{li2024approximation}. Therefore, we may assume without loss of generality that $\sigma$ and $\mu$ are centered. 
\end{remark}

\begin{remark}[Elliptically contoured distributions]
All results of this section extend beyond the Gaussian case to elliptically contoured distributions. The key structural property used throughout is the linearity of OT maps, which also holds in this broader class \citep[Theorem~2.1]{gelbrich1990formula}.
\end{remark}

\paragraph{Control of PL constants along the trajectory.} 
The convergence analysis relies on PL inequalities whose constants depend inversely on the smallest eigenvalue of the covariance matrices $\Sigma_k$. Therefore, obtaining quantitative convergence rates requires uniform (in $k$) control of $1/\lambda_{\min}(\Sigma_k)$, in expectation and with finite moments. We thus proceed in three steps:

\begin{enumerate}[(a)]
\item \underline{``Static" \L{}ojasiewicz inequalities with random constants.}
Under the trace bound $\Tr(\Sigma_k) \leq \Tr(\Lambda)$ (\Cref{prop_momentsbounded}), one has
$\lambda_{\min}(\Sigma_k) \leq \lambda_{\max}(\Lambda)$.
Consequently, \Cref{KLGaussians,corollarydiagGaussians} yield, for $\tau\in\{1,2\}$,
\begin{equation}\label{eq:loj_with_random_constant}
\mathscr{F}(\sigma_k)^{\tau}
\;\le\;
B_k\,\bigl\|\nabla_{W_2}\mathscr{F}(\sigma_k)\bigr\|_{\sigma_k}^{2},
\quad
B_k \;\lesssim\; \frac{1}{\lambda_{\min}(\Sigma_k)}.
\end{equation}
Thus, the PL constant along the trajectory is random and may deteriorate if $\lambda_{\min}(\Sigma_k)$ becomes small, which motivates a quantitative control of this quantity.

\item \underline{Recursion on $\lambda_{\min}(\Sigma_{k+1})$.} We exploit the explicit covariance update \eqref{expliciteUpdateSigma}. We show that for any $k \geq 0$, there exists a direction $\theta_i$ among the columns of $P_{k+1}$ such that (\Cref{ControlEigSigk})
\begin{equation}\label{recIneqEigenvaluesSigma_kMainbody}
\sqrt{\lambda_{\min}(\Sigma_{k+1})}
\;\ge\;
\sqrt{\lambda_{\min}(\Sigma_k)}\bigl(1-\gamma_k+\gamma_k\tau_{\theta_i}\bigr)\,,
\quad
\tau_{\theta_i}=\sqrt{\frac{\theta_i^\top\Lambda\theta_i}{\theta_i^\top\Sigma_k\theta_i}}.
\end{equation}
Since $\Sigma_0\succ 0$ and $\Lambda\succ 0$, it holds by induction that $\lambda_{\min}(\Sigma_k)> 0$ for all finite $k$. Hence, the PL inequality in \Cref{corollarydiagGaussians} is well-defined along the trajectory.

\item \underline{Moment control of $1/\lambda_{\min}(\Sigma_k)$.}
We now leverage the recursion~\eqref{recIneqEigenvaluesSigma_kMainbody} to bound $1/\lambda_{\min}(\Sigma_k)$ in expectation. 
A sufficient condition is provided by \Cref{sufficientconditionarbitrarycov}: for some $p\geq 1$,
\begin{equation}\label{eq:sufficient_condition}
\mathbb{E}\!\left[\sum_{k\ge 0}\gamma_k\,
\mathbb{E}_{\theta}\!\left[\left(\frac{\theta^\top\Sigma_k\theta}{\theta^\top\Lambda\theta}\right)^{p}-1\right]\right]<\infty.
\end{equation}
We are able to verify~\eqref{eq:sufficient_condition} in the isotropic target case $\Lambda=\mathbf{I}_d$, although our numerical experiments suggest that \eqref{eq:sufficient_condition} is verified for more general target covariances.
More precisely, for any $p\in\mathbb{N}^*$, $\sup_{k\ge 1}\mathbb{E}\!\left[\lambda_{\min}(\Sigma_k)^{-p}\right]<\infty$ when $\Lambda=\mathbf{I}_d$ (\Cref{ControlEigSigk2}).
\end{enumerate}
Combining the PL inequality \eqref{eq:loj_with_random_constant} with the above moment bounds shows that Assumption~\ref{hyp_PL} holds along the Gaussian slice-matching trajectory. Applying \Cref{thmPL} then yields the convergence rate stated in \Cref{VitesseN01}.

\section{Numerical Experiments}\label{numexp}

\subsection{Matching Gaussians}

We implement the slice-matching scheme with source $\sigma=\mathcal{N}(0,\Sigma)$ and target $\mu=\mathcal{N}(0,\mathbf{I}_d)$ to illustrate our theoretical insights from \Cref{PL_KL_ineq}. The updates are computed exactly following the explicit covariance recursion~\eqref{expliciteUpdateSigma}. We run the algorithm for different dimensions $d \in [5,100]$ and step-size schedules $\gamma_k=(k+1)^{-\alpha}$ with $\alpha\in[0,1)$. For each $(d,\alpha)$, we perform $N=10$ independent runs (independent initializations of $\Sigma$), and we track the loss $SW_2^2(\sigma_k,\mu)$ (to verify convergence) and the extreme eigenvalues $\lambda_{\min}(\Sigma_k)$, $\lambda_{\max}(\Sigma_k)$.

\paragraph{Convergence and impact of $(d, \alpha)$.} \Cref{ContinuousIDTtargetstandard} reports $SW_2^2(\sigma_k,\mu)$ as a function of the iteration $k$. For all tested dimensions $d$, the loss decreases, indicating convergence of the iterates toward the target measure. As $d$ increases, the decay becomes slower, in agreement with our theoretical results, since the constants in our bounds scale polynomially with $d$. Similarly, the extreme eigenvalues converge to 1, which confirms that $\Sigma_k$ becomes $\mathbf{I}_d$. 
\Cref{ContinuousIDTtargetstandard} also shows that smaller values of $\alpha$ (\ie, more aggressive step sizes) yield faster empirical convergence, with $\alpha\in\{0,0.1\}$ typically performing best. This behavior is not captured by our non-asymptotic analysis, derived for $\alpha > 0.5$. 
Extending the theory to values of $\alpha$ close to 0 remains an open problem.

\begin{figure}[t!]
    \centering
    {\subfigure[Setting]{\includegraphics[width=0.18\textwidth]{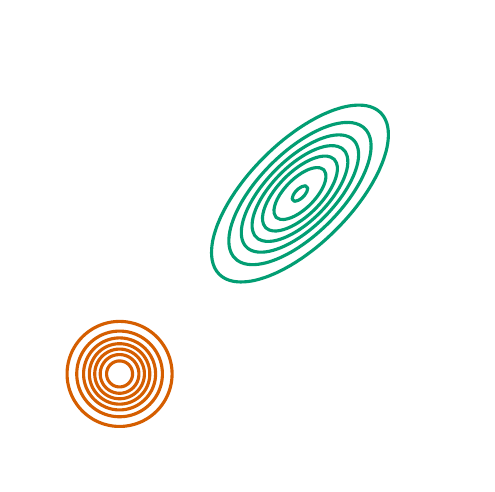} \label{Viz_BetweenGaussiansContinuous}}}
    {\subfigure[$\alpha=0$]{\includegraphics[width=0.19\textwidth]{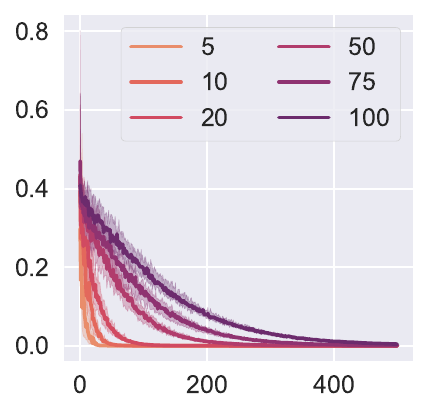} }}
    {\subfigure[$\alpha=0.1$]{\includegraphics[width=0.19\textwidth]{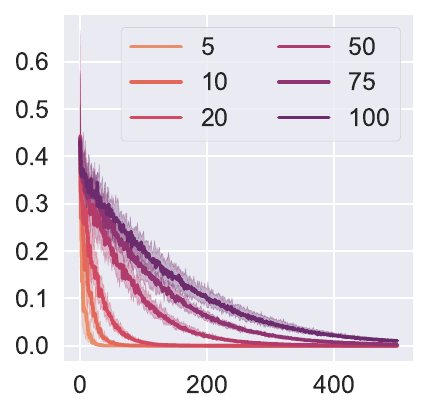} }}
    {\subfigure[$\alpha=0.51$]{\includegraphics[width=0.19\textwidth]{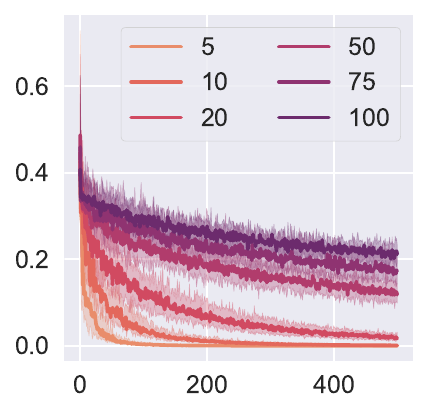} }}
    {\subfigure[$\alpha=0.9$]{\includegraphics[width=0.19\textwidth]{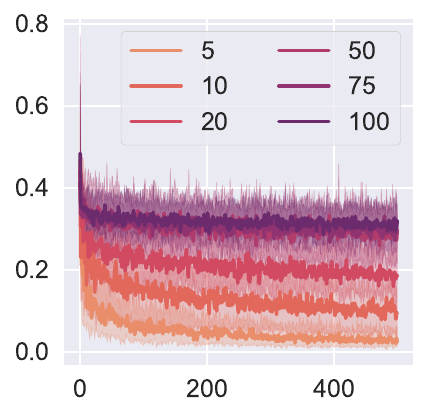}}}
    \caption{Evolution of $SW_2^2(\sigma_k, \mu)$ when $\sigma =\mathcal{N}(0,\Sigma)$ and $\mu=\mathcal{N}(0,{\bf I }_d)$}
    \label{ContinuousIDTtargetstandard}
\end{figure}
\begin{figure}[t!]
    \centering
    {\subfigure[$\alpha=0$]{\includegraphics[width=0.23\linewidth]{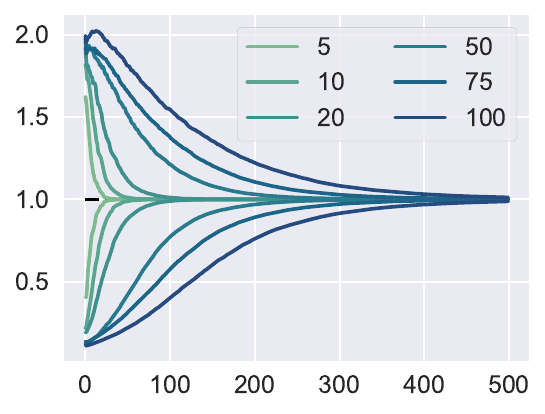} }}
    {\subfigure[$\alpha=0.1$]{\includegraphics[width=0.23\linewidth]{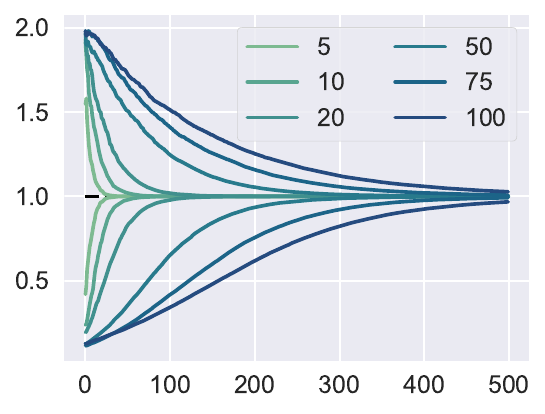} }}
    {\subfigure[$\alpha=0.51$]{\includegraphics[width=0.23\linewidth]{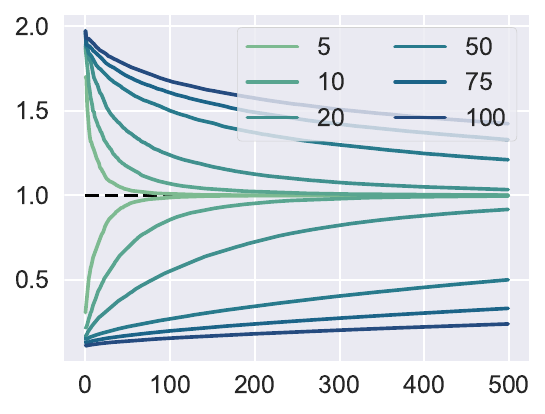} }}
    {\subfigure[$\alpha=0.9$]{\includegraphics[width=0.23\linewidth]{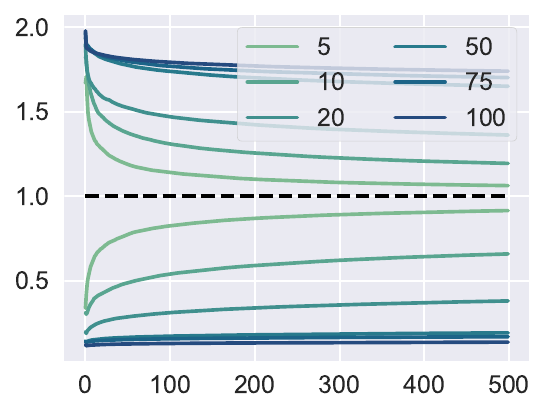} }}
    \caption{Minimum and maximum eigenvalues of $\Sigma_k$ when $\sigma =\mathcal{N}(0,\Sigma)$ and $\mu=\mathcal{N}(0,{\bf I}_d)$}
    \label{curve_eig}
\end{figure}

\paragraph{Eigenvalue control.}
A key ingredient in our proof is to control $\lambda_{\min}(\Sigma_k)$ 
along the trajectory in order to verify 
\Cref{hyp_PL}.
In the isotropic target case $\mu=\mathcal{N}(0,\mathbf{I}_d)$, the theory predicts that once the second-moment bound $\mathrm{M}_2(\sigma_k) \leq \mathrm{M}_2(\mu)$ holds, which happens from the first iteration (\Cref{prop_momentsbounded}), the eigenvalues remain uniformly bounded over $k$ (\Cref{ControlEigSigk2}). 
This behavior can be observed in \Cref{curve_eig}: the extreme eigenvalues settle in a fixed range from the first iteration. 
We emphasize that this behavior is due the moment-matching property inherent to the choice of an \textit{orthonormal basis} $P_{k+1}$ at each iteration. 
Another variant samples a single $\theta_{k+1} \in \mathbb{S}^{d-1}$ per iteration and updates only along that direction. 
The resulting extreme eigenvalues are shown in \Cref{curve_eig_COMPAR}, 
and exhibit larger fluctuations before stabilizing, 
which correlates with slower loss decay.  
The benefit of random orthonormal bases is consistent with recent work on sampling strategies in sliced OT~\citep{sisouk2025a}.

\subsection{Beyond the Gaussian-to-Gaussian Setting}

\Cref{BlobToBlob} considers discrete empirical distributions of $n=500$ samples. 
In each run, the source and target are sampled from a Gaussian mixture with randomly-generated mixture components. 
We plot $SW_2^2(\sigma_k,\mu)$ over iterations for $N=10$ independent runs, across the same dimensions and step-size schedules as in the Gaussian setting. 
We observe the same trends:
the loss decreases for all $d$, convergence slows down as $d$ increases, and smaller values of $\alpha$ typically yield faster convergence. It is worth noting that $\alpha=0.1$ outperforms $\alpha=0$ in our experiments, which illustrates the interest of slice-matching algorithm (where $(\gamma_k)$ is decaying) over IDT (where $\gamma_k = 1$).
We provide additional experiments on empirical measures 
in Appendix~\ref{OtherNumExp}. 
While these discrete settings are not covered by our theory, the empirical convergence suggests that regularity may hold more broadly, despite identified technical issues \citep{tanguy2025properties,vauthier2025properties}.


\begin{figure}[t!]
    \centering
    {\subfigure[$\alpha=0$]{\includegraphics[width=0.2\textwidth]{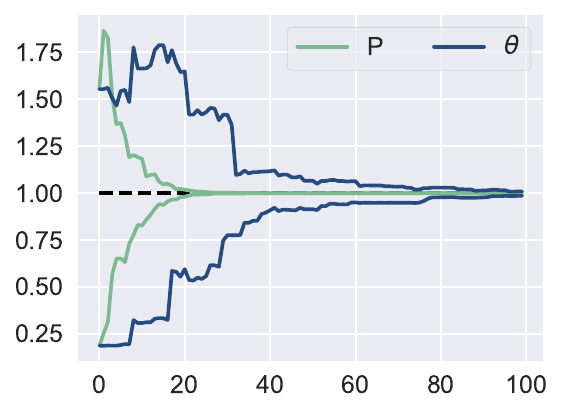} }}
    {\subfigure[$\alpha=0.1$]{\includegraphics[width=0.2\textwidth]{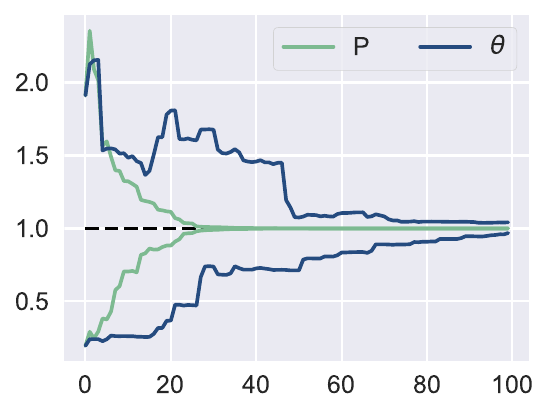} }}
    {\subfigure[$\alpha=0.51$]{\includegraphics[width=0.2\textwidth]{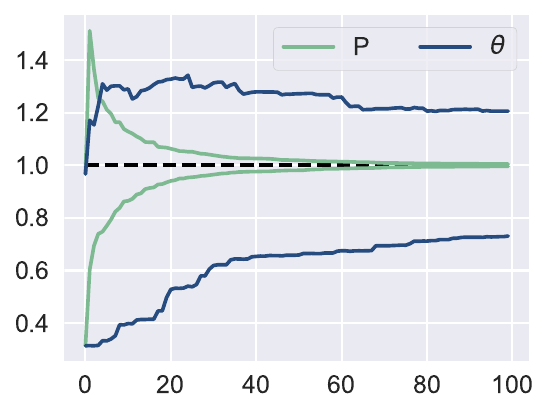} }}
    {\subfigure[$\alpha=0.9$]{\includegraphics[width=0.2\textwidth]{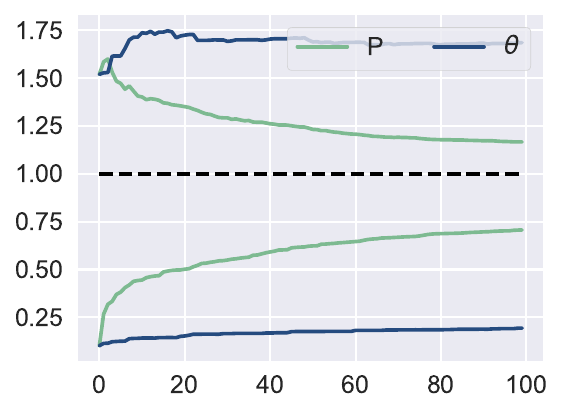} }}
    \caption{Comparison of sampling strategies: single direction $\theta_{k+1}$ or orthonormal basis $P_{k+1}$. We report $\lambda_{\min}(\Sigma_k)$ and $\lambda_{\max}(\Sigma_k)$ with $\sigma =\mathcal{N}(0,\Sigma)$, $\mu=\mathcal{N}(0,{\bf I}_d)$, $d = 5$.}
    \label{curve_eig_COMPAR}
\end{figure}

\begin{figure}[t!]
    \centering
    {\subfigure[Setting]{\includegraphics[width=0.18\textwidth]{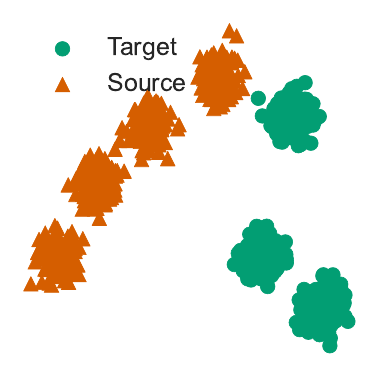} }}
    {\subfigure[$\alpha = 0$]{\includegraphics[width=0.19\textwidth]{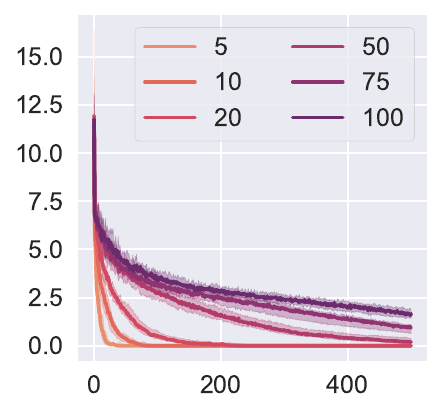} }}
    {\subfigure[$\alpha = 0.1$]{\includegraphics[width=0.183\textwidth]{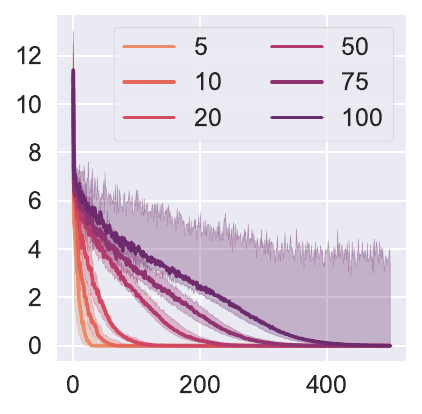}}}
    {\subfigure[$\alpha = 0.51$]{\includegraphics[width=0.19\textwidth]{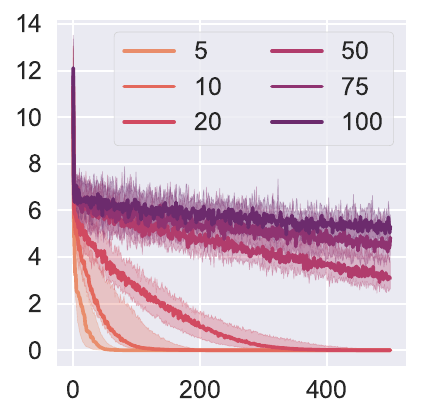}}}
    {\subfigure[$\alpha = 0.9$]{\includegraphics[width=0.183\textwidth]{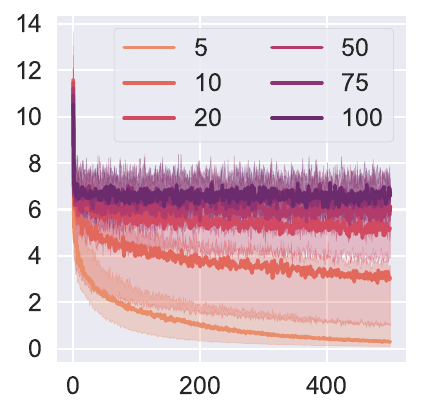}}}
    \caption{Evolution of $SW_2^2(\sigma_k, \mu)$ for discrete source and target distributions. 
    The source and target samples are distributed from Gaussian mixtures.}
    \label{BlobToBlob}
\end{figure}

\section{Conclusion and Perspectives}

We established convergence rates for the slice-matching algorithm 
through
\L{}ojasiewicz-type inequalities for the Sliced-Wasserstein objective.
We show that controlling the associated constants is tractable in the Gaussian (or elliptic) setting when sampling random orthonormal bases. 
A main limitation is that our explicit rate requires an isotropic Gaussian target, similarly to~\cite{cozzi2025}. 
Extending the theory to general Gaussian targets and non-elliptic distributions remains open. A promising direction is to introduce regularization 
(\eg, diffusive terms)
to help maintain regularity along the dynamics~\citep{liutkus2019sliced,tanguy2025properties,chizat2025convergence}. 
Finally, our experiments show faster convergence with 
orthonormal bases of directions and step-size schedules $\gamma_k=1/(k+1)^{\alpha}$ with small $\alpha$. 
This behavior is not explained
by our theorems and may require tools beyond decreasing-step stochastic approximation, for example Markov chains~\citep{dieuleveut2020bridging}.

\bibliography{bib}

\begin{thebibliography}{77}
\providecommand{\natexlab}[1]{#1}
\providecommand{\url}[1]{\texttt{#1}}
\expandafter\ifx\csname urlstyle\endcsname\relax
  \providecommand{\doi}[1]{doi: #1}\else
  \providecommand{\doi}{doi: \begingroup \urlstyle{rm}\Url}\fi

\bibitem[Agueh and Carlier(2011)]{agueh2011barycenters}
Martial Agueh and Guillaume Carlier.
\newblock Barycenters in the wasserstein space.
\newblock \emph{SIAM Journal on Mathematical Analysis}, 43\penalty0
  (2):\penalty0 904--924, 2011.

\bibitem[Albergo et~al.(2025)Albergo, Boffi, and Vanden-Eijnden]{albergo2025}
Michael Albergo, Nicholas~M. Boffi, and Eric Vanden-Eijnden.
\newblock Stochastic interpolants: A unifying framework for flows and
  diffusions.
\newblock \emph{Journal of Machine Learning Research}, 26\penalty0
  (209):\penalty0 1--80, 2025.

\bibitem[Altschuler et~al.(2021)Altschuler, Chewi, Gerber, and
  Stromme]{altschuler2021averaging}
Jason Altschuler, Sinho Chewi, Patrik~R Gerber, and Austin Stromme.
\newblock Averaging on the bures-wasserstein manifold: dimension-free
  convergence of gradient descent.
\newblock \emph{Advances in Neural Information Processing Systems},
  34:\penalty0 22132--22145, 2021.

\bibitem[Ambrosio and Savar{\'e}(2007)]{ambrosio2007gradient}
Luigi Ambrosio and Giuseppe Savar{\'e}.
\newblock Gradient flows of probability measures.
\newblock In \emph{Handbook of differential equations: evolutionary equations},
  volume~3, pages 1--136. Elsevier, 2007.

\bibitem[Attouch et~al.(2010)Attouch, Bolte, Redont, and
  Soubeyran]{attouch2010proximal}
H{\'e}dy Attouch, J{\'e}r{\^o}me Bolte, Patrick Redont, and Antoine Soubeyran.
\newblock Proximal alternating minimization and projection methods for
  nonconvex problems: An approach based on the kurdyka-{\l}ojasiewicz
  inequality.
\newblock \emph{Mathematics of operations research}, 35\penalty0 (2):\penalty0
  438--457, 2010.

\bibitem[Bercu and Bigot(2021)]{BBAOS}
Bernard Bercu and J{\'e}r{\'e}mie Bigot.
\newblock {Asymptotic distribution and convergence rates of stochastic
  algorithms for entropic optimal transportation between probability measures}.
\newblock \emph{The Annals of Statistics}, 49\penalty0 (2):\penalty0 968 --
  987, 2021.
\newblock \doi{10.1214/20-AOS1987}.

\bibitem[Bobkov and Ledoux(2019)]{bobkov2019one}
Sergey Bobkov and Michel Ledoux.
\newblock \emph{One-dimensional empirical measures, order statistics, and
  Kantorovich transport distances}, volume 261.
\newblock American Mathematical Society, 2019.

\bibitem[Bonet et~al.(2024)Bonet, Uscidda, David, Aubin-Frankowski, and
  Korba]{bonetmirror}
Cl{\'e}ment Bonet, Th{\'e}o Uscidda, Adam David, Pierre-Cyril Aubin-Frankowski,
  and Anna Korba.
\newblock Mirror and preconditioned gradient descent in wasserstein space.
\newblock In \emph{The Thirty-eighth Annual Conference on Neural Information
  Processing Systems}, 2024.

\bibitem[Bonneel et~al.(2015)Bonneel, Rabin, Peyr{\'e}, and
  Pfister]{bonneel2015sliced}
Nicolas Bonneel, Julien Rabin, Gabriel Peyr{\'e}, and Hanspeter Pfister.
\newblock Sliced and radon wasserstein barycenters of measures.
\newblock \emph{Journal of Mathematical Imaging and Vision}, 51\penalty0
  (1):\penalty0 22--45, 2015.

\bibitem[Bonnet(2019)]{bonnet2019pontryagin}
Beno{\^\i}t Bonnet.
\newblock A pontryagin maximum principle in wasserstein spaces for constrained
  optimal control problems.
\newblock \emph{ESAIM: Control, Optimisation and Calculus of Variations},
  25:\penalty0 52, 2019.

\bibitem[Bonnotte(2013)]{bonnotte2013unidimensional}
Nicolas Bonnotte.
\newblock \emph{Unidimensional and evolution methods for optimal
  transportation}.
\newblock PhD thesis, Universit{\'e} Paris Sud-Paris XI; Scuola normale
  superiore (Pise, Italie), 2013.

\bibitem[Bottou et~al.(2018)Bottou, Curtis, and
  Nocedal]{bottou2018optimization}
L{\'e}on Bottou, Frank~E Curtis, and Jorge Nocedal.
\newblock Optimization methods for large-scale machine learning.
\newblock \emph{SIAM review}, 60\penalty0 (2):\penalty0 223--311, 2018.

\bibitem[Brenier(1991)]{brenier1991polar}
Yann Brenier.
\newblock Polar factorization and monotone rearrangement of vector-valued
  functions.
\newblock \emph{Communications on pure and applied mathematics}, 44\penalty0
  (4):\penalty0 375--417, 1991.

\bibitem[Caffarelli(1992)]{caffarelli1992regularity}
Luis~A Caffarelli.
\newblock The regularity of mappings with a convex potential.
\newblock \emph{Journal of the American Mathematical Society}, 5\penalty0
  (1):\penalty0 99--104, 1992.

\bibitem[Caffarelli(2000)]{caffarelli2000monotonicity}
Luis~A Caffarelli.
\newblock Monotonicity properties of optimal transportation and the fkg and
  related inequalities.
\newblock \emph{Communications in Mathematical Physics}, 214\penalty0
  (3):\penalty0 547--563, 2000.

\bibitem[Carlier et~al.(2025)Carlier, Figalli, Mérigot, and
  Wang]{carlier2025sharpcomparisonsslicedstandard}
Guillaume Carlier, Alessio Figalli, Quentin Mérigot, and Yi~Wang.
\newblock {Sharp comparisons between sliced and standard $1$-Wasserstein
  distances}, 2025.

\bibitem[Chewi et~al.(2020)Chewi, Maunu, Rigollet, and
  Stromme]{chewi2020gradient}
Sinho Chewi, Tyler Maunu, Philippe Rigollet, and Austin~J Stromme.
\newblock {Gradient descent algorithms for Bures-Wasserstein barycenters}.
\newblock In \emph{Conference on Learning Theory}, 2020.

\bibitem[Chewi et~al.(2024)Chewi, Niles-Weed, and
  Rigollet]{chewi2024statistical}
Sinho Chewi, Jonathan Niles-Weed, and Philippe Rigollet.
\newblock Statistical optimal transport.
\newblock \emph{arXiv:2407.18163}, 3, 2024.

\bibitem[Chizat et~al.(2025)Chizat, Colombo, and
  Fern{\'a}ndez-Real]{chizat2025convergence}
L{\'e}na{\"\i}c Chizat, Maria Colombo, and Xavier Fern{\'a}ndez-Real.
\newblock Convergence of drift-diffusion pdes arising as wasserstein gradient
  flows of convex functions.
\newblock \emph{arXiv:2507.12385}, 2025.

\bibitem[Chung(1954)]{chung1954stochastic}
Kai~Lai Chung.
\newblock On a stochastic approximation method.
\newblock \emph{The Annals of Mathematical Statistics}, pages 463--483, 1954.

\bibitem[Coeurdoux et~al.(2022)Coeurdoux, Dobigeon, and
  Chainais]{coeurdoux2022}
F.~Coeurdoux, N.~Dobigeon, and P.~Chainais.
\newblock Sliced-wasserstein normalizing flows: beyond maximum likelihood
  training.
\newblock In \emph{Proc. European Symposium on Artificial Neural Networks,
  Computational Intelligence and Machine Learning (ESANN)}, Bruges, Belgium,
  Oct. 2022.

\bibitem[Courty et~al.(2016)Courty, Flamary, Tuia, and
  Rakotomamonjy]{courty2016optimal}
Nicolas Courty, R{\'e}mi Flamary, Devis Tuia, and Alain Rakotomamonjy.
\newblock Optimal transport for domain adaptation.
\newblock \emph{IEEE transactions on pattern analysis and machine
  intelligence}, 2016.

\bibitem[Cozzi and Santambrogio(2025)]{cozzi2025}
Giacomo Cozzi and Filippo Santambrogio.
\newblock Long-time asymptotics of the sliced-wasserstein flow.
\newblock \emph{SIAM Journal on Imaging Sciences}, 18\penalty0 (1):\penalty0
  1--19, 2025.
\newblock \doi{10.1137/24M1656414}.

\bibitem[Cuesta and Matr{\'a}n(1989)]{cuesta1989notes}
Juan~Antonio Cuesta and Carlos Matr{\'a}n.
\newblock Notes on the wasserstein metric in hilbert spaces.
\newblock \emph{The Annals of Probability}, pages 1264--1276, 1989.

\bibitem[Dai and Seljak(2021)]{dai21a}
Biwei Dai and Uros Seljak.
\newblock Sliced iterative normalizing flows.
\newblock In Marina Meila and Tong Zhang, editors, \emph{Proceedings of the
  38th International Conference on Machine Learning}, volume 139 of
  \emph{Proceedings of Machine Learning Research}, pages 2352--2364. PMLR,
  18--24 Jul 2021.

\bibitem[Deshpande et~al.(2019)Deshpande, Hu, Sun, Pyrros, Siddiqui, Koyejo,
  Zhao, Forsyth, and Schwing]{deshpande2019max}
I.~Deshpande, Y.-T. Hu, R.~Sun, A.~Pyrros, N.~Siddiqui, S.~Koyejo, Z.~Zhao,
  D.~Forsyth, and A.~Schwing.
\newblock Max-sliced wasserstein distance and its use for gans.
\newblock In \emph{IEEE/CVF CVPR}, 2019.

\bibitem[Dieuleveut et~al.(2020)Dieuleveut, Durmus, and
  Bach]{dieuleveut2020bridging}
Aymeric Dieuleveut, Alain Durmus, and Francis Bach.
\newblock Bridging the gap between constant step size stochastic gradient
  descent and markov chains.
\newblock \emph{The Annals of Statistics}, 48\penalty0 (3):\penalty0
  1348--1382, 2020.

\bibitem[Dossal et~al.(2024)Dossal, Hurault, and
  Papadakis]{dossal2024optimization}
Charles Dossal, Samuel Hurault, and Nicolas Papadakis.
\newblock Optimization with first order algorithms.
\newblock \emph{arXiv:2410.19506}, 2024.

\bibitem[Du et~al.(2023)Du, Li, Pang, Yan, and Lin]{du2023}
Chao Du, Tianbo Li, Tianyu Pang, Shuicheng Yan, and Min Lin.
\newblock Nonparametric generative modeling with conditional
  sliced-{W}asserstein flows.
\newblock In \emph{Proceedings of the 40th International Conference on Machine
  Learning}, volume 202 of \emph{Proceedings of Machine Learning Research},
  pages 8565--8584. PMLR, 23--29 Jul 2023.

\bibitem[Duflo(1996)]{duflo1996algorithmes}
Marie Duflo.
\newblock \emph{Algorithmes stochastiques}, volume~23.
\newblock Springer, 1996.

\bibitem[Garrigos and Gower(2023)]{garrigos2023handbook}
Guillaume Garrigos and Robert~M Gower.
\newblock Handbook of convergence theorems for (stochastic) gradient methods.
\newblock \emph{arXiv:2301.11235}, 2023.

\bibitem[Gelbrich(1990)]{gelbrich1990formula}
Matthias Gelbrich.
\newblock On a formula for the l2 wasserstein metric between measures on
  euclidean and hilbert spaces.
\newblock \emph{Mathematische Nachrichten}, 147\penalty0 (1):\penalty0
  185--203, 1990.

\bibitem[Ghadimi and Lan(2013)]{ghadimi2013stochastic}
Saeed Ghadimi and Guanghui Lan.
\newblock Stochastic first-and zeroth-order methods for nonconvex stochastic
  programming.
\newblock \emph{SIAM journal on optimization}, 23\penalty0 (4):\penalty0
  2341--2368, 2013.

\bibitem[Godichon-Baggioni(2019)]{godichon2019lp}
Antoine Godichon-Baggioni.
\newblock Lp and almost sure rates of convergence of averaged stochastic
  gradient algorithms: locally strongly convex objective.
\newblock \emph{ESAIM: Probability and Statistics}, 23:\penalty0 841--873,
  2019.

\bibitem[Grenioux et~al.(2023)Grenioux, Oliviero~Durmus, Moulines, and
  Gabri\'{e}]{pmlr-v202-grenioux23a}
Louis Grenioux, Alain Oliviero~Durmus, Eric Moulines, and Marylou Gabri\'{e}.
\newblock On sampling with approximate transport maps.
\newblock In \emph{Proceedings of the 40th International Conference on Machine
  Learning}, 2023.

\bibitem[Hütter and Rigollet(2021)]{HutterMinimax2021}
Jan-Christian Hütter and Philippe Rigollet.
\newblock Minimax estimation of smooth optimal transport maps.
\newblock \emph{The Annals of Statistics}, 49\penalty0 (2):\penalty0
  1166--1194, 2021.

\bibitem[Irons et~al.(2022)Irons, Scetbon, Pal, and
  Harchaoui]{irons2022triangular}
Nicholas~J Irons, Meyer Scetbon, Soumik Pal, and Zaid Harchaoui.
\newblock Triangular flows for generative modeling: Statistical consistency,
  smoothness classes, and fast rates.
\newblock In \emph{International Conference on Artificial Intelligence and
  Statistics}, pages 10161--10195. PMLR, 2022.

\bibitem[Jiang et~al.(2024)Jiang, Li, Milzarek, and
  Qiu]{jiang2024generalizedversionchungslemma}
Li~Jiang, Xiao Li, Andre Milzarek, and Junwen Qiu.
\newblock {A Generalized Version of Chung's Lemma and its Applications}, 2024.

\bibitem[Kassraie et~al.(2024)Kassraie, Pooladian, Klein, Thornton, Niles-Weed,
  and Cuturi]{kassraie2024progressive}
Parnian Kassraie, Aram-Alexandre Pooladian, Michal Klein, James Thornton,
  Jonathan Niles-Weed, and Marco Cuturi.
\newblock Progressive entropic optimal transport solvers.
\newblock \emph{Advances in Neural Information Processing Systems},
  37:\penalty0 19561--19590, 2024.

\bibitem[Khaled and Richt{\'a}rik(2023)]{khaled2022better}
Ahmed Khaled and Peter Richt{\'a}rik.
\newblock Better theory for {SGD} in the nonconvex world.
\newblock \emph{Transactions on Machine Learning Research}, 2023.
\newblock ISSN 2835-8856.
\newblock Survey Certification.

\bibitem[Kitagawa and Takatsu(2024)]{kitagawa2024}
Jun Kitagawa and Asuka Takatsu.
\newblock Sliced optimal transport: is it a suitable replacement?, 2024.

\bibitem[Kloeckner(2010)]{kloeckner2010geometric}
Beno{\^\i}t Kloeckner.
\newblock A geometric study of wasserstein spaces: Euclidean spaces.
\newblock \emph{Annali della Scuola Normale Superiore di Pisa-Classe di
  Scienze}, 9\penalty0 (2):\penalty0 297--323, 2010.

\bibitem[Kolouri et~al.(2018)Kolouri, Pope, Martin, and
  Rohde]{kolouri2018slicedae}
Soheil Kolouri, Phillip~E Pope, Charles~E Martin, and Gustavo~K Rohde.
\newblock Sliced wasserstein auto-encoders.
\newblock In \emph{International Conference on Learning Representations}, 2018.

\bibitem[Kurdyka et~al.(2000)Kurdyka, Mostowski, and
  Parusi{\'n}ski]{kurdyka2000proof}
Krzysztof Kurdyka, Tadeusz Mostowski, and Adam Parusi{\'n}ski.
\newblock Proof of the gradient conjecture of r. thom.
\newblock \emph{Annals of Mathematics}, pages 763--792, 2000.

\bibitem[Lanzetti et~al.(2025)Lanzetti, Bolognani, and
  D{\"o}rfler]{lanzetti2025first}
Nicolas Lanzetti, Saverio Bolognani, and Florian D{\"o}rfler.
\newblock First-order conditions for optimization in the wasserstein space.
\newblock \emph{SIAM Journal on Mathematics of Data Science}, 7\penalty0
  (1):\penalty0 274--300, 2025.

\bibitem[Li and Moosm{\"u}ller(2024)]{li2024approximation}
Shiying Li and Caroline Moosm{\"u}ller.
\newblock Approximation properties of slice-matching operators.
\newblock \emph{Sampling Theory, Signal Processing, and Data Analysis},
  22\penalty0 (1):\penalty0 15, 2024.

\bibitem[Li et~al.(2023)Li, Moosmueller, and Wang]{li2023measure}
Shiying Li, Caroline Moosmueller, and Yongzhe Wang.
\newblock Measure transfer via stochastic slicing and matching.
\newblock \emph{arXiv:2307.05705}, 2023.

\bibitem[Liu et~al.(2025)Liu, Martin, Bai, Shahbazi, Thorpe, Aldroubi, and
  Kolouri]{liu2025expected}
Xinran Liu, Rocio~Diaz Martin, Yikun Bai, Ashkan Shahbazi, Matthew Thorpe,
  Akram Aldroubi, and Soheil Kolouri.
\newblock Expected sliced transport plans.
\newblock In \emph{The Thirteenth International Conference on Learning
  Representations}, 2025.

\bibitem[Liutkus et~al.(2019)Liutkus, Simsekli, Majewski, Durmus, and
  St{\"o}ter]{liutkus2019sliced}
Antoine Liutkus, Umut Simsekli, Szymon Majewski, Alain Durmus, and
  Fabian-Robert St{\"o}ter.
\newblock Sliced-wasserstein flows: Nonparametric generative modeling via
  optimal transport and diffusions.
\newblock In \emph{International Conference on machine learning}, 2019.

\bibitem[Ma(2023)]{ma2023absolute}
Jianyu Ma.
\newblock Absolute continuity of wasserstein barycenters on manifolds with a
  lower ricci curvature bound.
\newblock \emph{arXiv:2310.13832}, 2023.

\bibitem[Mahey et~al.(2023)Mahey, Chapel, Gasso, Bonet, and
  Courty]{mahey2023fast}
Guillaume Mahey, Laetitia Chapel, Gilles Gasso, Cl{\'e}ment Bonet, and Nicolas
  Courty.
\newblock {Fast Optimal Transport through Sliced Generalized Wasserstein
  Geodesics}.
\newblock In \emph{Thirty-seventh Conference on Neural Information Processing
  Systems}, 2023.

\bibitem[Manole et~al.(2022)Manole, Balakrishnan, and Wasserman]{manole2022}
Tudor Manole, Sivaraman Balakrishnan, and Larry Wasserman.
\newblock {Minimax confidence intervals for the Sliced Wasserstein distance}.
\newblock \emph{Electronic Journal of Statistics}, 16\penalty0 (1):\penalty0
  2252 -- 2345, 2022.
\newblock \doi{10.1214/22-EJS2001}.

\bibitem[Marzouk et~al.(2016)Marzouk, Moselhy, Parno, and
  Spantini]{marzouk2016introduction}
Youssef Marzouk, Tarek Moselhy, Matthew Parno, and Alessio Spantini.
\newblock \emph{Sampling via Measure Transport: An Introduction}.
\newblock Springer International Publishing, 2016.

\bibitem[McCann(1997)]{mccann1997convexity}
Robert~J McCann.
\newblock A convexity principle for interacting gases.
\newblock \emph{Advances in mathematics}, 128\penalty0 (1):\penalty0 153--179,
  1997.

\bibitem[Moulines and Bach(2011)]{moulines2011non}
Eric Moulines and Francis Bach.
\newblock Non-asymptotic analysis of stochastic approximation algorithms for
  machine learning.
\newblock \emph{Advances in neural information processing systems}, 24, 2011.

\bibitem[Nadjahi et~al.(2019)Nadjahi, Durmus, Simsekli, and
  Badeau]{nadjahi2019asymptotic}
Kimia Nadjahi, Alain Durmus, Umut Simsekli, and Roland Badeau.
\newblock Asymptotic guarantees for learning generative models with the
  sliced-wasserstein distance.
\newblock \emph{Advances in Neural Information Processing Systems}, 32, 2019.

\bibitem[Nadjahi et~al.(2020)Nadjahi, Durmus, Chizat, Kolouri, Shahrampour, and
  Simsekli]{nadjahi2020statistical}
Kimia Nadjahi, Alain Durmus, L{\'e}na{\"\i}c Chizat, Soheil Kolouri, Shahin
  Shahrampour, and Umut Simsekli.
\newblock Statistical and topological properties of sliced probability
  divergences.
\newblock \emph{Advances in Neural Information Processing Systems},
  33:\penalty0 20802--20812, 2020.

\bibitem[Ostrowski(1959)]{ostrowski1959quantitative}
Alexander~M Ostrowski.
\newblock A quantitative formulation of sylvester's law of inertia.
\newblock \emph{Proceedings of the National Academy of Sciences}, 45\penalty0
  (5):\penalty0 740--744, 1959.

\bibitem[Park and Slep{\v{c}}ev(2025)]{park2025geometry}
Sangmin Park and Dejan Slep{\v{c}}ev.
\newblock Geometry and analytic properties of the sliced wasserstein space.
\newblock \emph{Journal of Functional Analysis}, 289\penalty0 (7):\penalty0
  110975, 2025.

\bibitem[Peyr{\'e} et~al.(2019)Peyr{\'e}, Cuturi,
  et~al.]{peyre2019computational}
Gabriel Peyr{\'e}, Marco Cuturi, et~al.
\newblock Computational optimal transport: With applications to data science.
\newblock \emph{Foundations and Trends{\textregistered} in Machine Learning},
  11\penalty0 (5-6):\penalty0 355--607, 2019.

\bibitem[Piti{\'e} et~al.(2007)Piti{\'e}, Kokaram, and
  Dahyot]{pitie2007automated}
Fran{\c{c}}ois Piti{\'e}, Anil~C Kokaram, and Rozenn Dahyot.
\newblock Automated colour grading using colour distribution transfer.
\newblock \emph{Computer Vision and Image Understanding}, 107\penalty0
  (1-2):\penalty0 123--137, 2007.

\bibitem[Rabin et~al.(2011)Rabin, Peyr{\'e}, Delon, and
  Bernot]{rabin2011wasserstein}
Julien Rabin, Gabriel Peyr{\'e}, Julie Delon, and Marc Bernot.
\newblock Wasserstein barycenter and its application to texture mixing.
\newblock In \emph{International conference on scale space and variational
  methods in computer vision}, pages 435--446. Springer, 2011.

\bibitem[Rabin et~al.(2012)Rabin, Peyr{\'e}, Delon, and Bernot]{rabin2012}
Julien Rabin, Gabriel Peyr{\'e}, Julie Delon, and Marc Bernot.
\newblock Wasserstein barycenter and its application to texture mixing.
\newblock In Alfred~M. Bruckstein, Bart~M. ter Haar~Romeny, Alexander~M.
  Bronstein, and Michael~M. Bronstein, editors, \emph{Scale Space and
  Variational Methods in Computer Vision}, pages 435--446, Berlin, Heidelberg,
  2012. Springer Berlin Heidelberg.
\newblock ISBN 978-3-642-24785-9.

\bibitem[Robbins and Siegmund(1971)]{robbins1971convergence}
Herbert Robbins and David Siegmund.
\newblock A convergence theorem for non negative almost supermartingales and
  some applications.
\newblock In \emph{Optimizing methods in statistics}, pages 233--257. Elsevier,
  1971.

\bibitem[Rockafellar(1970)]{rockafellar-1970a}
R.~Tyrrell Rockafellar.
\newblock \emph{Convex analysis}.
\newblock Princeton University Press, 1970.

\bibitem[Santambrogio(2015)]{santambrogio2015optimal}
Filippo Santambrogio.
\newblock \emph{Optimal transport for applied mathematicians}, volume~87.
\newblock Springer, 2015.

\bibitem[Sisouk et~al.(2025)Sisouk, Delon, and Tierny]{sisouk2025a}
Keanu Sisouk, Julie Delon, and Julien Tierny.
\newblock {A User's Guide to Sampling Strategies for Sliced Optimal Transport}.
\newblock \emph{Transactions on Machine Learning Research}, 2025.
\newblock ISSN 2835-8856.
\newblock Survey Certification.

\bibitem[Song et~al.(2021)Song, Sohl-Dickstein, Kingma, Kumar, Ermon, and
  Poole]{song2021scorebased}
Yang Song, Jascha Sohl-Dickstein, Diederik~P Kingma, Abhishek Kumar, Stefano
  Ermon, and Ben Poole.
\newblock Score-based generative modeling through stochastic differential
  equations.
\newblock In \emph{International Conference on Learning Representations}, 2021.

\bibitem[Tanguy(2023)]{tanguy2023convergence}
Eloi Tanguy.
\newblock Convergence of {SGD} for training neural networks with sliced
  wasserstein losses.
\newblock \emph{Transactions on Machine Learning Research}, 2023.
\newblock ISSN 2835-8856.

\bibitem[Tanguy et~al.(2024)Tanguy, Flamary, and
  Delon]{tanguy2024reconstructing}
Eloi Tanguy, R{\'e}mi Flamary, and Julie Delon.
\newblock Reconstructing discrete measures from projections. consequences on
  the empirical sliced wasserstein distance.
\newblock \emph{Comptes Rendus. Math{\'e}matique}, 362\penalty0 (G10):\penalty0
  1121--1129, 2024.

\bibitem[Tanguy et~al.(2025)Tanguy, Flamary, and Delon]{tanguy2025properties}
Eloi Tanguy, R{\'e}mi Flamary, and Julie Delon.
\newblock {Properties of discrete sliced Wasserstein losses}.
\newblock \emph{Mathematics of Computation}, 94\penalty0 (353):\penalty0
  1411--1465, 2025.

\bibitem[Vauthier et~al.(2025)Vauthier, M{\'e}rigot, and
  Korba]{vauthier2025properties}
Christophe Vauthier, Quentin M{\'e}rigot, and Anna Korba.
\newblock {Properties of Wasserstein Gradient Flows for the Sliced-Wasserstein
  Distance}.
\newblock \emph{arXiv:2502.06525}, 2025.

\bibitem[Villani(2008)]{villani2008optimal}
C{\'e}dric Villani.
\newblock \emph{Optimal transport: old and new}, volume 338.
\newblock Springer, 2008.

\bibitem[Wang and Marzouk(2022)]{wang2022minimax}
Sven Wang and Youssef Marzouk.
\newblock On minimax density estimation via measure transport.
\newblock \emph{arXiv:2207.10231}, 2022.

\bibitem[Wiens(1992)]{wiens1992moments}
Douglas~P Wiens.
\newblock On moments of quadratic forms in non-spherically distributed
  variables.
\newblock \emph{Statistics}, 23\penalty0 (3):\penalty0 265--270, 1992.

\bibitem[Wu et~al.(2019)Wu, Huang, Acharya, Li, Thoma, Paudel, and
  Gool]{wu2019sliced}
Jiqing Wu, Zhiwu Huang, Dinesh Acharya, Wen Li, Janine Thoma, Danda~Pani
  Paudel, and Luc~Van Gool.
\newblock Sliced wasserstein generative models.
\newblock In \emph{Proceedings of the IEEE/CVF Conference on Computer Vision
  and Pattern Recognition}, 2019.

\bibitem[Zhou(2018)]{zhou2018fenchel}
Xingyu Zhou.
\newblock On the fenchel duality between strong convexity and lipschitz
  continuous gradient.
\newblock \emph{arXiv:1803.06573}, 2018.

\end{thebibliography}

\newpage 
\appendix

\section{Reminders on Wasserstein space}\label{app:remindWassSpace}

This appendix gathers existing results useful for optimization over the space of probability distributions. For further details, we refer the interesting reader to the classical references \cite{ambrosio2007gradient,santambrogio2015optimal}. 
First, recall that, for $\psi^c(y) = \inf_x \{ \frac{1}{2}\Vert x-y\Vert^2 - \psi(x)\}$ the $c$-transform of $\psi$, 
the dual of Kantorovich OT problem writes
\begin{equation}\label{dualKanto}
    W_2^2(\alpha,\beta) = \sup_{\psi\in L^1(\alpha)} \int \psi \rmd\alpha + \int \psi^c \rmd\beta.
\end{equation}
The solution of the latter is called the Kantorovich potential, and it is unique (up to translations) under finiteness of second-order moments, with $\alpha$ giving no mass to $d-1$ surfaces \citep[Theorem 1.22]{santambrogio2015optimal}. 

\subsection{Curves and convexity in Wasserstein space}\label{app:curves}

The Wasserstein space $(\mathcal{P}_2(\mathbb{R}^d),W_2)$ is the space of square-integrable probability distributions endowed with the Wasserstein distance $W_2$. 
A first way to construct an absolutely continuous curve between two measures $\sigma_0$ and $\sigma_1$ is the flat interpolation, given, for $t\in[0,1]$, by
\begin{equation}\label{FlatInterpolCurve}
\sigma_t = (1-t)\sigma_0 + t \sigma_1. 
\end{equation}
This convex combination between densities ignores the geometry induced by the Wasserstein distance.  
In contrast, denoting by $T_{\sigma_0}^{\sigma_1}$ the OT map from $\sigma_0$ to $\sigma_1$, another interpolation is given by
\begin{equation}\label{GeodesicCurve}
\sigma_t = \big( (1-t) \Id + t T_{\sigma_0}^{\sigma_1}\big)_\sharp \sigma_0. 
\end{equation}
Due to the fact that $(1-t) \Id + t T_{\sigma_0}^{\sigma_1}$ is the gradient of a convex function, it is the solution of Monge OT problem \citep{brenier1991polar,cuesta1989notes}. 
Hence, $\sigma_t$ corresponds to the shortest path between $\sigma_0$ and $\sigma_1$, in the sense that
$$
\forall \, 0 \leq s \leq t \leq 1, \hspace{1cm} W_2 (\sigma_s, \sigma_t) = (t-s) W_2 (\sigma_0,\sigma_1). 
$$
While \eqref{FlatInterpolCurve} corresponds to a mixture model between $\sigma_0$ and $\sigma_1$, the interpolant \eqref{GeodesicCurve} is more of a barycenter \citep{agueh2011barycenters,rabin2011wasserstein} and it is a building block for gradient flows in the Wasserstein space \citep{ambrosio2007gradient}. 
Interestingly enough, $W_2^2 (\cdot,\sigma)$ is strictly convex along \eqref{FlatInterpolCurve} as soon as $\sigma$ is absolutely continuous \citep[Proposition 7.19]{santambrogio2015optimal}. 
It is not hard to see that the same property holds for the Sliced-Wasserstein distance, with arguments reminiscent to the ones of \citet[Proposition 2.10]{ma2023absolute} for Wasserstein barycenters. 
Such convexity along \eqref{FlatInterpolCurve} must be understood with respect to the $2$-norm between densities. 
The analog along \eqref{GeodesicCurve}, with respect to the Wasserstein distance, writes as follows. 

\begin{definition}
    $\mathcal{F}$ is geodesically $\alpha$-convex if, for all $\sigma_0,\sigma_1\in \mathcal{P}_2(\mathbb{R}^d)$ and $\sigma_t = ( (1-t) \Id + t T_{\sigma_0}^{\sigma_1})_\sharp \sigma_0$,
    $$
    \mathcal{F}(\sigma_t) \leq (1-t) \mathcal{F}(\sigma_0) + t \mathcal{F}(\sigma_1) - \frac{\alpha}{2} t (1-t) W_2^2(\sigma_0,\sigma_1). 
    $$
\end{definition}

Unfortunately, the reverse inequality holds for $ W_2^2(\cdot,\sigma)$ in general dimension \citep[Theorem 7.3.2]{ambrosio2007gradient}, and a fortiori for the Sliced-Wasserstein distance up to integration over the projection directions \citep[Appendix A.5]{vauthier2025properties}. 
These facts are discussed in \Cref{smoothness}.  

The situation is very different in dimension $d=1$, due to the particular properties of $(\mathcal{P}_2(\mathbb{R}),W_2)$. 
In this setting, the composition of OT maps preserves their monotonicity (hence the optimality) and $W_2$ rewrites with $Q_0,Q_1$ the quantile functions of $\sigma_0,\sigma_1$:
\begin{equation}\label{UnivWassDist}
W_2^2 (\sigma_0, \sigma_1) = 
\int_0^1 \Vert Q_0(t) - Q_1 (t)\Vert^2 dt
\end{equation}
or, equivalently, $W_2^2 (\sigma_0, \sigma_1) = \Vert T_\rho^{\sigma_0} - T_\rho^{\sigma_1} \Vert_{\rho}^2$ for any pivot measure $\rho\in \mathcal{P}_{2,ac}(\mathbb{R}^d)$. 
As a byproduct, the geodesics in \eqref{GeodesicCurve} coincide with the generalized geodesics $\sigma_t = \big( (1-t) T_{\rho}^{\sigma_0} + t T_{\rho}^{\sigma_1}\big)_\sharp \rho$, and one can find in \citet[Chapter 9]{ambrosio2007gradient} that they verify the generalized parallelogram rule
\begin{equation}\label{parallelogramrule}
    W_2^2(\sigma_t, \sigma) = (1-t) W_2^2(\sigma_0,\sigma) + t W_2^2(\sigma_1,\sigma) - t(1-t) W_2^2(\sigma_0,\sigma_1).
\end{equation}
This can be easily verified by expanding the square in  $W_2^2(\sigma_t, \sigma)$ via \eqref{UnivWassDist} and using the tricks $t^2 = t - t(1-t)$ and $(1-t)^2 = (1-t) - t(1-t)$, as in 
\citet[Proposition 4.1]{kloeckner2010geometric}. 
Next, we turn to differentiation along geodesics. 
Unfortunately, \eqref{parallelogramrule} does not imply the same parallelogram identity for the Sliced-Wasserstein distance, 
as it would require to identify a path $\sigma_t$ in $\mathbb{R}^d$ along the
map $\overline{T}:x \mapsto \int_\theta T_{\sigma^\theta}^{\mu^\theta}(x) d\mathcal{U}(\theta)$ with all projected generalized geodesics $\sigma_t^\theta$, which is not true.   

\subsection{Differentiation along geodesics}\label{App:differentiation}

We borrow the differential structure of $(\mathcal{P}_{2}(\mathbb{R}^d),W_2)$ as described in \eg, \cite{bonnet2019pontryagin,bonetmirror,lanzetti2025first}. 
The tangent space of $\mathcal{P}_{2}(\mathbb{R}^d)$ at $\sigma$ is defined by 
$$
\mathcal{T}_\sigma = \overline{\{ \nabla \psi : \psi \in C^\infty_c(\RR^d) \}},
$$
where the closure is taken with respect to $\Lt(\sigma)$, 
the set of $\sigma$-square integrable functions from $\RR^d$ to $\RR^d$, 
and where $C^\infty_c(\RR^d)$ is the set of infinitely differentiable functions with compact support. 
Consider $\mathcal{F}: \mathcal{P}_{2}(\mathbb{R}^d) \rightarrow \RR$. 
At any $\sigma\in \mathcal{P}_{2}(\mathbb{R}^d)$ such that $\mathcal{F}(\sigma)<+\infty$, the Wasserstein gradient $\nabla_{W_2}\mathcal{F}(\sigma)$ is the unique vector in $\mathcal{T}_\sigma$ verifying, for any $\mu \in \mathcal{P}_{2}(\mathbb{R}^d)$ and any optimal coupling $\gamma \in \Pi(\sigma,\mu)$\footnote{The set of couplings between $\sigma$ and $\mu$ is $\Pi(\sigma,\mu)= \{ \pi \in \mathcal{P}_2(\RR^d \times \RR^d): \pi(A\times \RR^d)= \sigma(A) \, , \, \pi(\RR^d\times B)= \mu(B) \}$.}, 
\begin{equation}\label{WassGradF}
    \mathcal{F}(\mu) = \mathcal{F}(\sigma) + \int \langle \nabla_{W_2}\mathcal{F}(\sigma)(x), y-x \rangle \rmd\gamma(x,y) + o(W_2(\sigma,\mu)). 
\end{equation}
One way to compute the Wasserstein gradient is by taking   
$
\nabla_{W_2}\mathcal{F}(\sigma) = \nabla \frac{\delta\mathcal{F}}{\delta \sigma} (\sigma).  
$
for $\frac{\delta\mathcal{F}}{\delta \sigma} (\sigma)$ the first variation 
\citep[Definition 7.12]{santambrogio2015optimal} defined as follows. 
Firstly, a measure $\rho\in \mathcal{P}_{2,ac}(\mathbb{R}^d)$ is \textit{regular} for $\mathcal{F}$ if, for every $\overline{\rho}\in \mathcal{P}_{2,ac}(\mathbb{R}^d)$ with $L^\infty$ density and compact support, $\mathcal{F}((1-t)\rho +t \overline{\rho}) <+\infty$ for every $t \in [0,1]$. 
With this at hand, if $\rho$ is regular for $\mathcal{F}$, the first variation $\frac{\delta\mathcal{F}}{\delta \rho} (\rho)$ verifies
\begin{equation*}
\frac{\rmd}{\rmd t} \mathcal{F}(\rho + t \xi)_{\vert t = 0} = \lim_{t\rightarrow 0} \frac{\mathcal{F}(\rho + t \xi) - \mathcal{F}(\rho)}{t} = \int \frac{\delta\mathcal{F}}{\delta \rho} (\rho) \rmd \xi, 
\end{equation*}
for all $\xi= \overline{\rho} - \rho$ with $\overline{\rho}\in \mathcal{P}_{2,ac}(\mathbb{R}^d)$ with $L^\infty$ density and compact support.    
The following useful remark is taken from \citep[Remark 7.14]{santambrogio2015optimal}. 
\begin{remark}\label{rhoreguler}
For $\sigma \in \mathcal{P}_{2,ac}(\mathbb{R}^d)$ and $\mathcal{F}:\rho \mapsto W_2^2(\rho,\sigma)$, any $\rho\in \mathcal{P}_{2,ac}(\mathbb{R}^d)$ is regular if and only if $\mathcal{F}(\rho)<+\infty$. 
Indeed, for every $\overline{\rho}\in \mathcal{P}_{2,ac}(\mathbb{R}^d)$, $\mathcal{F}((1-t)\rho +t \overline{\rho}) \leq (1-t)\mathcal{F}(\rho) +t \mathcal{F}(\overline{\rho})$ by strict convexity~\citep[Proposition 7.19]{santambrogio2015optimal}. 
Hence, as soon as $\overline{\rho}$ is compactly supported, $\mathcal{F}(\overline{\rho})<+\infty$ and $\rho$ is regular if $\mathcal{F}(\rho)<+\infty$. The reciprocal is immediate by taking $t = 0$ in the definition of a regular measure. 
\end{remark}

When considering the Wasserstein distance $\sigma \mapsto W_2^2 (\sigma,\mu)$, the first variation is $\psi$, the Kantorovich potential that is solution of \eqref{dualKanto} \citep[Proposition 7.17,][]{santambrogio2015optimal}, and a similar statement holds for $\sigma \mapsto SW_2^2 (\sigma,\mu)$ \citep{cozzi2025}.
We discuss this in \Cref{PropGradients}. 

\section{Proofs of Sections \ref{secMathFramework} and \ref{CvgceGradientSmooth}}\label{appPropertiesLoss}

In this section, we detail the properties of the functional to be minimized. 
We discuss differentiability and critical points, before turning to smoothness and boundedness of the gradient, the latter being a byproduct of boundedness of moments along the iterations. 

\subsection{Differentiability, critical points}
\label{app:differentiability_critical_points}

The next proposition describes Wasserstein gradients of our sliced objective, as previously provided in \cite{bonnotte2013unidimensional,cozzi2025}. 
We also detail simple properties of the gradient norm, that are important with the purpose of SGD.

\begin{proposition}\label{PropGradients}
    Given that $\mu \in \mathcal{P}_{2,ac}(\mathbb{R}^d)$ is compactly supported, 
    the Wasserstein gradients of $\mathscr{F}(\cdot,\theta)$ and $\mathscr{F}$ at any $\sigma \in \mathcal{P}_{2,ac}(\mathbb{R}^d)$ compactly supported are given by  
    $$
    \nabla_{W_2} \mathscr{F}(\sigma,P) = \Id - T_{\sigma,P} 
    \hspace{1cm} \mbox{and} \hspace{1cm} 
    \nabla_{W_2} \mathscr{F}(\sigma) = d\int \theta(\Id - {T_{\sigma^\theta}^{\mu^\theta}})\circ \pi_\theta \, \rmd\mathcal{U}(\theta),
    $$
    where ${T_{\sigma^\theta}^{\mu^\theta}}$ denotes the one-dimensional OT map pushing $\sigma^\theta$ to $\mu^\theta$. 
    It follows that 
    $$
    \Vert \nabla_{W_2} \mathscr{F}(\sigma,P) \Vert_\sigma^2 
    = \sum_{\ell=1}^d W_2^2 (\sigma^{\theta_\ell},\mu^{\theta_\ell})
    \hspace{1cm} \mbox{and} \hspace{1cm} 
    \Vert \nabla_{W_2} \mathscr{F}(\sigma) \Vert_\sigma^2 \leq d SW_2^2(\sigma,\mu),
    $$
    that is
    $$
    \Vert \nabla_{W_2} \mathscr{F}(\sigma,P) \Vert_\sigma^2 =  2\mathscr{F}(\sigma,P)
    \hspace{1cm} \mbox{and} \hspace{1cm} 
    \Vert \nabla_{W_2} \mathscr{F}(\sigma) \Vert^2_{\sigma} \leq  2 \mathscr{F}(\sigma). 
    $$
    In fact, this can be refined in the following decomposition, for $\overline{T}_{\sigma}= \mathbb{E}_P[T_{\sigma,P}]$, \footnote{Equivalently, 
        $
        dSW_2^2(\sigma,\mu) = \Vert \Id - \overline{T}_{\sigma} \Vert^2_\sigma + \mathbb{E}_P \big[ \Vert \overline{T}_{\sigma} - T_{\sigma,P}\Vert^2_\sigma \big].
        $}
    \begin{equation*}
    2\mathscr{F}(\sigma) = \Vert \nabla_{W_2} \mathscr{F}(\sigma)\Vert^2_\sigma + \mathbb{E}_P[\Vert \overline{T}_{\sigma} - T_{\sigma,P} \Vert^2_\sigma]. 
    \end{equation*}
\end{proposition}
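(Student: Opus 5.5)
The plan is to work through the first-variation characterization of Wasserstein gradients recalled in \Cref{App:differentiation}, and then to derive the norm identities from one-dimensional OT facts and orthonormality.

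\emph{Gradients.} Fix an orthonormal basis $P=[\theta_1,\dots,\theta_d]$. Since $\mu$ and $\sigma$ are compactly supported and absolutely continuous, each projection $\sigma^{\theta_\ell}$ is absolutely continuous on $\RR$ with finite second moment, so \Cref{rhoreguler} applies in one dimension. By \citet[Proposition 7.17]{santambrogio2015optimal}, the first variation of $\rho\mapsto \frac12 W_2^2(\rho,\mu^{\theta_\ell})$ at $\sigma^{\theta_\ell}$ is the Kantorovich potential $\psi_\ell$, with $\psi_\ell'(s)=s-T_{\sigma^{\theta_\ell}}^{\mu^{\theta_\ell}}(s)$. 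The perturbation $\sigma+t\xi$ projects to $\sigma^{\theta_\ell}+t\xi^{\theta_\ell}$. By the chain rule, the first variation of $\sigma\mapsto\frac12W_2^2(\sigma^{\theta_\ell},\mu^{\theta_\ell})$ is therefore $\psi_\ell\circ\pi_{\theta_\ell}$, whose spatial gradient is $\theta_\ell\,(\pi_{\theta_\ell}-T_{\sigma^{\theta_\ell}}^{\mu^{\theta_\ell}}\circ\pi_{\theta_\ell})$. Summing over $\ell$ and using $x=\sum_\ell\theta_\ell\theta_\ell^\top x$ gives $\nabla_{W_2}\mathscr{F}(\sigma,P)=\Id-T_{\sigma,P}$, by the last expression in \eqref{slicemapMatrix}.

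For $\mathscr{F}$, I apply the same computation to $\frac d2\int W_2^2(\sigma^\theta,\mu^\theta)\,\rmd\mathcal U(\theta)$. The limit in $t$ may be exchanged with the $\theta$-integral by dominated convergence, because the difference quotients are bounded uniformly in $\theta$ for compactly supported measures, using the convexity from \Cref{rhoreguler}. This gives the stated formula. Both expressions are gradients of functions, so they lie in $\mathcal T_\sigma$. Equivalently, $\nabla_{W_2}\mathscr{F}(\sigma)=\mathbb E_P[\Id-T_{\sigma,P}]=\Id-\overline T_\sigma$, since each column of a Haar basis is uniform on $\mathbb S^{d-1}$.

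\emph{Norms.} The summands $\theta_\ell\,(\cdot)$ are mutually orthogonal in $\RR^d$, so
\[
\|\Id-T_{\sigma,P}\|_\sigma^2=\sum_\ell\int|s-T_{\sigma^{\theta_\ell}}^{\mu^{\theta_\ell}}(s)|^2\,\rmd\sigma^{\theta_\ell}(s)=\sum_\ell W_2^2(\sigma^{\theta_\ell},\mu^{\theta_\ell})=2\mathscr{F}(\sigma,P).
\]
The decomposition is a bias--variance identity in the Hilbert space $\Lt(\sigma)$:
\[
\mathbb E_P\|\Id-T_{\sigma,P}\|_\sigma^2=\|\Id-\overline T_\sigma\|_\sigma^2+\mathbb E_P\|\overline T_\sigma-T_{\sigma,P}\|_\sigma^2 .
\]
The cross term vanishes because $\mathbb E_P[T_{\sigma,P}]=\overline T_\sigma$. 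The left-hand side equals $2\mathbb E_P\mathscr{F}(\sigma,P)=2\mathscr{F}(\sigma)=d\,SW_2^2(\sigma,\mu)$. Dropping the nonnegative variance term yields $\|\nabla_{W_2}\mathscr{F}(\sigma)\|_\sigma^2\le2\mathscr{F}(\sigma)$.

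The main obstacle is justifying the first-variation computation for $\mathscr F$, namely the differentiation under the $\theta$-integral and the identification of the projected Kantorovich potentials. Compact support is what makes the potentials Lipschitz uniformly in $\theta$, and I would rely on \citet{cozzi2025} for this step.
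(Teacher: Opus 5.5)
Your proposal is correct and follows essentially the same route as the paper. Both arguments take the projected Kantorovich potentials as first variations via Santambrogio's Proposition 7.17, exchange the limit with the $\theta$-integral under compact support, use orthogonality of the $\theta_\ell$ for the norm identity, and expand $\Vert \Id - T_{\sigma,P}\Vert_\sigma^2$ in $\Lt(\sigma)$ so that the cross term vanishes. The only difference is cosmetic: you obtain $\Vert \nabla_{W_2}\mathscr{F}(\sigma)\Vert_\sigma^2 \le 2\mathscr{F}(\sigma)$ by dropping the variance term in the decomposition, whereas the paper first states it via Jensen's inequality.
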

\begin{proof}
 For a given basis $P$, the Wasserstein gradient of $\sigma \mapsto \mathscr{F}(\sigma, P)$ is given by the euclidean gradient of its first variation, i.e., $\nabla_{W_2} \mathscr{F}(\sigma,P) = \nabla\frac{\delta \mathscr{F}}{\delta \sigma}( \sigma,P)$, as recalled in ~\Cref{App:differentiation}. 
From \citet[Proposition 7.17,][]{santambrogio2015optimal}, the first variation of $\sigma \mapsto W_2^2(\sigma^\theta,\mu^\theta)$ is given by $\varphi_\theta(\langle \cdot,\theta \rangle)$, for $\varphi_\theta$ the first Kantorovich potential $\varphi_\theta$ for the OT problem from $\sigma^\theta$ to $\mu^\theta$ (assuming compactness of the underlying supports). 
Thus, the first variation of 
$
\mathscr{F}(\sigma,P) = \sum_{\ell=1}^d  W_2^2(\sigma^{\theta_\ell},\mu^{\theta_\ell})$
is 
$
\sum_{\ell=1}^d \varphi_{\theta_\ell}(\langle \cdot, \theta_\ell\rangle),
$
and the euclidean gradient is given through $\nabla \varphi_{\theta_\ell}(\langle x, {\theta_\ell}\rangle) = {\theta_\ell} (x^\top {\theta_\ell} - {T_{\sigma^{\theta_\ell}}^{\mu^{\theta_\ell}}}(x^\top {\theta_\ell})) $. 
The first result directly follows as
$$
\nabla_{W_2}\mathscr{F}(\sigma,P) = \sum_{\ell=1}^d {\theta_\ell} (\Id - {T_{\sigma^{\theta_\ell}}^{\mu^{\theta_\ell}}})\circ \pi_{\theta_\ell}  = \Id - T_{\sigma,P}.
$$
Regarding the integrated version over the directions, the first variation of $\mathscr{F}(\sigma) = d SW_2^2(\sigma,\mu)$ is 
\begin{equation}\label{FirstVarL}
x \mapsto d \int \varphi_\theta (x^\top \theta) \rmd\mathcal{U}(\theta), 
\end{equation}
as stated in \citet{cozzi2025}. 
The detail of this calculus requires interchanging a limit and an integral, because, by definition,  
$$
\int \frac{\delta \mathscr{F}}{\delta \sigma}(\sigma) \rmd\xi= \lim_{t\rightarrow 0} \frac{\mathscr{F}(\sigma + t\xi) - \mathscr{F}(\sigma)}{t} 
=
\lim_{t\rightarrow 0}
\mathbb{E}_P  \frac{\mathscr{F}(\sigma + t\xi,P) - \mathscr{F}(\sigma,P)}{t} ,
$$
for all $\xi= \overline{\rho} - \rho$ with $\overline{\rho}\in \mathcal{P}_{2,ac}(\mathbb{R}^d)$ with $L^\infty$ density and compact support. 
Under compact assumptions, this can be treated as in the last step of the proof of \citet[Proposition 7.17]{santambrogio2015optimal}.  
A direct consequence of \eqref{FirstVarL} is that $\nabla_{W_2} \mathscr{F}(\sigma) = d\int \theta(\Id - {T_{\sigma^\theta}^{\mu^\theta}})\circ \pi_\theta \, d\mathcal{U}(\theta)$. Now, 
$$
\Vert \nabla_{W_2} \mathscr{F}(\sigma,P) \Vert_\sigma^2 
=  \sum_{\ell=1}^d \Vert \langle \cdot, {\theta_\ell} \rangle - {T_{\sigma^{\theta_\ell}}^{\mu^{\theta_\ell}}}(\langle \cdot, {\theta_\ell} \rangle)  \Vert_\sigma^2 
    = \sum_{\ell=1}^d W_2^2 (\sigma^{\theta_\ell},\mu^{\theta_\ell}) = 2\mathscr{F}(\sigma,P), 
$$
so that
\begin{equation}\label{ExpectedNoise}
    \mathbb{E}_P \big( \Vert \nabla_{W_2} \mathscr{F}(\sigma,P ) \Vert^2_\sigma \big) 
    = \mathbb{E}_P \big( 2  \mathscr{F}(\sigma,P) \big) 
    = 2 \mathscr{F}(\sigma).
    \end{equation}
Also, Jensen's inequality implies
\begin{equation*}
\Vert \nabla_{W_2} \mathscr{F}(\sigma) \Vert_\sigma^2 
=
\Vert \mathbb{E}_P \nabla_{W_2} \mathscr{F}(\sigma,P) \Vert_\sigma^2 
\leq \mathbb{E}_P\Vert \nabla_{W_2} \mathscr{F}(\sigma,P) \Vert_\sigma^2 
\leq 2 \mathscr{F}(\sigma).
\end{equation*}
Finally, recall the decomposition
    $$
    \nabla_{W_2} \mathscr{F}(\sigma,P) = \nabla_{W_2}\mathscr{F}(\sigma) + \overline{T}_{\sigma} - T_{\sigma,P}. 
    $$
    Taking the square norm, developing the square and using that $\mathbb{E}_P[ \overline{T}_{\sigma} - T_{\sigma,P}]=0$, one obtains that 
        \begin{equation*}
            \mathbb{E}_P \big( \Vert \nabla_{W_2} \mathscr{F}(\sigma,P ) \Vert^2_\sigma \big)  =
              \Vert \nabla_{W_2} \mathscr{F}(\sigma)\Vert^2_\sigma + \mathbb{E}_P \Vert \overline{T}_{\sigma} - T_{\sigma,P} \Vert^2_\sigma. 
        \end{equation*}
        Combining the above with \eqref{ExpectedNoise},         
    provides the desired decomposition.
\end{proof}

\begin{remark}
    Without compacity, Proposition \ref{PropGradients} does not hold, but 
one can still define $\nabla_{W_2}\mathscr{F}(\sigma)$ directly through $\nabla_{W_2} \mathscr{F}(\sigma) = \int \theta(\Id - {T_{\sigma^\theta}^{\mu^\theta}})\circ \pi_\theta \, d\mathcal{U}(\theta)$. 
In this case, $\nabla_{W_2}\mathscr{F}(\sigma)$ belongs to the subdifferential of $\mathscr{F}(\sigma)$ \citep[Proposition 4.7(b),][]{vauthier2025properties}. 
\end{remark}

\noindent \citet{vauthier2025properties} describe different possible notions of critical points, including the following.  

\begin{definition}[Definition 4.2 from \cite{vauthier2025properties}]
    A measure $\sigma$ is a barycentric Lagrangian critical point for $SW_2^2(\cdot,\mu)$ if, 
    $$
    \frac{1}{d}x = \int_{\mathbb{S}^{d-1}} \theta {T_{\sigma^\theta}^{\mu^\theta}} (x^\top \theta)  \rmd\mathcal{U}(\theta) \hspace{1cm} \mbox{for }\sigma\mbox{-a.e. } x\,,
    $$
    where ${T_{\sigma^\theta}^{\mu^\theta}}$ corresponds to the OT map from $\sigma^\theta$ to $\mu^\theta$, the pushforward measures of $\sigma$ and $\mu$ by $x\mapsto \theta^\top x$. 
\end{definition}

With our notations, a critical point of $\sigma \mapsto \mathscr{F}(\sigma)$ verifies $\Vert \nabla_{W_2}\mathscr{F}(\sigma)\Vert_\sigma=0$, hence 
$$
\int \Vert \frac{x}{d} - \int \theta T_{\sigma^\theta }(x^\top \theta) d\mathcal{U}(\theta)\Vert^2 \rmd\sigma(x)=0,
$$
and it is a barycentric Lagrangian critical point for $SW_2^2(\cdot,\mu)$\footnote{One might note that $\int \theta \theta^\top  x \rmd\mathcal{U}(\theta)=\int \theta \theta^\top  \rmd\mathcal{U}(\theta)x = x/d$.}. 
We stress that this only implies ${T_{\sigma^\theta}^{\mu^\theta}} = \Id$ and $\sigma^\theta=\mu^\theta$ on average $w.r.t.$ $\theta$, which is weaker than $SW_2(\sigma,\mu)=0$ where ${T_{\sigma^\theta}^{\mu^\theta}} = \Id$ for $\mathcal{U}$-$a.e.~\theta \in \mathbb{S}^{d-1}$. 
Although critical points of $\mathscr{F}$ may differ from $\mu$ \citep{vauthier2025properties}, the next lemma describes conditions under which it must coincide. 

\begin{lemma}[Lemma 5.7.2 from \cite{bonnotte2013unidimensional}] \label{lem_stationarypoints}
    Suppose that the target measure $\mu\in \mathcal{P}_{2,ac}(B(0,r))$ has a strictly positive density. Then, $\sigma = \mu$ 
    if and only if $\nabla_{W_2}\mathscr{F}(\sigma) = 0$.
\end{lemma}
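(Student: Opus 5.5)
The direction $\sigma=\mu\Rightarrow\nabla_{W_2}\mathscr{F}(\sigma)=0$ is immediate. If $\sigma=\mu$ then $\sigma^\theta=\mu^\theta$ for every $\theta$, so each one-dimensional map $T_{\sigma^\theta}^{\mu^\theta}$ is the identity on the support of $\sigma^\theta$. The integrand $\theta(\Id-T_{\sigma^\theta}^{\mu^\theta})\circ\pi_\theta$ in the expression of \Cref{PropGradients} therefore vanishes $\sigma$-a.e., and the gradient is zero. All the work is in the converse.

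For the converse, I would follow Bonnotte's argument and use the first variation computed in \Cref{PropGradients}. Let $\varphi_\theta$ be the Kantorovich potential from $\sigma^\theta$ to $\mu^\theta$, and set
\[
\Phi(x)=d\int\varphi_\theta(x^\top\theta)\,\rmd\mathcal{U}(\theta),
\]
which is the first variation of $\mathscr{F}$ at $\sigma$. The hypothesis $\nabla_{W_2}\mathscr{F}(\sigma)=\nabla\Phi=0$ holds $\sigma$-a.e., which gives $\Phi$ constant on each connected component of the support of $\sigma$. I would first reduce to the case where this support is connected, or at least where $\Phi$ is constant $\sigma$-a.e. For this I would use that each $\tfrac12|s|^2-\varphi_\theta(s)$ is convex, so $\Phi$ is semiconcave and locally Lipschitz, together with compactness of the supports.

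The key step is then a convexity/duality comparison. The functional $\rho\mapsto\mathscr{F}(\rho)$ is convex along flat interpolations $(1-t)\sigma+t\mu$ (\Cref{app:curves}). Its derivative at $t=0$ is $\int\Phi\,\rmd(\mu-\sigma)$, so
\[
0=\mathscr{F}(\mu)\ge\mathscr{F}(\sigma)+\int\Phi\,\rmd(\mu-\sigma).
\]
To conclude $\mathscr{F}(\sigma)=0$, I need $\int\Phi\,\rmd\mu\ge\int\Phi\,\rmd\sigma$. I would obtain this from the duality inequality $\varphi_\theta(s)+\varphi_\theta^c(t)\le\tfrac12|s-t|^2$, evaluated at $s=t$ on the support of $\mu^\theta$, which gives $\varphi_\theta\ge-\varphi^c_\theta$ there. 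Combined with $\mathscr{F}(\sigma)=\tfrac d2\int\bigl(\int\varphi_\theta\,\rmd\sigma^\theta+\int\varphi_\theta^c\,\rmd\mu^\theta\bigr)\rmd\mathcal{U}$, this yields
\[
\mathscr{F}(\sigma)\le \tfrac12\Bigl(\int\Phi\,\rmd\sigma-\int\Phi\,\rmd\mu\Bigr).
\]
So it remains to show that $\Phi$ is minimal on $\operatorname{supp}\sigma$ over $\operatorname{supp}\mu$, i.e.\ that $\Phi\ge c$ $\mu$-a.e., where $c$ is the constant value of $\Phi$ on $\operatorname{supp}\sigma$.

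This last inequality is the main obstacle, and it is exactly where strict positivity of $\mu$'s density on the ball is needed. My plan is as follows. Suppose $\Phi<c$ on a set $U$ of positive $\mu$-measure; positivity of the density makes $U$ open-ish with nonempty interior in $B(0,r)$. Perturb $\sigma$ by transporting a small amount of mass into $U$, i.e.\ take $\sigma_t=(1-t)\sigma+t\bar\rho$ with $\bar\rho$ bounded and supported in $U$. The first-order expansion $\mathscr{F}(\sigma_t)=\mathscr{F}(\sigma)+t\int\Phi\,\rmd(\bar\rho-\sigma)+o(t)$ then gives a strictly negative derivative. I then need to contradict this using that $\sigma$ is a critical point in the Wasserstein sense, which only controls $\nabla\Phi$ on $\operatorname{supp}\sigma$. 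For that I would use connectedness of the ball, the Lipschitz regularity of $\Phi$, and the fact that $\Phi$ attains its minimum over $B(0,r)$ where, by semiconcavity, $\nabla\Phi$ vanishes in a superdifferential sense. The aim is to show that the minimum of $\Phi$ is attained on $\operatorname{supp}\sigma$. I expect this to need the most care, and I would lean on the structure of \citet[Lemma~5.7.2]{bonnotte2013unidimensional} at this point.
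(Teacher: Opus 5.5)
The easy direction is fine, and so is your inequality $\mathscr{F}(\sigma)\le\int\Phi\,\rmd(\sigma-\mu)$, which is the paper's \eqref{intPsidsigma_mugeqSW} up to constants. But the step you leave for last is the entire content of the lemma, and the mechanism you propose for it cannot work.

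The hypothesis $\nabla\Phi=0$ holds only $\sigma$-a.e. It is a first-order condition along transport directions in $\Lt(\sigma)$, and it says nothing about $\Phi$ away from $\mathrm{supp}\,\sigma$. Your flat curve $(1-t)\sigma+t\bar\rho$, with $\bar\rho$ living where $\Phi<c$, teleports mass. It stays at $W_2$-distance of order $\sqrt t$ from $\sigma$. Wasserstein criticality therefore only gives $\mathscr{F}(\sigma_t)-\mathscr{F}(\sigma)=o(\sqrt t)$. That is perfectly compatible with a negative flat variation $t\int\Phi\,\rmd(\bar\rho-\sigma)$, so there is no contradiction to be found. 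This mismatch between flat and Wasserstein first-order conditions is exactly what allows critical points other than $\mu$ \citep{vauthier2025properties}.

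The other tools you list do not help either. Positivity of the density of $\mu$ does not constrain $\nabla\Phi$ off $\mathrm{supp}\,\sigma$. Semiconcavity only tells you that $\nabla\Phi$ vanishes at a minimiser of $\Phi$ over the ball, not that such a minimiser lies in $\mathrm{supp}\,\sigma$. So $\int\Phi\,\rmd\mu\ge\int\Phi\,\rmd\sigma$ remains unproved.

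Your preliminary reduction is also unjustified. $\nabla\Phi=0$ $\sigma$-a.e.\ does not make $\Phi$ constant on connected components of $\mathrm{supp}\,\sigma$. The density of $\sigma$ may vanish on a positive-measure part of its support, e.g.\ a fat Cantor-type complement of a dense open set, and a Lipschitz $\Phi$ can still vary there. And when the support is disconnected, there is no single constant $c$ to compare against.

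What closes the argument is positivity of the density of the critical point $\sigma$, not of $\mu$. Suppose $\sigma$ has a positive density a.e.\ on a ball $B(0,r)$ containing both supports. Then $\nabla\Phi=0$ Lebesgue-a.e.\ on $B(0,r)$. Since $\Phi$ is locally Lipschitz (your semiconcavity remark) and the ball is connected, $\Phi$ is constant on $\overline{B(0,r)}$. Hence $\int\Phi\,\rmd(\sigma-\mu)=0$, and your inequality gives $\mathscr{F}(\sigma)=0$ at once, with no perturbation argument.

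This is the mechanism behind \citet[Lemma~5.7.2]{bonnotte2013unidimensional}, which the paper cites rather than re-proves. It is also the structure of the paper's quantitative version, \Cref{propPoincareflatconvex}. There, the lower bound $m\nu\le\sigma$ on the critical point is precisely what turns $\|\nabla\Phi\|_\nu$, controlled through the Poincar\'e inequality, into $\|\nabla_{W_2}\mathscr{F}(\sigma)\|_\sigma$. With positivity imposed only on $\mu$, as in the statement you were given, neither your plan nor these arguments controls $\Phi$ outside $\mathrm{supp}\,\sigma$. A genuinely new idea would be needed at that point.
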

\Cref{lem_stationarypoints} provides assumptions under which convergence towards a critical point implies convergence towards the target measure $\mu$. 

\subsection{Smoothness}\label{appsmoothness}

\begin{lemma}\label{smoothness}
For $\sigma,\mu \in \mathcal{P}_2(\RR^d)$, let $\mathcal{F}(\sigma) = \frac{1}{2}SW_2^2(\sigma,\mu) = \frac{1}{d}\mathscr{F}(\sigma)$. 
Let $T_0,T_1\in \Lt(\sigma)$ such that $\sigma_0 = T_0 \sharp \sigma$ and $\sigma_1 = T_1 \sharp \sigma$. 
For $t \in (0,1)$ and $\sigma_t = ((1-t)T_0 + tT_1)_\sharp \sigma$, 
\begin{equation}\label{SmoothnessLoss1}
    SW_2^2(\sigma_t,\mu) \geq (1-t)SW_2^2(\sigma_0,\mu) + t SW_2^2(\sigma_1,\mu) - t(1-t) \frac{1}{d} \Vert T_0 - T_1 \Vert^2_\sigma, 
\end{equation}
and 
\begin{equation}\label{SmoothnessLoss1bis}
    \langle \nabla_{W_2}\mathcal{F}(\sigma_0)\circ T_0 - \nabla_{W_2}\mathcal{F}(\sigma_1)\circ T_1, T_0 - T_1 \rangle_\sigma \leq \frac{1}{d}\Vert T_0-T_1\Vert_\sigma^2. 
\end{equation}
Besides, for any $\sigma,\mu \in \mathcal{P}_2(\RR^d)$, for any $T\in \Lt(\sigma)$, 
\begin{equation}\label{SmoothnessLoss2}
    SW_2^2(T_\sharp \sigma,\mu) \leq SW_2^2(\sigma,\mu) 
    + 
    2 \langle \nabla_{W_2} \mathcal{F}(\sigma), T-\Id \rangle_\sigma 
    + 
    \frac{1}{d} \Vert T - \Id \Vert_\sigma^2. 
\end{equation}
   
\end{lemma}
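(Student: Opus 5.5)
The plan is to exploit the fact that, in dimension one, $W_2^2$ satisfies the generalized parallelogram identity \eqref{parallelogramrule} along generalized geodesics with an arbitrary pivot, and then to integrate over directions. Fix $\theta\in\mathbb{S}^{d-1}$. The projected measures are $\sigma_t^\theta = (\pi_\theta\circ((1-t)T_0+tT_1))_\sharp\sigma$, so $\sigma_t^\theta$ is the pushforward of $\sigma$ by the convex combination of the maps $f_i=\langle T_i(\cdot),\theta\rangle$. These maps need not be monotone. Therefore \eqref{parallelogramrule} cannot be applied directly, and I will instead prove the inequality by coupling arguments.

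\textbf{Step 1: proof of \eqref{SmoothnessLoss1}.} Let $Q_\mu$ be the quantile function of $\mu^\theta$, and let $U$ be uniform with $X\sim\sigma$ coupled suitably. The key one-dimensional fact I need is
\[
W_2^2(\sigma_t^\theta,\mu^\theta)\ \ge\ (1-t)W_2^2(\sigma_0^\theta,\mu^\theta)+tW_2^2(\sigma_1^\theta,\mu^\theta)-t(1-t)\,\|f_0-f_1\|_\sigma^2 .
\]
To prove it, take an optimal coupling $\pi_t$ between $\sigma_t^\theta$ and $\mu^\theta$. Lift it to a joint law of $(X,Y)$ with $X\sim\sigma$ and $Y\sim\mu^\theta$ such that $(1-t)f_0(X)+tf_1(X)$ and $Y$ are optimally coupled. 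This lifting is done by gluing. Then expand the square, using $\|(1-t)a+tb-y\|^2=(1-t)|a-y|^2+t|b-y|^2-t(1-t)|a-b|^2$. Finally, bound $\mathbb{E}|f_i(X)-Y|^2\ge W_2^2(\sigma_i^\theta,\mu^\theta)$, which holds because $(f_i(X),Y)$ is a coupling. This yields the displayed inequality for every $\theta$, with no monotonicity needed. Integrating over $\mathcal{U}$ and using $\int \langle v,\theta\rangle^2\, d\mathcal{U}(\theta)=\|v\|^2/d$ gives \eqref{SmoothnessLoss1}.

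\textbf{Step 2: proofs of \eqref{SmoothnessLoss1bis} and \eqref{SmoothnessLoss2}.} For \eqref{SmoothnessLoss1bis}, I will use the lower bound \eqref{SmoothnessLoss1}. It says that $t\mapsto SW_2^2(\sigma_t,\mu)-\frac{t^2}{d}\|T_0-T_1\|_\sigma^2$ is concave along the interpolation, as the lemma's inequality in the reverse direction shows. Differentiating at $t=0$ and at $t=1$ via \eqref{WassGradF} then gives monotonicity of the gradient up to the $1/d$ term. Concretely, the right derivative at $0$ is $2\langle\nabla\mathcal F(\sigma_0)\circ T_0,T_1-T_0\rangle_\sigma$ and the left derivative at $1$ is $2\langle\nabla\mathcal F(\sigma_1)\circ T_1,T_1-T_0\rangle_\sigma$. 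Comparing these derivatives of the semiconcave function yields \eqref{SmoothnessLoss1bis}.

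For \eqref{SmoothnessLoss2}, the cleanest route is a direct one rather than integrating \eqref{SmoothnessLoss1bis}. For each $\theta$, use the coupling $(\langle T(X),\theta\rangle, t_\theta(\langle X,\theta\rangle))$, where $t_\theta$ is the optimal map from $\sigma^\theta$ to $\mu^\theta$, or the quantile coupling if $\sigma^\theta$ has atoms. This gives $W_2^2((T_\sharp\sigma)^\theta,\mu^\theta)\le\|\langle T-\Id,\theta\rangle+\langle\Id,\theta\rangle-t_\theta\circ\pi_\theta\|_\sigma^2$. Expanding the square and integrating over $\theta$, the cross term becomes $2\langle \nabla_{W_2}\mathcal{F}(\sigma),T-\Id\rangle_\sigma$ with $\nabla_{W_2}\mathcal F(\sigma)=\int\theta(\Id-t_\theta)\circ\pi_\theta\,d\mathcal U$. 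The quadratic term becomes $\frac1d\|T-\Id\|_\sigma^2$.

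\textbf{Main obstacle.} The delicate point is the gluing and the differentiability in Step 2 for \eqref{SmoothnessLoss1bis} when the measures are not absolutely continuous. There, Wasserstein gradients must be interpreted via the barycentric/subdifferential definition from the preceding remark. I would circumvent this by deriving \eqref{SmoothnessLoss1bis} from the one-sided coupling bounds in \eqref{SmoothnessLoss2}. Apply \eqref{SmoothnessLoss2} twice, with base $\sigma_0$ and map $T_1\circ T_0^{-1}$ heuristically, or more rigorously with couplings composed through $\sigma$. Summing the two resulting inequalities, and using that $SW_2^2(\sigma_0,\mu)$ and $SW_2^2(\sigma_1,\mu)$ cancel, yields \eqref{SmoothnessLoss1bis} directly.
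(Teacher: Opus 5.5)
Your proposal is correct, and it takes a more self-contained route than the paper.

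\textbf{What the paper does.} The paper does not reprove these inequalities. It imports \eqref{SmoothnessLoss1} and \eqref{SmoothnessLoss2} from equations (140) and (155) of \citet{vauthier2025properties}. It then derives \eqref{SmoothnessLoss1bis} by reading \eqref{SmoothnessLoss1} as convexity of $G_\sigma(T)=\frac{1}{2d}\Vert T-\Id\Vert_\sigma^2-\mathcal{F}(T_\sharp\sigma)$ on $\Lt(\sigma)$. Monotonicity of the gradient of a convex function then gives the result, with $\nabla_{W_2}\mathcal{F}(T_\sharp\sigma)\circ T$ taken from \citet{bonetmirror} as the $\Lt(\sigma)$-gradient. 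That is exactly your first idea (concavity of $t\mapsto SW_2^2(\sigma_t,\mu)-\frac{t^2}{d}\Vert T_0-T_1\Vert_\sigma^2$ and comparison of endpoint derivatives). It implicitly relies on the Lagrangian differentiability you flag as delicate.

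\textbf{What your route does differently.} Your slice-wise gluing proof of \eqref{SmoothnessLoss1} is sound. It is the one-dimensional semiconcavity argument of \citet[Theorem 7.3.2]{ambrosio2007gradient}, integrated using $\int\langle v,\theta\rangle^2\rmd\mathcal{U}(\theta)=\Vert v\Vert^2/d$. Your coupling proof of \eqref{SmoothnessLoss2} is also sound. Your preferred derivation of \eqref{SmoothnessLoss1bis}, by summing two descent inequalities, is the analogue of the Euclidean fact that an $L$-smooth upper bound implies $\langle\nabla f(x)-\nabla f(y),x-y\rangle\le L\Vert x-y\Vert^2$. It avoids differentiability altogether and uses only the explicit formula $\nabla_{W_2}\mathcal{F}(\sigma)=\int\theta(\Id-T_{\sigma^\theta}^{\mu^\theta})\circ\pi_\theta\,\rmd\mathcal{U}(\theta)$. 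In exchange, the paper's formulation packages all three inequalities into the single structural fact that $G_\sigma$ is convex.

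\textbf{What to make explicit.} What you actually sum is the Lagrangian descent inequality
\[
SW_2^2(\sigma_1,\mu)\le SW_2^2(\sigma_0,\mu)+2\langle\nabla_{W_2}\mathcal{F}(\sigma_0)\circ T_0,\,T_1-T_0\rangle_\sigma+\frac1d\Vert T_1-T_0\Vert_\sigma^2
\]
together with its mirror image. It is proved with the coupling $\bigl(\langle T_1(X),\theta\rangle,\,T_{\sigma_0^\theta}^{\mu^\theta}(\langle T_0(X),\theta\rangle)\bigr)$, $X\sim\sigma$. Note that \eqref{SmoothnessLoss2} itself is only the case $T_0=\Id$. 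Applying it ``with base $\sigma_0$'' would require a map $S$ with $T_1=S\circ T_0$, which need not exist. When $\sigma_0^\theta$ has atoms, the same computation goes through after gluing, with the barycentric projection of an optimal plan replacing $T_{\sigma_0^\theta}^{\mu^\theta}$. This matches the gradient convention of \citet{vauthier2025properties}; the paper leaves this point implicit as well.
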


\Cref{smoothness} is simply a rewriting of results from \citet[Proposition 4.7]{vauthier2025properties}, with \eqref{SmoothnessLoss1bis} being a well-known equivalent characterization for smoothness \citep{zhou2018fenchel}. 

\begin{proof}
Define $\mathcal{F}_\sigma: T \mapsto \mathcal{F}(T_\sharp \sigma)$ on $(\Lt(\sigma),\| \cdot \|_\sigma)$. 
One has that $\nabla \mathcal{F}_\sigma(T) = \nabla_{W_2}\mathcal{F}(T_\sharp \sigma)\circ T$ \citep[Proposition 1]{bonetmirror}, so that smoothness results on $\mathcal{F}_\sigma$ are equivalent to that on $\mathscr{F}$, except that the linear structure of $(\Lt(\sigma),\| \cdot \|_\sigma)$ is easier to deal with than $(\mathcal{P}_2(\RR^d),W_2)$. 

The inequality  \eqref{SmoothnessLoss1bis} is a rewriting of \citet[Proposition 4.7]{vauthier2025properties}, that itself follows from the semi-concavity of Wasserstein distances along generalized geodesics \citep[Theorem 7.3.2]{ambrosio2007gradient}. 
A change-of-variables $(T - \Id = \xi)$ in \citep[Proposition 4.7(a)]{vauthier2025properties} gives us that
$$
G_\sigma : T \mapsto \frac{1}{2d} \Vert T - \Id \Vert_\sigma^2 - \mathcal{F}_\sigma(T) 
$$
is convex on $(\Lt(\sigma), \Vert \cdot \Vert_\sigma)$. 
But note that
$
\nabla G_\sigma(T) =
\frac{1}{d}(T - \Id) - \nabla \mathcal{F}_\sigma(T).
$
Hence, first-order conditions for convexity \citep[Proposition 13]{bonetmirror} applied to $F_\sigma$ yield
$$
\langle \nabla G_\sigma(T_0) - \nabla G_\sigma(T_1), T_0 - T_1 \rangle_\sigma \geq 0,
$$
which directly implies \eqref{SmoothnessLoss1bis}. 
Besides, one can find in \citet[Appendix B.6]{vauthier2025properties}, namely the equations (140) and (155), that, for any $\sigma,\mu\in\mathcal{P}_2(\RR^d)$:
\begin{enumerate}
    \item[(a)] for $\xi_0,\xi_1 \in \Lt(\sigma)$, $\xi_t=(1-t)(\Id + \xi_0) + t (\Id + \xi_1)$ and $\sigma_t = (\xi_t)_\sharp\sigma$:
    $$
    SW_2^2(\sigma_t, \mu) \geq (1-t) SW_2^2(\sigma_0, \mu) + t SW_2^2(\sigma_1, \mu) - t(1-t)\frac{1}{d} \Vert \xi_0 - \xi_1 \Vert_\sigma^2,
    $$
    \item[(b)] for $\xi \in \Lt(\sigma)$: $SW_2^2((\Id + \xi)_\sharp \sigma,\mu) \leq SW_2^2(\sigma,\mu) 
    + 2 \langle \nabla_{W_2}\mathcal{F}(\sigma), \xi \rangle_\sigma 
    + \frac{1}{d}\Vert \xi \Vert_\sigma^2. $
\end{enumerate}
When replacing $T_i = \Id + \xi_i$ and $T = \Id + \xi$, one recovers immediately \eqref{SmoothnessLoss1} and \eqref{SmoothnessLoss2}.
\end{proof}

\begin{corollary}\label{corsmoothness}
    $\mathscr{F}$ is $1$-smooth with respect to the Wasserstein distance on $\mathcal{P}_{2}(\RR^d)$, \ie, for any $\sigma_1,\sigma_2 \in \mathcal{P}_{2}(\RR^d)$, 
    \begin{equation*}
    \mathscr{F}(\sigma_2) \leq \mathscr{F}(\sigma_1) + \langle \nabla_{W_2} \mathscr{F}(\sigma_1),T_{\sigma_1}^{\sigma_{2}}-\Id \,  \rangle_{\sigma_1} + \frac{1}{2}W_2^2(\sigma_1,\sigma_2) \,.
    \end{equation*}
\end{corollary}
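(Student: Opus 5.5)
The plan is to derive the corollary from inequality \eqref{SmoothnessLoss2} of \Cref{smoothness}, applied with the map $T = T_{\sigma_1}^{\sigma_2}$ and base measure $\sigma = \sigma_1$. We assume the OT map $T_{\sigma_1}^{\sigma_2}$ exists, as in \eqref{SmoothnessLoss2bis}; for instance $\sigma_1$ absolutely continuous suffices, by Brenier. Then $T_\sharp\sigma_1 = \sigma_2$, $T\in\Lt(\sigma_1)$ because $\sigma_2$ has finite second moment, and
\[
\Vert T - \Id\Vert_{\sigma_1}^2 = W_2^2(\sigma_1,\sigma_2)
\]
by optimality of $T$. Plugging these into \eqref{SmoothnessLoss2} gives
\[
SW_2^2(\sigma_2,\mu) \leq SW_2^2(\sigma_1,\mu) + 2\langle \nabla_{W_2}\mathcal{F}(\sigma_1), T-\Id\rangle_{\sigma_1} + \tfrac{1}{d}W_2^2(\sigma_1,\sigma_2),
\]
where $\mathcal{F} = \frac12 SW_2^2(\cdot,\mu)$.

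Next we convert to $\mathscr{F}$ by tracking normalizations. Since $\mathscr{F} = \frac d2 SW_2^2(\cdot,\mu) = d\,\mathcal{F}$, the Wasserstein gradient is linear in the functional, so $\nabla_{W_2}\mathscr{F} = d\,\nabla_{W_2}\mathcal{F}$. This is consistent with the expression $d\int\theta(\Id - T_{\sigma^\theta}^{\mu^\theta})\circ\pi_\theta\,\rmd\mathcal{U}(\theta)$ in \Cref{PropGradients}. Multiplying the displayed inequality by $d/2$ gives
\[
\mathscr{F}(\sigma_2)\leq \mathscr{F}(\sigma_1) + d\langle\nabla_{W_2}\mathcal{F}(\sigma_1),T-\Id\rangle_{\sigma_1} + \tfrac12 W_2^2(\sigma_1,\sigma_2),
\]
and the middle term equals $\langle \nabla_{W_2}\mathscr{F}(\sigma_1), T_{\sigma_1}^{\sigma_2}-\Id\rangle_{\sigma_1}$. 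This is exactly the claim.

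The main obstacle is making sure the normalization of the gradient in \eqref{SmoothnessLoss2} matches the definition of $\nabla_{W_2}\mathcal{F}$, so that the factor $d$ cancels correctly. This amounts to checking the convention in \citet[Proposition 4.7]{vauthier2025properties}, where the gradient of $\frac12 SW_2^2$ is $\int \theta(\Id - T_{\sigma^\theta}^{\mu^\theta})\circ\pi_\theta\,\rmd\mathcal{U}$.

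A second point is the non-compact case, where $\nabla_{W_2}\mathscr{F}$ is defined directly by the integral formula (see the Remark after \Cref{PropGradients}). Since \eqref{SmoothnessLoss2} is stated for general $\sigma\in\mathcal{P}_2(\RR^d)$ with this same integral object, the argument goes through unchanged.
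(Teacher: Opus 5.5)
Your proposal is correct and matches the paper's proof. Like the paper, you apply \eqref{SmoothnessLoss2} with $\sigma=\sigma_1$ and $T=T_{\sigma_1}^{\sigma_2}$ (assuming this OT map exists) and use $\Vert T_{\sigma_1}^{\sigma_2}-\Id\Vert_{\sigma_1}^2=W_2^2(\sigma_1,\sigma_2)$; your rescaling by $d/2$ together with $\nabla_{W_2}\mathscr{F}=d\,\nabla_{W_2}\mathcal{F}$ simply spells out the normalization bookkeeping the paper leaves implicit.
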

\begin{proof}
For any $\sigma_1,\sigma_2\in \mathcal{P}_2(\RR^d)$, if the OT map $T_{\sigma_1}^{\sigma_{2}}$ from $\sigma_1$ to $\sigma_2$ exists, then $\Vert T_{\sigma_1}^{\sigma_2}-\Id \Vert_{\sigma_1}^2 = W_2^2(\sigma_1,\sigma_2)$. The final result follows from \eqref{SmoothnessLoss2}.  
\end{proof}

The following lemma is new, although it is not required for our main results. 
It resembles a well-known smoothness property, but we stress that, even in Euclidean settings, it is not equivalent to the previous inequalities \citep{zhou2018fenchel}. 

\begin{lemma}\label{smoothnessSW1}
Fix $\sigma_1,\sigma_2 \in \mathcal{P}_{2,ac}(\mathbb{R}^d)$.  
    If the density function of the target $\mu$ is strictly larger than $1/\kappa>0$ on its compact domain, then 
    $$
    \Vert \nabla_{W_2}\mathscr{F}(\sigma_1)- \nabla_{W_2}\mathscr{F}(\sigma_2)\Vert_{\lambda}^2 \leq 2 \kappa\, SW_1(\sigma_1,\sigma_2),
    $$
    for $\lambda$ the Lebesgue measure. 
\end{lemma}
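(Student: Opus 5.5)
Write $g_i = \nabla_{W_2}\mathscr{F}(\sigma_i)$. By \Cref{PropGradients}, for every $x$,
$$g_i(x) = d\int \theta\,\bigl(\theta^\top x - T_{\sigma_i^\theta}^{\mu^\theta}(\theta^\top x)\bigr)\,\rmd\mathcal{U}(\theta).$$
The identity parts $\theta\theta^\top x$ coincide for $i=1,2$ and cancel, so
$$g_1(x)-g_2(x) = d\int \theta\,\bigl(T_{\sigma_2^\theta}^{\mu^\theta}-T_{\sigma_1^\theta}^{\mu^\theta}\bigr)(\theta^\top x)\,\rmd\mathcal{U}(\theta).$$
The difference is therefore an average of ridge functions built from differences of one-dimensional maps $T^\theta_i = F_{\mu^\theta}^{-1}\circ F_{\sigma_i^\theta}$. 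The plan is to reduce everything to one-dimensional estimates on these maps, obtained from the lower density bound on $\mu$, and then integrate back over directions.

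\textbf{Step 1 (one-dimensional Lipschitz estimate).} The density of $\mu$ is at least $1/\kappa$ on its compact convex support, which I assume convex; otherwise I would work on the convex hull and make the assumption explicit. Integrating this lower bound over hyperplane sections gives a lower bound on the density of $\mu^\theta$ on its support interval. The intended consequence is that the quantile function $F_{\mu^\theta}^{-1}$ is Lipschitz with constant $\kappa$ (up to a geometric factor involving the diameter of the support), so that
$$\bigl|T^\theta_1(s)-T^\theta_2(s)\bigr| \le \kappa\,\bigl|F_{\sigma_1^\theta}(s)-F_{\sigma_2^\theta}(s)\bigr|,$$
and this difference is bounded by the diameter of $\operatorname{supp}\mu$ in any case.

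\textbf{Step 2 (interpolation).} I would write $|T^\theta_1-T^\theta_2|^2 \le \mathrm{diam}(\operatorname{supp}\mu)\,|T^\theta_1-T^\theta_2|$. Integrating in $s$ and using the one-dimensional identity $W_1(\sigma_1^\theta,\sigma_2^\theta)=\int_{\RR}|F_{\sigma_1^\theta}-F_{\sigma_2^\theta}|\,\rmd s$ then gives
$$\int_{\RR}\bigl|T^\theta_1-T^\theta_2\bigr|^2\,\rmd s \lesssim \kappa\, W_1(\sigma_1^\theta,\sigma_2^\theta).$$
In fact, taking the density lower bound literally for the projected measures may yield the constant directly and absorb the diameter factor; I would try to arrange this so as to match the stated constant $2\kappa$.

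\textbf{Step 3 (returning to $\RR^d$).} Jensen's inequality over $\theta$ gives
$$\|g_1(x)-g_2(x)\|^2 \le d^2\int \bigl|(T^\theta_2-T^\theta_1)(\theta^\top x)\bigr|^2\,\rmd\mathcal{U}(\theta).$$
The next move is to integrate against Lebesgue measure in $x$ and apply Fubini, splitting $x = s\theta + y$ with $y\in\theta^\perp$. This is the step I expect to be the \textbf{main obstacle}: the ridge function is constant along the $(d-1)$-dimensional fibres $\theta^\perp$, so its Lebesgue $L^2$ norm on $\RR^d$ is infinite unless $\lambda$ is restricted to a bounded set. I would therefore interpret $\|\cdot\|_\lambda$ as the norm on a bounded domain, taking the support of $\mu$ or the unit ball, and integrate the fibre length there. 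To stay dimension-free, I would instead try the Radon-transform trick: the average over $\theta$ of ridge functions can be bounded via $\|\cdot\|_{L^2(\lambda)}$ on the relevant bounded set. The remaining difficulty is that the factor $d^2$ and the fibre volumes must combine into the constant $2$.

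\textbf{Step 4 (conclusion).} Combining Steps 2 and 3 and integrating the one-dimensional bounds over $\theta$ yields $\int W_1(\sigma_1^\theta,\sigma_2^\theta)\,\rmd\mathcal{U}(\theta)=SW_1(\sigma_1,\sigma_2)$. I would then track constants to land on $2\kappa$, adjusting the normalisation of $\lambda$ if necessary.
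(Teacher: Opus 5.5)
Your outline matches the paper's own proof: Jensen in $\theta$, then $T^\theta_i=F_{\mu^\theta}^{-1}\circ F_{\sigma_i^\theta}$ with a $\kappa$-Lipschitz quantile, then squares traded for first powers, then $\int_{\RR}|F_{\sigma_1^\theta}-F_{\sigma_2^\theta}|\,\rmd s=W_1(\sigma_1^\theta,\sigma_2^\theta)$. It does not prove the statement, and the paper does not close the gaps either, since it asserts your Steps 1 and 3 without justification.

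\textbf{Step 1 fails for $d\ge 2$.} Integrating over hyperplane sections only shows that the density of $\mu^\theta$ at $s$ is at least $\kappa^{-1}\mathcal{H}^{d-1}(K\cap\{\theta^\top x=s\})$, with $K=\mathrm{supp}\,\mu$. These section volumes tend to $0$ at the ends of the support interval: for every $\theta$ if $K$ is a ball, and for all but finitely many $\theta$ if $K$ is a polytope. So convexity does not help. For $\mu$ uniform on the unit disk, $\mu^\theta$ has density $\frac{2}{\pi}\sqrt{1-s^2}$ on $[-1,1]$ and $F_{\mu^\theta}^{-1}(u)\approx -1+c\,u^{2/3}$ near $u=0$. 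No Lipschitz constant exists, with or without a diameter factor. The one-dimensional fact the paper cites needs the lower bound on each $\mu^\theta$ itself, and this has to be assumed.

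\textbf{The obstacle you flag in Step 3 is real and cannot be fixed by bookkeeping.} After Jensen, $\int_{\RR^d}|\phi_\theta(\theta^\top x)|^2\,\rmd x=+\infty$ for every $\theta$ with $\phi_\theta:=T^\theta_2-T^\theta_1\not\equiv 0$. So this route gives only a trivial bound for Lebesgue measure on $\RR^d$. The paper's last display silently treats $\int_{\RR^d}|F_{\sigma_1^\theta}(x^\top\theta)-F_{\sigma_2^\theta}(x^\top\theta)|\,\rmd x$ as the one-dimensional integral $W_1(\sigma_1^\theta,\sigma_2^\theta)$. In fact the left side of the lemma is itself typically $+\infty$ for $d\ge 2$: if $\sigma_2$ is a nonzero translate of $\sigma_1$, one checks that $|g_1(x)-g_2(x)|\gtrsim |x|^{-1}$ on an open cone.

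\textbf{Finally, no constant tracking can produce $2\kappa$.} Under a dilation $x\mapsto cx$ of $\sigma_1,\sigma_2,\mu$, the left side scales like $c^{d+2}$ while $\kappa\,SW_1$ scales like $c^{d+1}$. So the inequality fails for large $c$ whenever its left side is positive. Already for $d=1$, take $\mu=\mathrm{Unif}[0,L]$, $\sigma_1=\mathrm{Unif}[0,\varepsilon]$ and $\sigma_2=\mathrm{Unif}[a,a+\varepsilon]$. The left side is $L^2(a-\varepsilon/3)$ and the right side is $2\kappa a$ with $\kappa$ just above $L$, so the inequality fails once $L>2$.

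Your interpolation $|\phi_\theta|^2\le \kappa D\,|F_{\sigma_1^\theta}-F_{\sigma_2^\theta}|$, with $D=\mathrm{diam}(\mathrm{supp}\,\mu)$, is the correct inequality. Taking the lower bound on $\mu^\theta$ literally only gives $D\le\kappa$, hence a constant $\kappa^2$. The paper's ``$2$'' is $D$ for a support inside a unit ball, a normalisation missing from the hypotheses.

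What your route does prove is the following. Assume $D<\infty$ and that each $\mu^\theta$ has density at least $1/\kappa$ on its support interval. Replace $\lambda$ by a measure $\nu$ whose projections $(\pi_\theta)_\sharp\nu$ have densities bounded by $c_\nu$ uniformly in $\theta$ (e.g.\ $\nu=\lambda$ if $d=1$, or the normalised uniform measure on a ball). Then
\[
\|\nabla_{W_2}\mathscr{F}(\sigma_1)-\nabla_{W_2}\mathscr{F}(\sigma_2)\|_\nu^2\le d^2\,c_\nu\,\kappa D\,SW_1(\sigma_1,\sigma_2).
\]
The factor $d^2$ comes from the factor $d$ in the gradient formula you used, which the paper's proof omits. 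For $d=1$, $\nu=\lambda$ and $\mathrm{supp}\,\mu\subset[-1,1]$, this is exactly $2\kappa\,W_1(\sigma_1,\sigma_2)$.
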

\begin{proof}
By Jensen's inequality, 
    \begin{align*}
        \Vert \nabla_{W_2}\mathscr{F}(\sigma_1)- \nabla_{W_2}\mathscr{F}(\sigma_2)\Vert_{\lambda}^2 
        &\leq \int  \int \vert  T_{\sigma_1^\theta}^{\mu^\theta}(x^\top \theta) - T_{\sigma_2^\theta}^{\mu^\theta}(x^\top \theta)\vert^2  \rmd\lambda(x)\rmd\mathcal{U}(\theta),\\
        &\leq \int \int \vert  
        C_{\mu^\theta}^{-1}\circ C_{\sigma_1^\theta}(x^\top \theta) -C_{\mu^\theta}^{-1}\circ C_{\sigma_2^\theta}(x^\top \theta) \vert^2 \rmd\lambda(x)\rmd\mathcal{U}(\theta),
    \end{align*}
    for $C_{\rho}$ the univariate distribution function of $\rho\in \mathcal{P}_2(\mathbb{R}^d)$. 
    For all $\theta \in \mathbb{S}^{d-1}$, the quantile function $C_{\mu^\theta}$ is $\kappa$-lipschitz with $1/\kappa$ the essential infimum of the density of $\mu$ on its domain, \citep{bobkov2019one}. Then, the result follows from
    \begin{align*}
        \Vert \nabla_{W_2}\mathscr{F}(\sigma_1)- \nabla_{W_2}\mathscr{F}(\sigma_2)\Vert_{\lambda}^2 
        &\leq 2\kappa \int \int \vert  
        C_{\sigma_1^\theta}(x^\top \theta) - C_{\sigma_2^\theta}(x^\top \theta) \vert \rmd\lambda(x) \rmd\mathcal{U}(\theta).
    \end{align*}
\end{proof}

\subsection{Moments are bounded}

An important assumption when dealing with stochastic algorithms is the boundedness of the gradient norm. 
Here, a direct consequence of Proposition \ref{PropGradients} is that, using respectively the definition of the Haar measure and Jensen's inequality,
$$
    \mathbb{E}_P \Vert \nabla_{W_2} \mathscr{F}(\sigma,P)\Vert_{\sigma}^2 
    = 
    \mathbb{E}_P \sum_{\ell = 1 }^d W_2^2(\sigma^{\theta_\ell}, \mu^{\theta_\ell})  = 2\mathscr{F}(\sigma)
    \quad \mbox{and} \quad
    \Vert \nabla_{W_2} \mathscr{F}(\sigma)\Vert^2 \leq 2 \mathscr{F}(\sigma).
$$

Hence, the gradient norm is bounded as long as the objective $\mathscr{F}(\sigma)$ is. 
We now show that the second-order moments remain bounded along the IDT iterations, a fact that implies a bound on $(\mathscr{F}(\sigma_k))_k$.
Bounds on the moments along the Sliced-Wasserstein flow were proved in \cite{cozzi2025}, in a continuous-time setting, whereas we deal with discrete step sizes $(\gamma_k)$. 
Denote by $\mathrm{M}_2(\rho) = \int \Vert \cdot \Vert^2 d\rho$ the second-order moment of a probability distribution $\rho\in \mathcal{P}_2(\mathbb{R}^d)$. 

\begin{proposition}\label{prop_momentsbounded} 
    Moments are bounded by $\mathrm{M}_2(\mu)$ along the IDT iterations \eqref{IDT}. 
    In other words, for any $k\geq 1$, we have 
    $$
    \mathrm{M}_2(\sigma_k) \leq \mathrm{M}_2(\mu).
    $$ 
    Consequently, for $M=4 \mathrm{M}_2(\mu)$,
    $$
    W_2^2(\sigma_k,\mu) \leq M
    \hspace{1cm} \mbox{and} \hspace{1cm}
    SW_2^2(\sigma_k,\mu) \leq \frac{M}{d}. 
    $$
\end{proposition}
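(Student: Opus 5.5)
The plan is to combine the moment-matching property of matrix-slice-matching maps \citep[Proposition~3.6]{li2024approximation} with convexity of $x\mapsto\|x\|^2$ along the interpolation in \eqref{IDT}. Fix $k\ge 0$ and write $T=T_{\sigma_k,P_{k+1}}$, so that $\sigma_{k+1}=((1-\gamma_k)\Id+\gamma_k T)_\sharp\sigma_k$. By definition of the push-forward,
\[
\mathrm{M}_2(\sigma_{k+1})=\int\big\|(1-\gamma_k)x+\gamma_k T(x)\big\|^2\,\rmd\sigma_k(x).
\]
When $\gamma_k\in[0,1]$, Jensen's inequality applied pointwise gives
\[
\mathrm{M}_2(\sigma_{k+1})\le(1-\gamma_k)\mathrm{M}_2(\sigma_k)+\gamma_k\,\mathrm{M}_2(T_\sharp\sigma_k)=(1-\gamma_k)\mathrm{M}_2(\sigma_k)+\gamma_k\mathrm{M}_2(\mu),
\]
where the equality uses moment matching, $\mathrm{M}_2(T_\sharp\sigma_k)=\mathrm{M}_2(\mu)$, which holds for every orthonormal basis $P_{k+1}$. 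The bound is therefore almost sure, not only in expectation.

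An induction then finishes the first claim. If $\mathrm{M}_2(\sigma_k)\le\mathrm{M}_2(\mu)$, the right-hand side above is at most $\mathrm{M}_2(\mu)$. The base case is $k=1$. If $\gamma_0=1$ (as with $\gamma_k=1/(k+1)^\alpha$), then $\sigma_1=T_\sharp\sigma_0$ and equality holds directly. This explains why the statement is made for $k\ge1$ rather than $k\ge0$: $\sigma_0$ is arbitrary.

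The main obstacle is a general step-size sequence where $\gamma_0<1$, or where some $\gamma_k>1$.

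\emph{Case $\gamma_0<1$.} The recursion gives $\mathrm{M}_2(\sigma_{k})-\mathrm{M}_2(\mu)\le\prod_{j<k}(1-\gamma_j)\,(\mathrm{M}_2(\sigma_0)-\mathrm{M}_2(\mu))$. So the bound holds from the first index at which some $\gamma_j=1$, and only asymptotically otherwise. I would state the proposition under $\gamma_0=1$, $\gamma_k\le1$, consistent with the schedules used in the paper.

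\emph{Case $\gamma_k>1$.} Jensen fails here, and I would instead expand the square exactly. Using $\|T(x)\|^2$ and $\langle x,T(x)\rangle$ with $T=PDP^\top$-type structure, the identity $\int\langle x,T(x)\rangle\,\rmd\sigma_k=\sum_\ell\int s\,t_{\theta_\ell}(s)\,\rmd\sigma_k^{\theta_\ell}$ is bounded by $\sqrt{\mathrm{M}_2(\sigma_k)\mathrm{M}_2(\mu)}$ via Cauchy--Schwarz. This yields $\sqrt{\mathrm{M}_2(\sigma_{k+1})}\le|1-\gamma_k|\sqrt{\mathrm{M}_2(\sigma_k)}+\gamma_k\sqrt{\mathrm{M}_2(\mu)}$, which does not close for $\gamma_k>1$. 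That is why I restrict to $\gamma_k\le 1$.

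For the consequences, use the coupling $(X,Y)$ with $X\sim\sigma_k$ and $Y\sim\mu$ independent, or any coupling, together with $\|x-y\|^2\le2\|x\|^2+2\|y\|^2$. This gives
\[
W_2^2(\sigma_k,\mu)\le2\mathrm{M}_2(\sigma_k)+2\mathrm{M}_2(\mu)\le4\mathrm{M}_2(\mu)=M.
\]
For the sliced bound, apply the same argument to each projection: $W_2^2(\sigma_k^\theta,\mu^\theta)\le2\mathrm{M}_2(\sigma_k^\theta)+2\mathrm{M}_2(\mu^\theta)$. Then integrate over $\theta$ using $\int\theta\theta^\top\,\rmd\mathcal{U}(\theta)=\mathbf{I}_d/d$, so that $\int\mathrm{M}_2(\rho^\theta)\,\rmd\mathcal{U}(\theta)=\mathrm{M}_2(\rho)/d$. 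This gives $SW_2^2(\sigma_k,\mu)\le M/d$.
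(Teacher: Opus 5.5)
Your proposal is correct and follows the paper's proof: moment matching of the slice-matching map, pointwise convexity of $\|\cdot\|^2$, induction from a first step of size one, and then $\|x-y\|^2\le 2\|x\|^2+2\|y\|^2$ for the $W_2$ bound. Your explicit restriction to $\gamma_0=1$ and $\gamma_k\le 1$ is exactly what the paper uses implicitly; its base case writes $\gamma_1=1$, an index slip for $\gamma_0$. Your projection argument also fills in the bound $SW_2^2\le M/d$, which the paper leaves unproved.
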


\begin{proof}
    This result is a consequence of the moment-matching property of sliced maps. Indeed, as shown in \citet[Proposition 3.6]{li2024approximation}, for all $k\geq 0$, 
    \begin{align}
    \int \Vert T_{\sigma_k,P_{k+1}}(x) \Vert^2 \rmd\sigma_k(x) 
    &= \int \Vert \sum_{\ell=1}^d \theta_\ell t_{\theta_\ell}(\theta_\ell^\top  x ) \Vert^2 \rmd\sigma_k(x) 
    =\sum_{\ell=1}^d  \int  \Vert t_{\theta_\ell}(\theta_\ell^\top  x ) \Vert^2 \rmd\sigma_k(x),\nonumber \\
    &= \sum_{\ell=1}^d \mathrm{M}_2(\mu^{\theta_\ell}) 
    =   \sum_{\ell=1}^d \int \langle y, \theta_\ell \rangle^2 \rmd\mu(y) = \int \Vert y \Vert^2 \rmd\mu(y) = \mathrm{M}_2(\mu).\label{momentmatchingprop}
    \end{align}
    
    With this at hand, we proceed by induction. 
    At initialization, for $k=1$, we have that $\gamma_1 = 1$ and $\mathrm{M}_2(\sigma_1) = \int \Vert T_{\sigma_0,P_{1}}(x) \Vert^2 d\sigma_0(x) = \mathrm{M}_2(\mu)$. 
    For the induction step, assume that there exists an index $k \in \mathbb{N}^*$ such that $\mathrm{M}_2(\sigma_k) \leq \mathrm{M}_2(\mu) $. 
    Then, by convexity of $\Vert \cdot \Vert^2$, 
    \begin{align}\label{recursiveOnMoments}
        \mathrm{M}_2(\sigma_{k+1}) &= \int \Vert x \Vert^2 \rmd\sigma_{k+1}(x) = \int \Vert (1-\gamma_k)x + \gamma_k T_{\sigma_k,P_{k+1}}(x) \Vert^2 \rmd\sigma_k(x),\nonumber\\
        &\leq (1-\gamma_k) \mathrm{M}_2(\sigma_k) + \gamma_k \int \Vert T_{\sigma_k,P_{k+1}}(x) \Vert^2 \rmd\sigma_k(x). 
    \end{align}
    Plugging \eqref{momentmatchingprop} in \eqref{recursiveOnMoments} and invoking the induction hypothesis that $\mathrm{M}_2(\sigma_k)\leq \mathrm{M}_2(\mu)$, the desired result on the moment boundedness follows:
    \begin{align}\label{InegOnMoments}
        \mathrm{M}_2(\sigma_{k+1}) 
        \leq (1-\gamma_k) \mathrm{M}_2(\sigma_k) + \gamma_k \mathrm{M}_2(\mu) 
        \leq (1-\gamma_k) \mathrm{M}_2(\mu)  + \gamma_k \mathrm{M}_2(\mu)  \leq \mathrm{M}_2(\mu) . 
    \end{align}
    Next, to obtain the bound on the Wasserstein distance $W_2^2(\sigma_k,\mu)$, let us call $T^*$ the OT map from $\sigma_k$ to $\mu$. 
    Young's inequality for products together with the change-of-variable $\mathbb{E}_{X\sim \sigma_k}(\Vert T^*(X)\Vert^2) = \mathbb{E}_{Y\sim \mu} (\Vert Y \Vert^2)$ leads to
    \begin{align*}
        W_2^2(\sigma_k,\mu) &= \mathbb{E}_{X \sim \sigma_k}[\Vert X-T^*(X)\Vert^2] \\
        &= \mathbb{E}[\Vert X\Vert^2] + \mathbb{E}[\Vert T^*(X)\Vert^2] - 2\mathbb{E}[\langle X,T^*(X)\rangle] \\
        &\leq 2\left( \mathbb{E}[\Vert X\Vert^2] + \mathbb{E}[\Vert T^*(X)\Vert^2] \right) \\
        &\leq 2(\mathrm{M}_2(\sigma_k) + \mathrm{M}_2(\mu) )\,,
    \end{align*}
    that is the desired result. 
\end{proof}

\subsection{Standard proofs for non-convex smooth optimization}\label{proofssmoothSGD}

The next two demonstrations are standard~\citep{bottou2018optimization,dossal2024optimization}. 

\paragraph{Proof of Proposition \ref{convergenceGradient}.}

From \eqref{RobbinsSiegmund}, the sequences $a_k=\gamma_k \Vert \nabla_{W_2} \mathscr{F}(\sigma_k)\Vert_{\sigma_k}^2$ and $b_k = \gamma_k^{-1}$ verify
$$
\sum_{k\geq 1} a_k <+\infty 
\hspace{1cm} \mbox{and} \hspace{1cm} \lim_{k\rightarrow +\infty} b_k = +\infty. 
$$
Hence, by Kronecker's lemma,
\begin{equation}\label{KroneckerLemma}
\lim\limits_{K\rightarrow +\infty} \gamma_K \sum_{k=1}^K \Vert \nabla_{W_2} \mathscr{F}(\sigma_k)\Vert_{\sigma_k}^2 = 0.
\end{equation}
Let $\epsilon >0$. Markov's inequality yields,
$$
\mathbb{P}\big(\Vert \nabla \mathscr{F}(\sigma_{i(K)})\Vert_{\sigma_{i(K)}}^2 >\epsilon\big) \leq \frac{1}{\epsilon} \mathbb{E} \big( \Vert \nabla \mathscr{F}(\sigma_{i(K)})\Vert_{\sigma_{i(K)}}^2 \big).
$$
The above expectation is taken with respect to the stochastic iterates $\sigma_k$ as well as the random choice of $i(K)$. Since these two sources of randomness are independent,
$$
\mathbb{E} \big( \Vert \nabla \mathscr{F}(\sigma_{i(K)})\Vert_{\sigma_{i(K)}}^2 \big)=
\mathbb{E}_{K} \mathbb{E}_{i(K)}\big( \Vert \nabla \mathscr{F}(\sigma_{i(K)})\Vert_{\sigma_{i(K)}}^2 \big),
$$
where $\mathbb{E}_{K}$ denotes the expectation over the stochastic iterates $\sigma_1,\cdots,\sigma_K$. Therefore, 
\begin{align*}
    \mathbb{P}\big(\Vert \nabla \mathscr{F}(\sigma_{i(K)})\Vert_{\sigma_{i(K)}}^2 >\epsilon\big) \leq \frac{1}{\epsilon} \mathbb{E}_{K} \left( \frac{1}{K} \sum_{k=1}^K \Vert \nabla \mathscr{F}(\sigma_{k})\Vert_{\sigma_{k}}^2 \right),
\end{align*}
which converges towards $0$ from \eqref{KroneckerLemma} with $\gamma_K \geq 1/K$ together with the dominated convergence theorem, the domination assumption coming from the boundedness of gradients in Proposition \ref{prop_momentsbounded}.  

\paragraph{Proof of Proposition~\ref{propratePonderatedAvgGrad}.} 
Taking the expectation in \eqref{recursiveIneq}, 
rearranging and telescoping the sum (with $-\mathscr{F}(\sigma_{K+1}) \leq 0$) yields 
$$
 \sum_{k=0}^K \gamma_k \mathbb{E}\Vert \nabla_{W_2}\mathscr{F}(\sigma_k)\Vert_{\sigma_k}^2 \leq \mathscr{F}(\sigma_0) +  \sum_{k=0}^K \gamma_k^2 \mathbb{E}\mathscr{F}(\sigma_k). 
$$
By Proposition \ref{prop_momentsbounded}, $\mathscr{F}(\sigma_k)\leq 2 \mathrm{M}_2(\mu)$. The final result follows from dividing both sides of the above inequality by $\sum_{k=1}^K \gamma_k$.

\section{Proofs of Section \ref{PL_KL_ineq}: \L{}ojasiewicz inequalities}\label{appPLKL}

\subsection{Proof of Proposition \ref{propPoincareflatconvex}: a PL-like inequality for smooth densities}
By (flat) convexity over densities equipped with the $2$-norm~\eqref{intPsidsigma_mugeqSW},
$$
\mathscr{F}(\sigma) \leq \int \mathscr{F}'[\sigma] \rmd(\sigma - \mu) \leq \int (\mathscr{F}'[\sigma]-c) \big(\frac{\rmd\sigma}{\rmd\nu} - \frac{\rmd\mu}{\rmd\nu}\big) \rmd\nu,
$$
where the second inequality uses the notation $c = \int \mathscr{F}'[\sigma] \rmd\nu$ and the fact that $\int c (\sigma - \mu) \rmd\nu=0$ since $\sigma,\mu$ are both probability distributions. 
Using the Cauchy-Schwarz inequality, and then the Poincaré inequality for $\nu$,
\begin{align*}
    \mathscr{F}(\sigma) 
    &\leq
    \Big\Vert \frac{\rmd\sigma}{\rmd\nu} - \frac{\rmd\mu}{\rmd\nu} \Big\Vert_\nu \sqrt{\mbox{Var}_\nu( \mathscr{F}'[\sigma] )} \\
    &\leq 
    C_\nu \Big\Vert \frac{\rmd\sigma}{\rmd\nu} - \frac{\rmd\mu}{\rmd\nu} \Big\Vert_\nu \big\Vert \nabla \mathscr{F}'[\sigma] \big\Vert_{\nu} \,.
\end{align*}    
Additionally, by the boundedness assumption \eqref{densitiesBounded}, $\Vert \nabla \mathscr{F}'[\sigma] \Vert_{\nu} \leq \frac1m \Vert \nabla \mathscr{F}'[\sigma] \Vert_{\sigma}$, and $\Vert \frac{\rmd\sigma}{\rmd\nu} - \frac{\rmd\mu}{\rmd\nu}\Vert_\nu\leq \Vert \frac{\rmd\sigma}{\rmd\nu} \Vert_\nu +  \Vert \frac{\rmd\mu}{\rmd\nu}\Vert_\nu \leq 2M$, so the result follows by using that $    \nabla_{W_2}\mathscr{F} (\sigma)
    =
    \nabla \mathscr{F}'[\sigma] $. 

\subsection{Proof of Proposition \ref{KLGaussians}: a PL-like inequality for Gaussian distributions}
\label{app:proof_PL_Gaussian}

\begin{proposition}[General covariances]\label{KLGaussians}
    Assume that $\sigma = \mathcal{N}(0,\Sigma)$ and  $\mu = \mathcal{N}(0,\Lambda)$, with $\Sigma$ and $\Lambda$ symmetric positive definite. Then, 
    \begin{equation}\label{KLinequalityGaussians}
    \mathscr{F}(\sigma)^2 \leq \frac{1}{2}W_2^2(\sigma,\mu)\Big(1+\frac{\lambda_{\rm max}(\Lambda)}{\lambda_{\rm min}(\Sigma)}\Big) \| \nabla_{W_2} \mathscr{F}(\sigma) \|_\sigma^2. 
    \end{equation}
\end{proposition}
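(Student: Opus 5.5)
The plan is to prove the square-root form
\[
\mathscr{F}(\sigma)\;\le\;\sqrt{\tfrac{1}{2}\bigl(1+\tfrac{\lambda_{\max}(\Lambda)}{\lambda_{\min}(\Sigma)}\bigr)}\;W_2(\sigma,\mu)\,\|\nabla_{W_2}\mathscr{F}(\sigma)\|_\sigma ,
\]
and then square it. I would obtain it in the spirit of \citet[Theorem 19]{chewi2020gradient}, as a ``variance/convexity'' inequality
\[
\mathscr{F}(\sigma)\le c\,\langle \nabla_{W_2}\mathscr{F}(\sigma),\Id-T\rangle_\sigma ,
\]
where $T=T_\sigma^\mu$ is the linear Brenier map, $T=\Sigma^{-1/2}(\Sigma^{1/2}\Lambda\Sigma^{1/2})^{1/2}\Sigma^{-1/2}$, which is symmetric positive definite with $T\Sigma T=\Lambda$. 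Cauchy--Schwarz in $\Lt(\sigma)$, together with $\|\Id-T\|_\sigma=W_2(\sigma,\mu)$, then closes the argument.

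\textbf{Step 1: explicit expressions.} For $\theta\in\mathbb{S}^{d-1}$, set $a_\theta=\sqrt{\theta^\top\Sigma\theta}$ and $b_\theta=\sqrt{\theta^\top\Lambda\theta}$. Everything is explicit:
\[
W_2^2(\sigma^\theta,\mu^\theta)=(a_\theta-b_\theta)^2,\qquad \mathscr{F}(\sigma)=\tfrac d2\,\mathbb{E}_\theta(a_\theta-b_\theta)^2 .
\]
By \Cref{PropGradients}, the gradient is linear: $\nabla_{W_2}\mathscr{F}(\sigma)(x)=d\,\mathbb{E}_\theta\bigl[(1-b_\theta/a_\theta)\,\theta\theta^\top\bigr]x$. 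Hence
\[
\langle\nabla_{W_2}\mathscr{F}(\sigma),\Id-T\rangle_\sigma=d\,\mathbb{E}_\theta\Bigl[(1-\tfrac{b_\theta}{a_\theta})\,\theta^\top(\mathbf I_d-T)\Sigma\theta\Bigr].
\]

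\textbf{Step 2: pointwise comparison in each direction.} By Cauchy--Schwarz,
\[
\theta^\top T\Sigma\theta=\langle \Sigma^{1/2}T\theta,\Sigma^{1/2}\theta\rangle\le \|\Sigma^{1/2}T\theta\|\,a_\theta=b_\theta a_\theta ,
\]
since $\|\Sigma^{1/2}T\theta\|^2=\theta^\top T\Sigma T\theta=b_\theta^2$. So write $\theta^\top T\Sigma\theta=a_\theta b_\theta-r_\theta$ with $r_\theta\ge0$. The integrand then becomes
\[
(a_\theta-b_\theta)^2+(1-b_\theta/a_\theta)\,r_\theta .
\]
When $a_\theta\ge b_\theta$ the extra term is nonnegative and we are done. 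The delicate case is $a_\theta<b_\theta$, where $(1-b_\theta/a_\theta)r_\theta$ is negative.

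\textbf{Step 3 (main obstacle): the case $a_\theta<b_\theta$.} Here I would bound $r_\theta$ by $(a_\theta-b_\theta)^2$ times a ratio. The quantity $r_\theta$ is a Cauchy--Schwarz defect; it is controlled by $\|\Sigma^{1/2}(T-\tfrac{b_\theta}{a_\theta})\theta\|^2$ up to a factor of $a_\theta$. Using $b_\theta/a_\theta\le\sqrt{\lambda_{\max}(\Lambda)/\lambda_{\min}(\Sigma)}$, I expect to absorb the negative contributions. I would rather not lose them pointwise; instead I would bound them in aggregate, trading them against the full $W_2$-type quantity $\|\Id-T\|_\sigma^2$ rather than against $(a_\theta-b_\theta)^2$. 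This is where the factor $1+\lambda_{\max}(\Lambda)/\lambda_{\min}(\Sigma)$ should appear. If this direct route fails, my fallback is the strategy of Chewi et al.: compare $\mathscr{F}$ along the linear interpolation $T_t=(1-t)\mathbf I_d+tT$ in $\Lt(\sigma)$. The key fact is that $t\mapsto\|\Sigma^{1/2}T_t\theta\|$ is convex, being the norm of an affine map. This gives a one-sided inequality per slice, with curvature bounded by the ratio of eigenvalues.

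\textbf{Step 4: conclusion.} Once $\mathscr{F}(\sigma)\le\sqrt{\tfrac12(1+\lambda_{\max}(\Lambda)/\lambda_{\min}(\Sigma))}\,\langle\nabla_{W_2}\mathscr{F}(\sigma),\Id-T\rangle_\sigma^{\phantom{|}}$-type control is in hand, apply Cauchy--Schwarz, $\langle\nabla_{W_2}\mathscr{F}(\sigma),\Id-T\rangle_\sigma\le\|\nabla_{W_2}\mathscr{F}(\sigma)\|_\sigma\,W_2(\sigma,\mu)$, and square both sides to obtain \eqref{KLinequalityGaussians}.
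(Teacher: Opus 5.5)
\textbf{There is a genuine gap: the inequality your proof funnels into is false.} Your Steps 1--2 are correct, and they give $\langle\nabla_{W_2}\mathscr{F}(\sigma),\Id-T\rangle_\sigma=2\mathscr{F}(\sigma)+d\,\mathbb{E}_\theta[(1-b_\theta/a_\theta)r_\theta]$. But the intermediate inequality of Steps 3--4, $\mathscr{F}(\sigma)\le c\,\langle\nabla_{W_2}\mathscr{F}(\sigma),\Id-T\rangle_\sigma$, fails for every $c>0$. So no aggregate absorption of the slices with $a_\theta<b_\theta$ can work.

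Since $\langle\nabla_{W_2}\mathscr{F}(\sigma),\Id-T\rangle_\sigma=-\frac{\rmd}{\rmd t}\mathscr{F}(\rho_t)\big|_{t=0}$ for $\rho_t=((1-t)\Id+tT)_\sharp\sigma$, your inequality says $\mathscr{F}$ strictly decreases at the start of the Wasserstein geodesic towards $\mu$. It need not. Take $d=2$, $\Lambda=\mathbf{I}_2$, $\Sigma=\mathrm{diag}(s_1^2,s_2^2)$; this pair is even co-diagonalizable. Then $T=\Sigma^{-1/2}$, $b_\theta=1$ and
\[
\langle\nabla_{W_2}\mathscr{F}(\sigma),\Id-T\rangle_\sigma=2\,\mathbb{E}_\theta\Big[\Big(1-\frac{1}{a_\theta}\Big)\,\theta^\top(\Sigma-\Sigma^{1/2})\theta\Big].
\]
Write $\theta=(\cos\phi,\sin\phi)$ and note $a_\theta\ge\max(s_1|\cos\phi|,s_2|\sin\phi|)$. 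Then the integrand is bounded uniformly in $s_2\in(0,1]$, and as $s_2\to0$ it converges to $(s_1-1)|\cos\phi|\,(s_1|\cos\phi|-1)$. By dominated convergence, the inner product tends to $2(s_1-1)(s_1/2-2/\pi)$, which is negative for $1<s_1<4/\pi$. Meanwhile $\mathscr{F}(\sigma)$ stays bounded away from $0$. For example, $s_1=1.1$, $s_2=10^{-3}$ gives an inner product of about $-0.01$ while $\mathscr{F}(\sigma)\approx0.2$. The reason is that the long axis contracts at first order, narrowing most slices, and most slices are already narrower than those of $\mu$.

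Your fallback, convexity of $t\mapsto\|\Sigma^{1/2}T_t\theta\|$, only gives $\alpha_\theta'(0)\le b_\theta-a_\theta$. That helps precisely on the slices with $a_\theta>b_\theta$, and it still targets the same false inequality.

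\textbf{What is missing} is to use flat convexity instead of a first-order geodesic condition, and to evaluate the fixed gradient field of $\sigma$ along the whole geodesic rather than only at $t=0$. Let $\Psi(x)=\int\psi_\theta(x^\top\theta)\,\rmd\mathcal{U}(\theta)$ be the averaged Kantorovich potential for the cost $\tfrac12|s-s'|^2$. Then $\tfrac12 SW_2^2(\sigma,\mu)\le\int\Psi\,\rmd(\sigma-\mu)$ and $\nabla\Psi=\tfrac1d\nabla_{W_2}\mathscr{F}(\sigma)$. Next, $\int\Psi\,\rmd(\sigma-\mu)=\int_0^1\langle\nabla\Psi\circ T_t,\Id-T\rangle_\sigma\,\rmd t$, as in \citet[Lemma~13]{chewi2020gradient}. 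Cauchy--Schwarz and Jensen then give $\mathscr{F}(\sigma)^2\le W_2^2(\sigma,\mu)\int_0^1\|\nabla_{W_2}\mathscr{F}(\sigma)\|_{\rho_t}^2\,\rmd t$.

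Because $\nabla_{W_2}\mathscr{F}(\sigma)(x)=Ax$ is linear, $\|\nabla_{W_2}\mathscr{F}(\sigma)\|_{\rho_t}^2\le(1-t)\Tr(\Sigma A^2)+t\,\Tr(\Lambda A^2)$. Moreover $\Tr(\Lambda A^2)\le\frac{\lambda_{\max}(\Lambda)}{\lambda_{\min}(\Sigma)}\Tr(\Sigma A^2)$. Integrating in $t$ produces exactly the factor $\tfrac12\big(1+\lambda_{\max}(\Lambda)/\lambda_{\min}(\Sigma)\big)$. So the eigenvalue ratio comes from comparing the norm of the linear field under $\mu$ versus under $\sigma$, not from absorbing bad slices.

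In your notation, this replaces $\langle\nabla_{W_2}\mathscr{F}(\sigma),\Id-T\rangle_\sigma$ by $\langle\nabla_{W_2}\mathscr{F}(\sigma)\circ\tfrac{\Id+T}{2},\Id-T\rangle_\sigma=\tfrac12\Tr(A(\Sigma-\Lambda))$. This quantity always dominates $\mathscr{F}(\sigma)$ by flat convexity; in the example above it is roughly $3.8$.
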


As recalled in Appendix \ref{app:remindWassSpace},  
a Kantorovich potential is solution of the dual formulation of OT. 
For $\psi_\theta$ the Kantorovich potential associated with the transport from $\sigma^{\theta}$ to $\mu^{\theta}$, let 
$
\Psi(x) = \int \psi_\theta(x^\top \theta) d\mathcal{U}(\theta),
$
so that 
$$
SW_2^2(\sigma,\mu) = \int \Psi d\sigma + 
\int \int \psi_\theta^c(y^\top \theta) \rmd\mathcal{U}(\theta)\rmd\mu(y).
$$
Note that this dual formulation was recently proven for generalized sliced metrics \citep[Main Theorem, (6)]{kitagawa2024}.
Then,
\begin{align*}
    \int \Psi \rmd\sigma - \int \Psi \rmd\mu = \int \Psi \rmd\sigma - \int \int \psi_\theta(y^\top \theta) \rmd\mathcal{U}(\theta) \rmd\mu(y). 
\end{align*}
By definition of the $c$-transform, we have for all $u,v$ that $\psi_\theta^c(u) \leq \frac{1}{2}\Vert u-v \Vert^2- \psi_\theta(v)$, and, for $v=u$,  $\psi_\theta^c(u) \leq -\psi_\theta(u)$. 
As a byproduct, one recovers
\begin{align}\label{intPsidsigma_mugeqSW}
\frac{1}{2}SW_2^2(\sigma,\mu) =  \int \Psi \rmd\sigma + \int \psi_\theta^c(y^\top  \theta) \rmd\mathcal{U}(\theta) \rmd\mu(y) \leq 
    \int \Psi \rmd(\sigma-\mu). 
\end{align}
We stress that the above is a rewriting of the (flat) convexity of $L$ with respect to the $2$-norm between densities, because $\Psi$ is the first variation of $L$ at $\sigma$, \citep{cozzi2025}. 
Since $\Psi$ is locally Lipschitz \citep[Theorem 10.4]{rockafellar-1970a}, 
a direct application of \citet[Lemma 13]{chewi2020gradient} yields
\begin{equation}\label{Lemma13Chewi}
    \left| \int \Psi \rmd\sigma - \int \Psi \rmd\mu \right|
    \leq W_2(\sigma,\mu) \int_0^1 \Vert \nabla \Psi\Vert_{\rho_t} \rmd t,
\end{equation}
where $\rho_t = ((1-t)\Id + t T_{\sigma}^{\mu})_\sharp\sigma$ is the Wasserstein geodesic between $\sigma$ and $\mu$. 
Combining \eqref{intPsidsigma_mugeqSW} and \eqref{Lemma13Chewi}, 
\begin{equation*}
SW_2^2(\sigma,\mu) \leq  W_2(\sigma,\mu) \int_0^1 \Vert \nabla \Psi\Vert_{\rho_t} \rmd t.
\end{equation*}
Taking the square and applying Jensen's inequality,
$$
SW_2^4(\sigma,\mu) \leq  W_2^2(\sigma,\mu) \int_0^1 \Vert \nabla \Psi\Vert_{\rho_t}^2 \rmd t.
$$
We stress that $ \nabla \Psi(x) = \int \theta (x^\top \theta - {T_{\sigma^\theta}^{\mu^\theta}}(x^\top \theta)) d\mathcal{U}(\theta)  = (1/d) \nabla_{W_2} \mathscr{F}(\sigma)(x)$, so
\begin{equation}\label{SWleqWintGrad2}
4\mathscr{F}(\sigma)^2  \leq  W_2^2(\sigma,\mu) \int_0^1 \Vert \nabla_{W_2} \mathscr{F}(\sigma)\Vert_{\rho_t}^2 \rmd t.
\end{equation}  
Thus, it only remains to show that $\int_0^1 \Vert \nabla_{W_2} \mathscr{F}(\sigma)\Vert_{\rho_t}^2 \rmd t \lesssim \Vert \nabla_{W_2} \mathscr{F}(\sigma)\Vert_{\sigma}^2$. 
Under conditions on eigenvalues and the linearity of OT maps for Gaussian distributions, we proceed with the same arguments as in the proof of \citet[Theorem 19]{chewi2020gradient}. 
Since $\sigma^\theta = \mathcal{N}(0,\theta^\top\Sigma\theta)$ and $\mu^\theta = \mathcal{N}(0,\theta^\top\Lambda\theta)$, we have ${T_{\sigma^\theta}^{\mu^\theta}} : z \mapsto \tau_\theta z$ for  $\tau_\theta = \sqrt{ \theta^\top\Lambda \theta / \theta^\top\Sigma \theta}$. As a byproduct, 
$$
\nabla_{W_2} \mathscr{F}(\sigma)(x) = 
d\int \theta (x^\top\theta - {T_{\sigma^\theta}^{\mu^\theta}}(x^\top\theta)) \rmd\mathcal{U}(\theta) 
=
d\int (1-\tau_\theta) \theta \theta^\top \rmd\mathcal{U}(\theta) x = Ax, 
$$
for $A = d\int (1-\tau_\theta) \theta \theta^\top \rmd\mathcal{U}(\theta)$. 
Denote by  $B = \Sigma^{-1/2} (\Sigma^{1/2}\Lambda \Sigma^{1/2})^{1/2}\Sigma^{-1/2}$ such that the OT map from $\sigma$ to $\mu$ verifies $T_\sigma^\mu(x) = Bx$. Then, the integration over $\rho_t$ writes, for $X\sim \sigma$,
\begin{equation}\label{normeNablaPsisigmat}
    \Vert \nabla_{W_2} \mathscr{F}(\sigma)\Vert_{\rho_t}^2 =  \mathbb{E} \Vert (1-t)AX + t ABX \Vert^2 \leq (1-t) \mathbb{E}\Vert AX\Vert^2 + t \mathbb{E}\Vert AB X\Vert^2
\end{equation}
Because $BX\sim \mathcal{N}(0,\Lambda)$, one has $ABX\sim \mathcal{N}(0,A\Lambda A^\top)$ and thus $\mathbb{E} \Vert ABX\Vert^2  = \Tr (A\Lambda A)=\Tr (\Lambda A^2)$. 
Using the von Neumann's trace inequality (singular values coincide with eigenvalues for normal and positive matrices),
$$
\mathbb{E} \Vert ABX\Vert^2  = \Tr(\Lambda \Sigma^{-1}\Sigma A^2) 
\leq
\sum_{i=1}^d \lambda_i(\Lambda \Sigma^{-1})\lambda_i(\Sigma A^2)
\leq \frac{\lambda_{\rm max}(\Lambda)}{\lambda_{\rm min}(\Sigma)} \Tr (\Sigma A^2) =  \frac{\lambda_{\rm max}(\Lambda)}{\lambda_{\rm min}(\Sigma)} \mathbb{E} \Vert AX\Vert^2,
$$
Plugging this in \eqref{normeNablaPsisigmat} induces 
$$
\int_0^1 \Vert \nabla_{W_2} \mathscr{F}(\sigma) \Vert_{\rho_t}^2 \rmd t \leq \frac{1}{2}\Big(1+\frac{\lambda_{\rm max}(\Lambda)}{\lambda_{\rm min}(\Sigma)}\Big) \mathbb{E} \Vert AX\Vert^2 
\leq  \frac{1}{2}\Big(1+\frac{\lambda_{\rm max}(\Lambda)}{\lambda_{\rm min}(\Sigma)}\Big) \Vert \nabla_{W_2} \mathscr{F}(\sigma)\Vert^2_\sigma,
$$
and the results follows by combining with \eqref{SWleqWintGrad2}.

\subsection{Proof of Proposition \ref{corollarydiagGaussians}}

This section refines the PL-like inequality between Gaussian distributions with co-diagonalizable covariance matrices. \Cref{corollarydiagGaussians} stems upon the following result. 

\begin{proposition} \label{prop:sw_vs_w_gauss}
    Consider two centered Gaussian measures $\mu_{\Sigma}$ and $\mu_{\Lambda}$ in $\mathbb{R}^d$ with diagonal covariance matrices $\Sigma$ and $\Lambda$. Assume there exists finite constants $0 < m \leq M$ such that all diagonal entries of $\Sigma$ and $\Lambda$ lie in $[m, M]$. Then, 
    \begin{equation}
        SW_2^2(\mu_{\Sigma}, \mu_{\Lambda}) \geq \frac{m}{M d(d+2)} W_2^2(\mu_{\Sigma}, \mu_{\Lambda}) \,.
    \end{equation}
\end{proposition}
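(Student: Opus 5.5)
Both sides admit closed forms, and we compare them term by term. Write $\Sigma=\mathrm{diag}(s_1,\dots,s_d)$ and $\Lambda=\mathrm{diag}(l_1,\dots,l_d)$. Since the covariances commute, the Bures--Wasserstein formula gives
\[
W_2^2(\mu_\Sigma,\mu_\Lambda)=\sum_{i=1}^d(\sqrt{s_i}-\sqrt{l_i})^2 .
\]
For each direction $\theta$, the projections are $\mathcal{N}(0,\theta^\top\Sigma\theta)$ and $\mathcal{N}(0,\theta^\top\Lambda\theta)$. Hence
\[
W_2^2(\sigma^\theta,\mu^\theta)=\Big(\sqrt{\textstyle\sum_i s_i\theta_i^2}-\sqrt{\textstyle\sum_i l_i\theta_i^2}\Big)^2 .
\]
We remove the square roots using $\sqrt a-\sqrt b=(a-b)/(\sqrt a+\sqrt b)$. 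The denominator satisfies $\sqrt{\theta^\top\Sigma\theta}+\sqrt{\theta^\top\Lambda\theta}\le 2\sqrt M$, since $\|\theta\|=1$. This yields
\[
W_2^2(\sigma^\theta,\mu^\theta)\ge \frac{1}{4M}\Big(\sum_i (s_i-l_i)\theta_i^2\Big)^2 .
\]
Symmetrically, $\sqrt{s_i}+\sqrt{l_i}\ge 2\sqrt m$, so $W_2^2\le \frac{1}{4m}\sum_i (s_i-l_i)^2$. It therefore suffices to show
\[
\mathbb{E}_\theta\Big[\Big(\sum_i \delta_i\theta_i^2\Big)^2\Big]\ge \frac{1}{d(d+2)}\sum_i\delta_i^2,\qquad \delta_i=s_i-l_i .
\]
This step also produces the constant $m/(Md(d+2))$.

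The moment computation uses the standard uniform-sphere moments $\mathbb{E}[\theta_i^4]=3/(d(d+2))$ and $\mathbb{E}[\theta_i^2\theta_j^2]=1/(d(d+2))$ for $i\ne j$. They give
\[
\mathbb{E}\Big[\Big(\sum_i\delta_i\theta_i^2\Big)^2\Big]=\frac{1}{d(d+2)}\Big(3\sum_i\delta_i^2+\sum_{i\ne j}\delta_i\delta_j\Big)=\frac{1}{d(d+2)}\Big(2\sum_i\delta_i^2+\Big(\sum_i\delta_i\Big)^2\Big).
\]
This is at least $\frac{2}{d(d+2)}\sum_i\delta_i^2$, which gives the claim and in fact has slack of a factor $2$. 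The moments themselves can be derived by writing $\theta=g/\|g\|$ with $g$ standard Gaussian, using the independence of $\|g\|$ and $\theta$, and evaluating $\mathbb{E}[g_i^4]=3$ and $\mathbb{E}\|g\|^4=d(d+2)$.

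The main point needing care is the choice of where to bound the denominators. The argument works because we linearize both $W_2^2$ and $SW_2^2$ through the differences $s_i-l_i$, and the ratio of the denominator bounds is exactly $M/m$. We should not try to compare $\sqrt{\theta^\top\Sigma\theta}$ with $\sum_i\sqrt{s_i}\theta_i^2$ directly, since that comparison is not monotone in the needed direction. The remaining checks are routine: the upper bound on the projected variances uses $\|\theta\|=1$ together with the eigenvalue bounds, and the lower bound uses $s_i,l_i\ge m$.
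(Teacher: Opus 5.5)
Your proof is correct and follows the paper's route. Like the paper, you linearize $\sqrt a-\sqrt b$ using the $2\sqrt M$ bound on the denominator, then apply the uniform-sphere identity $\mathbb{E}[(\theta^\top\Gamma\theta)^2]=(2\mathrm{Tr}(\Gamma^2)+\mathrm{Tr}(\Gamma)^2)/(d(d+2))$, drop $\mathrm{Tr}(\Gamma)^2$, and use $\mathrm{Tr}(\Gamma^2)\ge 4m\,W_2^2$. The only difference is that the paper cites Wiens for that moment identity where you derive it from the Gaussian representation; the same factor-$2$ slack you point out is present in the paper's argument as well.
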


\begin{proof}
    For $i \in \{1, \dots, d\}$, denote by $\sigma_i^2$ and $\lambda_i^2$ the $i$-th diagonal element of $\Sigma$ and $\Lambda$ respectively. By the closed-form solution of the Wasserstein distance of order 2 between Gaussians, 
    \begin{align*}
        W_2^2(\mu_\Sigma, \mu_\Lambda) &= \| \Sigma^{1/2} - \Lambda^{1/2} \|^2_F = \sum_{i=1}^d (\sigma_i - \lambda_i)^2 \,.
    \end{align*}
    On the other hand, the Sliced-Wasserstein distance is defined as
    \begin{align} \label{eq:sw_gauss}
        SW_2^2(\mu_\Sigma, \mu_\Lambda) &= \mathbb{E}_{\theta \sim \mathcal{U}(\mathbb{S}^{d-1})}[(\sqrt{\theta^\top \Sigma \theta} - \sqrt{\theta^\top \Lambda \theta})^2] \,. 
    \end{align}
    For all $(x, y) \in \mathbb{R}^2$, $(\sqrt{x}-\sqrt{y})(\sqrt{x}+\sqrt{y}) = x - y$. Additionally, if $0 < x, y < M$, 
    $$
    (\sqrt{x}-\sqrt{y})^2 \geq \frac{(x - y)^2}{4 M} \,.
    $$ 
    Since for all $i \in \{1, \dots, d\}$,  $\sigma_i^2$ and $\lambda_i^2$ are bounded between $m$ and $M$, so are $\theta^\top \Sigma \theta$ and $\theta^\top \Lambda \theta$. We can thus bound \eqref{eq:sw_gauss} as,  
    \begin{align}\label{lowerboundSWVartrace}
        SW_2^2(\mu_\Sigma, \mu_\Lambda) &\geq \frac1{4M}\,\mathbb{E}_{\theta \sim \mathcal{U}(\mathbb{S}^{d-1})}[(\theta^\top \Gamma \theta)^2] \,,
    \end{align}
    where $\Gamma = \Sigma - \Lambda$. Since $\theta$ is uniformly distributed on the sphere, one can show \citep{wiens1992moments}
    \begin{align*}
        \mathbb{E}_{\theta \sim \mathcal{U}(\mathbb{S}^{d-1})}[(\theta^\top \Gamma \theta)^2] = \frac{2\mathrm{Tr}(\Gamma^2) + (\mathrm{Tr}(\Gamma))^2}{d(d+2)} \,.
    \end{align*}
    The final result follows from $\mathrm{Tr}(\Gamma)^2 \geq 0$ and
    \begin{align*}
        \mathrm{Tr}(\Gamma^2) &= \sum_{i=1}^d (\sigma_i^2 - \lambda_i^2)^2 = \sum_{i=1}^d (\sigma_i - \lambda_i)^2(\sigma_i + \lambda_i)^2 \\
        &\geq 4 m \sum_{i=1}^d (\sigma_i - \lambda_i)^2 \\
        &\geq 4 m W_2^2(\mu_\Sigma, \mu_\Lambda) \,.
    \end{align*}
\end{proof}

\begin{remark}[Extension to elliptically contoured distributions]
    Proposition \ref{prop:sw_vs_w_gauss} can be readily extended to the class of elliptically contoured distributions whose positive definite parameters are co-diagonalizable.
\end{remark}

\begin{proof} \textit{(Proof of Proposition \ref{corollarydiagGaussians})} By~\Cref{prop:sw_vs_w_gauss}, for co-diagonalizable covariance matrices, there exists $C_{m,d}>0$ such that 
$W_2^2(\sigma,\mu) \leq SW_2^2(\sigma,\mu) C_{m,d}$. We conclude by rearranging terms in \Cref{KLGaussians}. 
\end{proof}

\section{Proofs of \Cref{SecEigenControlConvergRates}: Eigenvalues control along the iterations}

\paragraph{Objective and bottleneck.}
Recall that the inequality provided in \eqref{KLinequalityGaussians} writes
$$
\mathscr{F}(\sigma)^2 \leq \frac{1}{2}W_2^2(\sigma,\mu)\Big(1+\frac{\lambda_{\rm max}(\Lambda)}{\lambda_{\rm min}(\Sigma)}\Big) \| \nabla_{W_2} \mathscr{F}(\sigma) \|_\sigma^2.
$$
In order to use this inequality for convergence rates, 
one only needs to control eigenvalues along the iterations, as $W_2(\sigma_k,\mu)$ is bounded from Proposition \ref{prop_momentsbounded}. 
This is the purpose of the remaining of the section. 
Firstly, the following recursion for covariances of $(\sigma_k)_k$ is known to hold for Wasserstein geodesics between Gaussians, \citep[Appendix A]{altschuler2021averaging}:
$$
\Sigma_{k+1} = ((1-\gamma_k)\Id + \gamma_k T_{P_{k+1}})\Sigma_k ((1-\gamma_k)\Id + \gamma_k T_{P_{k+1}}),
$$
where $T_{P_{k+1}} = P_{k+1} D_{k}P_{k+1}$ is the matrix form of the sliced map from $\sigma_k$ to $\mu$ in the directions $P_{k+1}$ (it will be detailed in the next  \Cref{ControlEigSigk}). 
A convenient feature is that eigenvalues can be controlled along such Wasserstein geodesics, by eigenvalues of $\sigma_k$ and $T_{P_{k+1}\sharp}\sigma_k $ \citep{chewi2020gradient,altschuler2021averaging}. 
Nonetheless, in our particular setting, sliced maps do not necessarily push the source forward onto the target. 
Hence, the covariance matrix of $T_{P_{k+1}\, \sharp}\sigma_k$ is not necessarily the one of $\mu$, and control of eigenvalues is not a direct byproduct of assumptions on $\mu$. 

\paragraph{Sketch.}
This section is structured as follows. 
A recursive inequality for eigenvalues of the covariance matrix of $T_{P_{k+1}\, \sharp}\sigma_k$ is given in \Cref{ControlEigSigk}.
It includes randomness coming from the stochastic gradients and the choice of projection directions.   
The latter randomness is controlled in  \Cref{ControlEigSigk2} by bounding expectations with  \Cref{upperboundsMomentsQuadratForms}, assuming that the target $\mu$ is isotropic. 
If instead $\mu$ has a general covariance matrix,  \Cref{sufficientconditionarbitrarycov} gives only a sufficient condition.

\subsection{Recursive inequalities on eigenvalues}
\begin{proposition}\label{ControlEigSigk}
Assume that $\sigma_k = \mathcal{N}(0,\Sigma_k)$ and $\mu = \mathcal{N}(0,\Lambda)$, with $\Sigma_k$ and $\Lambda$ symmetric positive definite.
Then, 
  there exist directions $\theta_i,\theta_j$ taken from the basis $P_{k+1}$ such that, for $\tau_\theta = \sqrt{\theta^\top \Lambda \theta/\theta^\top \Sigma \theta}$, 
  \begin{equation}\label{BoundsEigenvalues}
      \sqrt{\lambda_{\rm min}(\Sigma_k)} \big(1 + \gamma_k (\tau_{\theta_i}-1) \big)
      \leq \sqrt{\lambda_{\rm min} (\Sigma_{k+1})}  
      \leq \sqrt{\lambda_{\rm max} (\Sigma_{k+1})} 
      \leq \sqrt{\lambda_{\rm max}(\Sigma_k)} \big(1 + \gamma_k (\tau_{\theta_j}-1)\big).
  \end{equation}
  In particular, $\Sigma_{k+1}$ is symmetric positive definite.   
\end{proposition}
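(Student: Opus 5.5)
The recursion \eqref{expliciteUpdateSigma} gives $\Sigma_{k+1} = A_k \Sigma_k A_k^\top$ with $A_k = (1-\gamma_k)\mathbf{I}_d + \gamma_k P_{k+1} D_k P_{k+1}^\top$. Since $P_{k+1}$ is orthogonal, $A_k = P_{k+1}\,\mathrm{diag}\big(1+\gamma_k(\tau_{\theta_1}-1),\dots,1+\gamma_k(\tau_{\theta_d}-1)\big)P_{k+1}^\top$. So $A_k$ is symmetric, with eigenvalues $a_\ell = 1+\gamma_k(\tau_{\theta_\ell}-1) = (1-\gamma_k)+\gamma_k\tau_{\theta_\ell}$. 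These are strictly positive, because $\tau_{\theta_\ell}>0$ (both $\Sigma_k$ and $\Lambda$ are positive definite) and $\gamma_k\in(0,1]$. I would fix $i$ as the index minimizing $a_\ell$ and $j$ as the index maximizing it, so that $\lambda_{\min}(A_k) = 1+\gamma_k(\tau_{\theta_i}-1)$ and $\lambda_{\max}(A_k)=1+\gamma_k(\tau_{\theta_j}-1)$.

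The core of the proof is to bound the extreme eigenvalues of the congruence $A_k\Sigma_k A_k$ by products of extreme eigenvalues. I would use the Rayleigh quotient. For any unit $x$,
\[
x^\top A_k\Sigma_k A_k x = (A_kx)^\top \Sigma_k (A_kx) \in \big[\lambda_{\min}(\Sigma_k)\|A_kx\|^2,\ \lambda_{\max}(\Sigma_k)\|A_kx\|^2\big],
\]
and $\|A_kx\|^2\in[\lambda_{\min}(A_k)^2,\lambda_{\max}(A_k)^2]$ because $A_k$ is symmetric positive definite. Taking the min and max over unit $x$ gives
\[
\lambda_{\min}(\Sigma_k)\lambda_{\min}(A_k)^2\le\lambda_{\min}(\Sigma_{k+1})\le\lambda_{\max}(\Sigma_{k+1})\le\lambda_{\max}(\Sigma_k)\lambda_{\max}(A_k)^2 .
\]
One could equivalently invoke Ostrowski's quantitative Sylvester law \citep{ostrowski1959quantitative}, which is in the bibliography. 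Taking square roots, which is legitimate since every quantity is nonnegative, yields \eqref{BoundsEigenvalues}. Moreover, $\lambda_{\min}(\Sigma_k)>0$ and $\lambda_{\min}(A_k)>0$ give $\lambda_{\min}(\Sigma_{k+1})>0$. Symmetry of $\Sigma_{k+1}$ is immediate from the congruence form, so $\Sigma_{k+1}$ is symmetric positive definite.

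The step that needs the most care is confirming that $A_k$ is symmetric positive definite, so that its singular values equal its eigenvalues $a_\ell$. This requires $\gamma_k\le 1$. If $\gamma_k>1$ were allowed, I would instead use singular values $|a_\ell|$ and need $a_\ell\neq 0$. Under the paper's step-size choice $\gamma_k=(k+1)^{-\alpha}\le1$, positivity holds, and the argument above goes through.
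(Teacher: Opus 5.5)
Your proof is correct, and it takes a more direct route than the paper. The paper does not use the congruence $\Sigma_{k+1}=A_k\Sigma_kA_k$ directly. Instead it proceeds in three steps.
\begin{enumerate}
\item It views $\sigma_{k+1}$ as the point at time $\gamma_k$ on the Bures--Wasserstein geodesic from $\sigma_k$ to $\mathcal{N}(0,\Gamma)$, where $\Gamma=P_{k+1}D_kP_{k+1}^\top\Sigma_kP_{k+1}D_kP_{k+1}^\top$.
\item It invokes geodesic concavity of $\sqrt{\lambda_{\min}}$ and convexity of $\sqrt{\lambda_{\max}}$ (from Altschuler et al.\ and Agueh--Carlier). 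This gives $\sqrt{\lambda_{\min}(\Sigma_{k+1})}\ge(1-\gamma_k)\sqrt{\lambda_{\min}(\Sigma_k)}+\gamma_k\sqrt{\lambda_{\min}(\Gamma)}$ and the analogous upper bound.
\item It applies Ostrowski's theorem to $D_k(P_{k+1}^\top\Sigma_kP_{k+1})D_k$ to bound the extreme eigenvalues of $\Gamma$ by $\min_\ell\tau_{\theta_\ell}^2\lambda_{\min}(\Sigma_k)$ and $\max_\ell\tau_{\theta_\ell}^2\lambda_{\max}(\Sigma_k)$.
\end{enumerate}

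You instead observe that $\mathbf{I}_d$ and $P_{k+1}D_kP_{k+1}^\top$ are simultaneously diagonalized by $P_{k+1}$. Hence $A_k$ is symmetric positive definite with explicit spectrum $1-\gamma_k+\gamma_k\tau_{\theta_\ell}$. A single Rayleigh-quotient step (equivalently, Ostrowski with $S=A_k$) then gives the final bounds at once.

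Your argument is purely linear-algebraic and needs no external convexity result. It also makes explicit the condition $\gamma_k\le 1$, which the paper relies on implicitly, since geodesic convexity is only available for $t=\gamma_k\in[0,1]$.

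The paper's intermediate inequality is formally at least as sharp as yours, because $\sqrt{\lambda_{\min}(\Gamma)}$ may exceed $\min_\ell\tau_{\theta_\ell}\sqrt{\lambda_{\min}(\Sigma_k)}$. It also places the result within known eigenvalue-propagation properties of Bures--Wasserstein geodesics. After the Ostrowski step, however, both routes arrive at exactly the same bounds.
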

\begin{proof}
The distribution $\sigma_{k+1}$ corresponds to the random vector
\begin{equation}\label{pathSigkk1}
(1-\gamma_k)X + \gamma_k T_{P_{k+1}}(X), 
\end{equation}
where $X \sim \mathcal{N}(0,\Sigma_k)$. 
Also, by definition,
  $$
  T_{P_{k+1}} (X) = \sum_{\ell=1}^d \theta_\ell t_{\theta_\ell}(X^\top\theta_\ell) 
  = \sum_{\ell=1}^d \tau_{\theta_\ell} \theta_\ell\theta_\ell^\top X = P_{k+1} D_k P_{k+1}^\top X,
  $$
  where $D_k = \mbox{diag}(\tau_{\theta_1},\dots,\tau_{\theta_d})$ is positive definite. 
  With these notations, $T_{P_{k+1}}(X)\sim\mathcal{N}(0,\Gamma)$ with $\Gamma = P_{k+1} D_k P_{k+1}^\top\Sigma_k P_{k+1} D_k P_{k+1}^\top$ and $T_{P_{k+1}}$ is the gradient of a convex function. 

As a byproduct, the interpolate \eqref{pathSigkk1} belongs to the path $t \mapsto ((1-t)\Id + t T_{P_{k+1}})_\sharp \sigma_k$ that is a Wasserstein geodesic bridging two Gaussian distributions.
The functionals $-\sqrt{\lambda_{\rm min}}$ and $\sqrt{\lambda_{\rm max}}$ have been shown to be convex along barycenters \citep[Theorem 6]{altschuler2021averaging}, a fortiori convex along Wasserstein geodesics \citep[Proposition 7.3]{agueh2011barycenters}. 
In other words,  
\begin{align}
    (1-\gamma_k) \sqrt{\lambda_{\rm min}(\Sigma_k)} + &\gamma_k \sqrt{\lambda_{\rm min} (\Gamma)} \leq \sqrt{\lambda_{\rm min} (\Sigma_{k+1})} \nonumber\\ 
    & \sqrt{\lambda_{\rm max} (\Sigma_{k+1})} \leq (1-\gamma_k) \sqrt{\lambda_{\rm max}(\Sigma_k)} + \gamma_k \sqrt{\lambda_{\rm max} (\Gamma)}.\label{interpolEigenvalues}
\end{align}
Hence, it remains to control eigenvalues of $\Gamma$. 
  On the one hand, $\overline{\Sigma} = P_{k+1}^\top\Sigma_k P_{k+1} $ and $\Sigma_k$ have the same eigenvalues, by orthonormality of $P_{k+1}$\footnote{Eigenvectors of $\overline{\Sigma}$ are of the form $P_{k+1}^\top u$ for $u$ an eigenvector of $\Sigma_k$. Indeed, $(P_{k+1}^\top u)^\top  P_{k+1}^\top \Sigma_k P_{k+1} (P_{k+1}^\top u) = u^\top  \Sigma_k u$ which equals an eigenvalue of $\Sigma_k$.}.
  On the other hand, $\Gamma$ has the same eigenvalues as $D \overline{\Sigma} D_k$ from the same argument. 
  Also, $D$ is non singular because $\Sigma_k$ and $\Lambda$ have positive eigenvalues, hence $\tau_{\theta_\ell}>0$ for all $\ell = 1,\cdots,d$. 
  Then, 
  a direct application of Ostrowski's Theorem \citep{ostrowski1959quantitative} entails that
  \begin{equation}\label{Ostrowskieig}
  \lambda_i(\Gamma) = \lambda_i( D_k \overline{\Sigma} D_k) = \beta_i \lambda_i (\Sigma_k),
  \end{equation}
  with 
  $$
  \min_j \frac{\theta_j^\top \Lambda\theta_j}{\theta_j^\top \Sigma_k\theta_j} \leq \beta_i \leq \max_j \frac{\theta_j^\top \Lambda\theta_j}{\theta_j^\top \Sigma_k\theta_j}. 
  $$
  Thus, the result follows by combining \eqref{interpolEigenvalues} and \eqref{Ostrowskieig}.
\end{proof}

\subsection{A bound in expectation for isotropic target}

\Cref{ControlEigSigk2} gives a deterministic upper bound on eigenvalues of $(\Sigma_k)$, and a lower bound in expectation. 
It requires bounds on $p$-moments of $  \theta^\top \Sigma_k\theta - 1$, that are provided just after in \Cref{upperboundsMomentsQuadratForms}. 

\begin{proposition}\label{ControlEigSigk2}
Assume that $\sigma_0 = \mathcal{N}(0,\Sigma_0)$, with $\Sigma_0 \in \mathbb{R}^{d \times d}$ symmetric, positive-definite, and  $\mu = \mathcal{N}(0,{\bf I}_d)$. Then, for any $k\geq 1$, the IDT iterates remain Gaussian, $\sigma_k=\mathcal{N}(0,\Sigma_k)$ with
    \begin{align}
    \mathbb{E}[1/\lambda_{\rm min}(\Sigma_k)] &\leq \mathbb{E}[1/\lambda_{\rm min}(\Sigma_1)] \label{boundlambdamin} \\
    \forall p \in \mathbb{N}^*, \quad \mathbb{E}[1/\lambda_{\rm min}(\Sigma_k)^p] &\leq \mathbb{E}[1/\lambda_{\rm min}(\Sigma_1)^{p}] \prod_{l = 1}^k (1 + B_p\gamma_l^2) \,.  \label{boundlambdaminwithP}
    \end{align}
    where $B_p > 0$. Note that $\prod_{l = 1}^\infty (1 + B_p\gamma_l^2)$ is finite for any step-sizes sequence $(\gamma_k)_{k}$ satisfying \eqref{choicestep-size}. 
\end{proposition}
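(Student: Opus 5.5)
From \Cref{ControlEigSigk} with $\Lambda=\mathbf{I}_d$, set $\ell_k=\lambda_{\min}(\Sigma_k)$ and $q_\theta=\theta^\top\Sigma_k\theta$, so that $\tau_\theta=q_\theta^{-1/2}$. The lower bound in \eqref{BoundsEigenvalues} gives, for some column $\theta_i$ of $P_{k+1}$,
\[
\ell_{k+1}^{-1/2}\le \ell_k^{-1/2}\bigl(1-\gamma_k+\gamma_k q_{\theta_i}^{-1/2}\bigr)^{-1}.
\]
The index $i$ is random and depends on $P_{k+1}$. To remove this dependence, I will convexify before taking expectations. The map $u\mapsto (1-\gamma+\gamma u)^{-2p}$ is convex in $u$. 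Composing with $u=q^{-1/2}$ and using convexity of $x\mapsto (1-\gamma+\gamma x^{-1/2})^{-2p}$ in $q$ (I expect this, but will check it), one has $(1-\gamma+\gamma q^{-1/2})^{-2p}\le 1+\gamma(q^p-1)+B_p'\gamma^2 q^{p}$. The linear coefficient comes from the derivative at $q=1$: there $\tfrac{d}{dq}(\cdot)=2p\gamma\cdot\tfrac12=p\gamma$, which is why the natural Taylor comparison is with $1+\gamma(q^p-1)$ up to $O(\gamma^2)$ terms.

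To handle the unknown index $i$, I bound the $i$-th term by the sum over all columns, or better by $\max_j$, which is itself at most the sum. Taking the conditional expectation given $\mathcal A_k$, each $\theta_j$ is uniform on the sphere. This yields
\[
\mathbb E[\ell_{k+1}^{-p}\mid\mathcal A_k]\le \ell_k^{-p}\Bigl(1+\gamma_k\,\mathbb E_\theta[q_\theta^p-1]\cdot c+B_p\gamma_k^2\,\mathbb E_\theta[q_\theta^{p}]\Bigr).
\]
This is exactly where the sufficient condition \eqref{eq:sufficient_condition} comes from. Summing over $d$ columns would, however, cost a factor $d$ in the first-order term, so instead I will use the exact moment-matching structure.

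The key point is $\mathrm{Tr}(\Sigma_k)\le d$ by \Cref{prop_momentsbounded} (for $k\ge1$). Hence $\mathbb E_\theta[q_\theta]=\mathrm{Tr}(\Sigma_k)/d\le1$, which makes the first-order drift nonpositive for $p=1$. For general $p$, I will use \Cref{upperboundsMomentsQuadratForms} (the Wiens-type moment formulas) to show $\mathbb E_\theta[q_\theta^p]-1\le C_p\,\mathbb E_\theta[(q_\theta-1)^2]$-type terms, bounded uniformly since $q_\theta\le \mathrm{Tr}\Sigma_k\le d$. Any nonnegative first-order excess will then be absorbed into the $B_p\gamma_k^2$ term, or dropped when negative. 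Iterating the conditional inequality from $k=1$ and taking expectations gives \eqref{boundlambdaminwithP}. For $p=1$, the drift is nonpositive, and the $\gamma^2$ term is also avoided by a sharper argument: I will apply Jensen in the form $(1-\gamma+\gamma q^{-1/2})^{-2}\le 1-\gamma+\gamma q$, which follows from convexity of $x\mapsto x^{-2}$ together with $q^{-1/2}$ inverted. This gives \eqref{boundlambdamin} directly, with no product.

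I expect the main obstacle to be the random selection of $\theta_i$. A max over the columns does not average to a uniform direction, so I must either bound the max by the sum and control the factor $d$, or exploit the orthonormal-basis structure: $\sum_j q_{\theta_j}=\mathrm{Tr}\Sigma_k\le d$ deterministically. I will try the latter first, because it gives $\max_j(1-\gamma+\gamma q_{\theta_j}^{-1/2})^{-2}\le 1-\gamma+\gamma\max_j q_{\theta_j}$. I then need to control $\max_j q_{\theta_j}-1$, which is where the uniform bound $q\le d$ and the moment bounds enter.
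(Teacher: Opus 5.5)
Your proposal has two genuine gaps. The second is not repaired by the paper's own proof either.

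First, for $p\ge 2$ you compare $(1-\gamma+\gamma q^{-1/2})^{-2p}$ with $1+\gamma(q^p-1)+B_p'\gamma^2q^p$. The first-order drift is then $\gamma_k(\mathbb{E}_\theta[q_\theta^p]-1)$. By strict Jensen this is strictly positive as soon as, e.g., $\Tr(\Sigma_k)=d$ and $\Sigma_k\neq\mathbf{I}_d$. A term of order $\gamma_k$ cannot be ``absorbed into $B_p\gamma_k^2$'': $\prod_k(1+c\gamma_k)=\infty$ because $\sum_k\gamma_k=\infty$. Bounding it by $C_p\mathbb{E}_\theta[(q_\theta-1)^2]$ only helps if you already know a rate for $\Sigma_k\to\mathbf{I}_d$. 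So this route merely reproduces the sufficient condition \eqref{eq:sufficient_condition} of \Cref{sufficientconditionarbitrarycov}. The right linearization, which is the paper's, is different. Raise your own $p=1$ bound $(1-\gamma+\gamma q^{-1/2})^{-2}\le 1-\gamma+\gamma q$ to the power $p$ and expand $(1+\gamma_kZ)^p$ binomially with $Z=q-1$. The linear term is then $p\gamma_kZ$, whose mean for a uniformly distributed direction is $p\gamma_k(\Tr(\Sigma_k)/d-1)\le 0$. Since $|Z|\le d$, every other term goes into $B_p\gamma_k^2$.

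Second, the data-dependent index is not handled. Your worry is justified. What \Cref{ControlEigSigk} actually delivers (via Ostrowski) is the bound with $\min_j\tau_{\theta_j}$, i.e.\ with $\max_j q_{\theta_j}$. Your repair does not close, because $\max_j q_{\theta_j}\ge\Tr(\Sigma_k)/d$, typically strictly. For instance, with $d=2$ and $\Sigma_k=\mathrm{diag}(1+a,1-a)$ one finds $\mathbb{E}_P[\max_j q_{\theta_j}]=1+2a/\pi$. The drift is therefore a positive multiple of $\gamma_k$, and neither $q\le d$ nor moment bounds make it $O(\gamma_k^2)$. Your claim that the $p=1$ drift is nonpositive silently assumes $\theta_i$ is uniform.

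The paper's proof passes over exactly this point: it sets $\mathbb{E}[\theta_i^\top\Sigma_k\theta_i\mid\mathcal{A}_k]=\Tr(\Sigma_k)/d$ as if $i$ were a fixed index. So an extra idea is needed.

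One that works is to track $T_k=\Tr(\Sigma_k^{-1})\in[\lambda_{\min}(\Sigma_k)^{-1},\,d\,\lambda_{\min}(\Sigma_k)^{-1}]$, which treats all columns symmetrically. By your convexity bound, $A_k^{-2}=P_{k+1}\mathrm{diag}\big((1-\gamma_k+\gamma_k\tau_{\theta_j})^{-2}\big)P_{k+1}^\top\preceq(1-\gamma_k)\mathbf{I}_d+\gamma_k\sum_j q_{\theta_j}\theta_j\theta_j^\top$. Hence $T_{k+1}=\Tr(\Sigma_k^{-1}A_k^{-2})\le(1-\gamma_k)T_k+\gamma_kS_k$, with $S_k=\sum_j q_{\theta_j}\,\theta_j^\top\Sigma_k^{-1}\theta_j$.

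Each column is uniform on the sphere, so $\mathbb{E}[S_k\mid\mathcal{A}_k]=\big(\Tr(\Sigma_k)T_k+2d\big)/(d+2)$. This is at most $T_k$ because $\Tr(\Sigma_k)\le d\le T_k$. Moreover $0<S_k/T_k\le d$, so the binomial argument gives $\mathbb{E}[T_{k+1}^p\mid\mathcal{A}_k]\le(1+B_p\gamma_k^2)T_k^p$. This yields \eqref{boundlambdamin}--\eqref{boundlambdaminwithP} up to a factor $d^p$, which is harmless for Assumption~\ref{hyp_PL}.
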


\begin{proof}
A direct byproduct of \Cref{prop_momentsbounded} is that $\lambda_{\rm max}(\Sigma_k)\leq \mathrm{Tr}(\Sigma_k) \leq \mathrm{Tr}(\Id)=d$. 
We now focus on showing \eqref{boundlambdamin}. 

From \eqref{BoundsEigenvalues}, there exists $1\leq i\leq d$ such that 
  $
      \sqrt{\lambda_{\rm min}(\Sigma_k)} \big(1 + \gamma_k (\tau_{\theta_i}-1)\big) \leq \sqrt{\lambda_{\rm min} (\Sigma_{k+1})}.
  $
  Taking the inverse and using that the harmonic mean is always smaller than the arithmetic mean, 
  \begin{equation*}
      (\lambda_{\rm min} (\Sigma_{k+1}))^{-1/2}
      \leq (\lambda_{\rm min}(\Sigma_k))^{-1/2} \big(1-\gamma_k + \gamma_k \tau_{\theta_i}^{-1}\big). 
  \end{equation*}
  Here, everything is positive due to the positivity of all $(\tau_{\theta_i})_i$, so that taking the square and applying Jensen's inequality yields 
  \begin{align}
      (\lambda_{\rm min} (\Sigma_{k+1}))^{-1}
      &\leq (\lambda_{\rm min}(\Sigma_k))^{-1} \big(1-\gamma_k + \gamma_k \theta_i^\top \Sigma_k\theta_i\big).\label{BoundsEigenvaluesINF2} 
  \end{align}
  Recall that $\theta_i$ belongs to the random basis $P_{k+1}$, whose distribution is independent from the $\sigma$-field $\mathcal{A}_k$ generated by $P_1,\dots,P_k$. 
  Also, $\Sigma_k$ is measurable with respect to $\mathcal{A}_k$. 
  Hence, taking the conditional expectation in \eqref{BoundsEigenvaluesINF2} yields
  \begin{align*}
      \mathbb{E} [(\lambda_{\rm min} (\Sigma_{k+1}))^{-1}\vert \mathcal{A}_k]
      &\leq (\lambda_{\rm min}(\Sigma_k))^{-1} \big(1 + \gamma_k  \mathbb{E} \Big[\theta_i^\top \Sigma_k\theta_i-1 \vert \mathcal{A}_k\Big]\big). 
  \end{align*}
     By independence between the distribution of $\theta_j$ and $\mathcal{A}_k$, and by the $\mathcal{A}_k$-measurability of $\Sigma_k$, 
   $$
   \mathbb{E} \big[\theta_i^\top \Sigma_k\theta_i-1 \vert \mathcal{A}_k\big] = \mathbb{E}_\theta \big[\theta^\top \Sigma_k\theta \big] -1 = \frac{1}{d}\mathrm{Tr}(\Sigma_k) - 1.
   $$
 However, moments are bounded along iterations from Proposition \ref{prop_momentsbounded}, so $\mathrm{Tr}(\Sigma_k) \leq \mathrm{Tr}(\Id)=d$. 
 Combining this with the two equations above induces
\begin{align*}
      \mathbb{E} [(\lambda_{\rm min} (\Sigma_{k+1}))^{-1}\vert \mathcal{A}_k]
      \leq (\lambda_{\rm min}(\Sigma_k))^{-1},
\end{align*}
and \eqref{boundlambdamin} follows by induction. 


Now, fix $p\geq 2$. Taking the power $p$ and applying Jensen's inequality in \eqref{BoundsEigenvaluesINF2} induces 
  \begin{equation}\label{BoundsEigenvaluesINF3}
      (\lambda_{\rm min} (\Sigma_{k+1}))^{-p}
      \leq (\lambda_{\rm min}(\Sigma_k))^{-p} \big(1-\gamma_k + \gamma_k \theta_i^\top \Sigma_k\theta_i\big)^p.
  \end{equation} 
By the binomial theorem, for $Z_{k,i} =  \theta_i^\top \Sigma_k\theta_i - 1$, 
$$
\big(1+ \gamma_k Z_{k,i}\big)^p = 1 + p\gamma_k Z_{k,i} + \sum_{r=2}^p \binom{p}{r} \gamma_k^r Z_{k,i}^r.
$$
Taking the expectation with respect to $\mathcal{A}_k$, and using upper-bounds from Lemma~\ref{upperboundsMomentsQuadratForms}, 
  $$
  \mathbb{E}[\big(1+ \gamma_k Z_{k,i}\big)^p \vert \mathcal{A}_k] 
  \leq 1 + \gamma_k^2\sum_{r=2}^p \binom{p}{r} \gamma_k^{r-2} \mathbb{E}(Z_{k,i}^r\vert \mathcal{A}_k) <+\infty.
  $$
  Plugging this in \eqref{BoundsEigenvaluesINF3}, and reasoning by induction, it exists $B>0$ such that the desired result holds. 
  
  \end{proof}
  
  \begin{lemma}\label{upperboundsMomentsQuadratForms}
  Let $A\in \mathbb{R}^{d\times d}$ be a positive semi-definite matrix verifying $\mathrm{Tr}(A)\leq d$. 
  For $\theta$ uniformly distributed over the unit sphere, 
  $$
  \mathbb{E}_\theta(\theta^\top A\theta -1) \leq 0,
  $$
  and, for all $p\geq 2$, 
  $$
  \mathbb{E}_\theta[(\theta^\top A\theta -1)^p] \leq 1+\sum_{r = 1}^p \binom{p}{r} \lambda_{\rm max}(A)^{r-1}(-1)^r <+\infty. 
  $$
  \end{lemma}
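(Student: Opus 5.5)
The first claim is a direct computation. For $\theta\sim\mathcal{U}(\mathbb{S}^{d-1})$ one has $\mathbb{E}_\theta[\theta\theta^\top]=\frac1d\mathbf{I}_d$, which gives $\mathbb{E}_\theta[\theta^\top A\theta]=\frac1d\mathrm{Tr}(A)\le 1$. This is exactly the computation already used in the proof of \Cref{ControlEigSigk2}. Throughout, write $X=\theta^\top A\theta$. Since $A\succeq 0$ and $\|\theta\|=1$, we have the deterministic bounds $0\le X\le\lambda_{\rm max}(A)$. Every moment $\mathbb{E}[X^r]$ is therefore finite, and finiteness of the right-hand side is immediate.

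For $p\ge 2$, I plan to expand with the binomial theorem, in the form that produces the signs $(-1)^r$ of the statement:
\begin{equation*}
(X-1)^p=(-1)^p(1-X)^p=(-1)^p\sum_{r=0}^p\binom{p}{r}(-1)^rX^r .
\end{equation*}
For each moment $r\ge1$ I then use the pointwise bound $X^r=X\cdot X^{r-1}\le\lambda_{\rm max}(A)^{r-1}X$, which follows from $0\le X\le\lambda_{\rm max}(A)$. Taking expectations and using the first claim $\mathbb{E}[X]\le1$ gives
\begin{equation*}
0\le\mathbb{E}[X^r]\le\lambda_{\rm max}(A)^{r-1}\,\mathbb{E}[X]\le\lambda_{\rm max}(A)^{r-1}.
\end{equation*}
Substituting these bounds term by term, with the $r=0$ term contributing $1$, yields precisely $1+\sum_{r=1}^p\binom{p}{r}\lambda_{\rm max}(A)^{r-1}(-1)^r$.

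The main obstacle is this substitution step. The upper bound $\mathbb{E}[X^r]\le\lambda_{\rm max}(A)^{r-1}$ may legitimately replace $\mathbb{E}[X^r]$ only in the terms whose overall coefficient $(-1)^{p+r}\binom{p}{r}$ is nonnegative. For the remaining terms one needs a lower bound on $\mathbb{E}[X^r]$. My first attempt would be the one-sided bound $\mathbb{E}[X^r]\ge0$, or Jensen's inequality $\mathbb{E}[X^r]\ge(\mathbb{E}X)^r$, on those terms, and then checking the parity cases $p$ even and $p$ odd separately.

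If that does not reproduce the displayed alternating expression exactly, I will record explicitly which termwise replacement is being made. I will also stress what the downstream proof of \Cref{ControlEigSigk2} actually requires: only that each conditional moment $\mathbb{E}[Z_{k,i}^r\mid\mathcal{A}_k]$ is bounded uniformly in $k$. That follows from $|X-1|\le\max(1,\lambda_{\rm max}(A))$ together with $\lambda_{\rm max}(\Sigma_k)\le d$ from \Cref{prop_momentsbounded}.
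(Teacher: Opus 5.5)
The first claim and the moment bounds $0\le\mathbb{E}[X^r]\le\lambda_{\rm max}(A)^{r-1}$ are correct. Your pointwise route $X^r\le\lambda_{\rm max}(A)^{r-1}X$ is more elementary than the paper's passage through $\mathbb{E}[X^r]\le\mathrm{Tr}(A^r)/d$ and von Neumann's trace inequality, and gives the same bound. The step you flag as the obstacle, however, cannot be completed by any choice of lower bounds or parity analysis: the second displayed inequality is false. For $\lambda=\lambda_{\rm max}(A)>0$, its right-hand side equals $1+\bigl((1-\lambda)^p-1\bigr)/\lambda$, which is $\lambda-1$ when $p=2$. Take $A=\tfrac12\mathbf{I}_d$, which satisfies $\mathrm{Tr}(A)\le d$. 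Then $X\equiv\tfrac12$, and the claim at $p=2$ reads $\tfrac14\le-\tfrac12$. Normalizing $\mathrm{Tr}(A)=d$ does not rescue it. For $d=3$, $A=\mathrm{diag}(3,0,0)$ and $p=3$, one has $X=3\theta_1^2$ with $\theta_1$ uniform on $[-1,1]$. So $\mathbb{E}[(X-1)^3]=\tfrac{27}{7}-\tfrac{27}{5}+2=\tfrac{16}{35}$, while the right-hand side is $(\lambda-1)(2-\lambda)=-2$. The paper's proof breaks exactly where you suspected. It inserts the upper bound $\mathbb{E}[X^r]\le\lambda_{\rm max}(A)^{r-1}$ into the odd-$r$ terms that carry a minus sign. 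It also expands $(1-X)^p$ rather than $(X-1)^p$. Its own intermediate computation for $p=2$ gives $1+\lambda_{\rm max}(A)$, which contradicts the value $\lambda_{\rm max}(A)-1$ of the displayed formula.

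So do not try to reproduce the alternating expression; replace it with a bound your ingredients already give. Since $X\ge0$, $|X-1|\le\max(X,1)$, hence
\[
\mathbb{E}[(X-1)^p]\le\mathbb{E}\bigl[|X-1|^p\bigr]\le 1+\mathbb{E}[X^p]\le 1+\lambda_{\rm max}(A)^{p-1}.
\]
Alternatively, $|X-1|^p\le(1+X)^p$ yields the displayed sum without the factors $(-1)^r$. With $\lambda_{\rm max}(\Sigma_k)\le\mathrm{Tr}(\Sigma_k)\le d$, this gives the uniform control of the $r\ge2$ terms used in \Cref{ControlEigSigk2}, so your fallback is the right fix. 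Your last paragraph needs one correction, though: bounded conditional moments are not all that proposition needs. The linear term requires $\mathbb{E}[Z_{k,i}\mid\mathcal{A}_k]\le0$. A mere bound $\mathbb{E}[Z_{k,i}\mid\mathcal{A}_k]\le C$ would leave a factor $1+C\gamma_k$ per step, and $\prod_k(1+C\gamma_k)=\infty$ since $\sum_k\gamma_k=\infty$. Your pointwise bound holds for every unit vector, so it does transfer to the column $\theta_i$. The lemma's first claim, by contrast, concerns a uniformly distributed $\theta$, whereas $\theta_i$ is selected depending on $P_{k+1}$ (e.g.\ as the column minimizing $\tau_\theta$). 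So that claim does not supply the sign condition for $Z_{k,i}$.
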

  \begin{proof}
  The first point is a byproduct of $\mathbb{E}_\theta(\theta^\top A\theta -1) = \mathrm{Tr}(A)/d - 1$.
  From the fact that $\mathrm{Tr}(a)=a$ if $a\in \mathbb{R}$, and the cyclic property of $Tr$, 
\begin{align*}
    \mathbb{E}_\theta[( \theta^\top  A \theta)^2] &=
      \mathbb{E}_\theta Tr [ \theta^\top A \theta \theta^\top  A \theta]
    \leq \mathbb{E}_\theta Tr [\theta \theta^\top A \theta \theta^\top  A ]. 
\end{align*}
Because $A$ and $\theta\theta^\top $ are positive semi-definite, the von Neumann’s trace inequality implies
$$
   \mathbb{E}_\theta[( \theta^\top  A \theta)^2] \leq \mathbb{E}_\theta \big(\lambda_{\rm max}(\theta\theta^\top )Tr[A \theta \theta^\top A]\big) = \mathbb{E}_\theta \big(\lambda_{\rm max}(\theta\theta^\top )Tr[A^2 \theta \theta^\top ]\big). 
$$
By linearity of $\mathbb{E}_\theta $ and $Tr$, together with $\lambda_{\rm max}(\theta\theta^\top )\leq 1$ and $\mathbb{E}_\theta [\theta\theta^\top ] = \Id /d$, 
    \begin{align*}
    \mathbb{E}_\theta[( \theta^\top  A \theta)^2]
    &\leq Tr[A^2 \mathbb{E}_\theta( \theta \theta^\top )] = \frac{\mathrm{Tr}(A^2)}{d}. 
\end{align*}
Using again the von Neumann's trace inequality, $\mathrm{Tr}(A^2)\leq \lambda_{\rm max} (A)\mathrm{Tr}(A)$, and $\mathrm{Tr}(A)\leq d$, so $\mathrm{Tr}(A^2)/d\leq \lambda_{\rm max} (A)$ which proves the first point: 
$$
\mathbb{E}_\theta((\theta^\top A\theta -1)^2)= \mathbb{E}_\theta [(\theta^\top A\theta)^2 + 1 - 2 \theta^\top A\theta] \leq \lambda_{\rm max} (A) + 1.
$$
With the same arguments as above, one can deduce that, for all $p\geq 1$,
$$
\mathbb{E}_\theta[( \theta^\top  A \theta)^p]\leq \frac{\mathrm{Tr}(A^{p})}{d} \leq \lambda_{\rm max}(A)^{p-1}.
$$
Thus, the last claims follows by the binomial theorem,
\begin{align*} 
\mathbb{E}_\theta[(1-\theta^\top A\theta)^p] &= \mathbb{E}_\theta \sum_{r = 0}^p \binom{p}{r} (\theta^\top A\theta)^r(-1)^r \leq 1+\sum_{r = 1}^p \binom{p}{r} \lambda_{\rm max}(A)^{r-1}(-1)^r.
\end{align*}
\end{proof}

\subsection{A sufficient condition under arbitrary covariance}
Let $\mu = \mathcal{N}(0,\Lambda)$ for a general covariance matrix $\Lambda$. 
\begin{proposition}\label{sufficientconditionarbitrarycov}
    For all $p\geq 1$, a sufficient condition for the existence of a finite constant $C_p>0$ such that
    $$
    \sup_{k\in\mathbb{N}}\mathbb{E}[(\lambda_{\rm min} (\Sigma_{k}))^{-p}]  \leq C_p
    $$
    is the following,
    $$
    \mathbb{E}\left(\sum_{k\geq 0}^\infty  \gamma_k \mathbb{E}_\theta \Big[\Big(\frac{\theta^\top \Sigma_k\theta}{\theta^\top \Lambda\theta}\Big)^p-1\Big]\right) < +\infty.
    $$
\end{proposition}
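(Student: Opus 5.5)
The plan is to repeat the argument of \Cref{ControlEigSigk2} with a general $\Lambda$, keeping the one-step multiplicative recursion on $u_k := \lambda_{\rm min}(\Sigma_k)^{-p}$. I then sum the recursion rather than bounding each step by $1$, which is what the isotropic case allowed. The main obstacle, the selection of the direction $\theta_i$ in \Cref{ControlEigSigk}, is discussed in the last paragraph.

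\textbf{Step 1: one-step recursion.} By \Cref{ControlEigSigk}, there is a column $\theta_i$ of $P_{k+1}$ (its index depending on $P_{k+1}$ and $\Sigma_k$) such that $\sqrt{\lambda_{\rm min}(\Sigma_{k+1})}\ge \sqrt{\lambda_{\rm min}(\Sigma_k)}\,(1-\gamma_k+\gamma_k\tau_{\theta_i})$. I invert this and use the harmonic--arithmetic mean inequality, exactly as in the proof of \Cref{ControlEigSigk2}. Setting $R_\theta := \theta^\top\Sigma_k\theta/\theta^\top\Lambda\theta = \tau_\theta^{-2}$, this gives $\lambda_{\rm min}(\Sigma_{k+1})^{-1/2}\le \lambda_{\rm min}(\Sigma_k)^{-1/2}(1-\gamma_k+\gamma_k R_{\theta_i}^{1/2})$. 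I raise this to the power $2p$ and apply convexity of $x\mapsto x^{2p}$ on $[0,\infty)$ for the two-point average with weights $1-\gamma_k$ and $\gamma_k$ (valid for $\gamma_k\in[0,1]$). The result is
\[
u_{k+1}\le u_k\bigl(1+\gamma_k(R_{\theta_i}^{p}-1)\bigr).
\]
This is the same ratio power $p$ that appears in the stated condition.

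\textbf{Step 2: removing the selected index.} The statement involves $\mathbb{E}_\theta$ for a single uniform $\theta$, but $\theta_i$ is a data-dependent column. I would first use $R_{\theta_i}^p-1\le \max_j (R_{\theta_j}^p-1)$ and bound this maximum over the basis. Each column $\theta_j$ of a Haar-distributed $P_{k+1}$ is uniform on $\mathbb{S}^{d-1}$ and independent of $\mathcal{A}_k$. Conditioning on $\mathcal{A}_k$, using that $\Sigma_k$ is $\mathcal{A}_k$-measurable, and bounding the maximum by the sum of the $d$ column terms, I get
\[
\mathbb{E}[u_{k+1}\mid\mathcal{A}_k]\le u_k\bigl(1+d\,\gamma_k\,\delta_k\bigr),\qquad \delta_k:=\mathbb{E}_\theta\bigl[R_\theta^p-1\bigr].
\]
This bound carries the extra factor $d$, and it is only valid once the negative parts are handled (see the last paragraph). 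The factor $d$ is harmless for finiteness.

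\textbf{Step 3: summation.} Writing the recursion additively gives $u_{k+1}-u_k\le d\,\gamma_k u_k\delta_k$ in conditional expectation. Taking full expectations and telescoping yields $\mathbb{E}[u_k]\le \mathbb{E}[u_1]+d\,\mathbb{E}\bigl[\sum_{l\ge 1}\gamma_l u_l\delta_l\bigr]$. To reach the hypothesis as written, I need to pull $u_l$ out. For that I would first establish a crude almost-sure bound on $u_l$ along the event where the partial sums $\sum_{l\le k}\gamma_l\delta_l$ stay bounded, through $u_k\le u_1\exp\bigl(d\sum_{l<k}\gamma_l\delta_l\bigr)$. The stated hypothesis bounds the expected value of this exponent uniformly in $k$. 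Combined with the martingale-type inequality above in the spirit of Robbins--Siegmund, this should give $\sup_k\mathbb{E}[u_k]<\infty$. Here $\mathbb{E}[u_1]<\infty$ follows from $\Sigma_0\succ0$, since $\Sigma_1$ is obtained through one explicit map from \eqref{expliciteUpdateSigma} with $\tau_\theta$ bounded above and below in terms of $\Sigma_0$ and $\Lambda$.

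\textbf{Main obstacle.} The difficult step is Step 2 together with the passage from a bound in expectation to control of a product. Because $\theta_i$ is selected by the Ostrowski bound, the inequality $R_{\theta_i}^p-1\le\max_j(\cdot)$ only allows the uniform expectation to enter through positive parts $(R_\theta^p-1)_+$. Meanwhile $\delta_k$ in the hypothesis is a signed quantity; in the isotropic case its negativity is exactly what made the recursion contract. My first attempt at this point is to reorganize Step 1 so that the bound holds for every column simultaneously rather than for a selected one. I would do this by replacing the Ostrowski bound by the trace identity $\sum_j \theta_j^\top\Sigma_k\theta_j=\mathrm{Tr}(\Sigma_k)$ over the basis, applied to $\lambda_{\rm min}(\Gamma)$ in the proof of \Cref{ControlEigSigk}. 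This would make the signed average $\frac1d\sum_j(R_{\theta_j}^p-1)$ appear directly, whose conditional expectation is exactly $\delta_k$. If that fails, the fallback is to prove the statement with $\delta_k$ replaced by $\mathbb{E}_\theta[(R_\theta^p-1)_+]$.
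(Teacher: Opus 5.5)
Your outline is the paper's: the one-step bound $u_{k+1}\le u_k\bigl(1+\gamma_k(R_{\theta_i}^p-1)\bigr)$, then conditioning on $\mathcal{A}_k$, then telescoping. The paper reaches the stated hypothesis only through the two moves you distrust. First, it treats the Ostrowski-selected column $\theta_i$ as a uniform direction independent of $\mathcal{A}_k$, although $\theta_i$ is in fact the column maximizing $R_{\theta_j}$. Second, it telescopes $\mathbb{E}[u_{k+1}]\le\mathbb{E}[u_k]+\gamma_k\mathbb{E}[u_k\delta_k]$ as if the factor $u_k$ were absent. So you have not missed an idea from the paper. But your proposal leaves both points open, the repairs you sketch fail, and as written it does not prove the statement.

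In Step 3, the almost-sure bound $u_k\le u_1\exp(\cdot)$ holds with the realized quantities $R_{\theta_i}^p-1$ in the exponent, not with their conditional means $\delta_l$. Even with $\delta_l$ there, bounding the expectation of the exponent does not control the expectation of the exponential, since $\mathbb{E}[e^X]\ge e^{\mathbb{E}[X]}$. Robbins--Siegmund only gives almost-sure convergence of $u_k$, not $\sup_k\mathbb{E}[u_k]<\infty$. What the recursion does yield is this: set $\eta_l:=\mathbb{E}[(R_{\theta_i}^p-1)_+\mid\mathcal{A}_l]\le d\,\mathbb{E}_\theta[(R_\theta^p-1)_+]$. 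Then $u_k\exp(-\sum_{l<k}\gamma_l\eta_l)$ is a nonnegative supermartingale, so an almost-sure bound $\sum_l\gamma_l\eta_l\le K$ gives $\sup_k\mathbb{E}[u_k]\le e^K\mathbb{E}[u_1]$. Alternatively, plain telescoping works under $\mathbb{E}[\sum_l\gamma_l u_l\eta_l]<\infty$, with $u_l$ kept inside.

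In Step 2, replacing Ostrowski by the trace identity cannot produce the signed average for general $\Lambda$. The minimum over the basis is intrinsic, not an artifact of the bound. If $\Sigma_k=\mathbf{I}_d$, then $\Sigma_{k+1}=P_{k+1}\bigl((1-\gamma_k)\mathbf{I}_d+\gamma_kD_k\bigr)^2P_{k+1}^\top$, so $\lambda_{\rm min}(\Sigma_{k+1})=(1-\gamma_k+\gamma_k\min_j\tau_{\theta_j})^2$ exactly. Take $d=2$, $P_{k+1}=\mathbf{I}_2$ and $\Lambda=\mathrm{diag}(a,b)$ with $a<1<b$ and $a^{-p}+b^{-p}=2$. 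Then $\frac12\sum_j(R_{\theta_j}^p-1)=0$, whereas $u_{k+1}=(1-\gamma_k+\gamma_k\sqrt a)^{-2p}u_k>u_k$. Hence no bound of the form $u_{k+1}\le u_k\bigl(1+\gamma_k\frac1d\sum_j(R_{\theta_j}^p-1)\bigr)$ can hold.

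Along this route, only your positive-part fallback is provable, and only when combined with an almost-sure or $u_l$-weighted hypothesis as above. That is a weaker result than the statement, which puts the signed quantity $\mathbb{E}_\theta[R_\theta^p-1]$ inside a plain expectation.
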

\begin{proof}
As a byproduct of Proposition \ref{ControlEigSigk}, and proceeding as in the beginning of Proposition \ref{ControlEigSigk2}, the following counterpart of 
\eqref{BoundsEigenvaluesINF2} holds, 
$$
      (\lambda_{\rm min} (\Sigma_{k+1}))^{-1}
      \leq (\lambda_{\rm min}(\Sigma_k))^{-1} \big(1-\gamma_k + \gamma_k \frac{\theta_i^\top \Sigma_k\theta_i}{\theta_i^\top \Lambda\theta_i}\big).
$$
Fix $p\geq 1$, and apply the power $p$ and Jensen's inequality to obtain that 
$$
      (\lambda_{\rm min} (\Sigma_{k+1}))^{-p}
      \leq (\lambda_{\rm min}(\Sigma_k))^{-p} \big(1-\gamma_k + \gamma_k \Big(\frac{\theta_i^\top \Sigma_k\theta_i}{\theta_i^\top \Lambda\theta_i}\Big)^p\big).
$$
Taking the conditional expectation, 
$$
      \mathbb{E}[(\lambda_{\rm min} (\Sigma_{k+1}))^{-p}\vert \mathcal{A}_k]
      \leq (\lambda_{\rm min}(\Sigma_k))^{-p} \big(1+ \gamma_k \mathbb{E}_\theta \Big[\Big(\frac{\theta^\top \Sigma_k\theta}{\theta^\top \Lambda\theta}\Big)^p-1\Big]).
$$
Taking the expectation and reasoning by induction, we deduce that
$$
      \mathbb{E}[(\lambda_{\rm min} (\Sigma_{k+1}))^{-p}]
      \leq \mathbb{E}[(\lambda_{\rm min}(\Sigma_0))^{-1}] + \mathbb{E}\left(\sum_{j=0}^k  \gamma_k \mathbb{E}_\theta \Big[\Big(\frac{\theta^\top \Sigma_k\theta}{\theta^\top \Lambda\theta}\Big)^p-1\Big]\right).
$$
Thus: 
 $
    \forall p\geq 1, \exists C_p>0,\, \, \sup_{k\in\mathbb{N}}\mathbb{E}[(\lambda_{\rm min} (\Sigma_{k}))^{-p}]  \leq C_p.
    $
\end{proof}

\section{Proof of our main result: Theorem \ref{VitesseN01}}\label{ProofMainResult}

\subsection{Proof of Proposition \ref{ASconvergencegaussianHyp}}\label{proofAScvgceGauss}
Recall that \citet{robbins1971convergence} implies $(\mathscr{F}(\sigma_k))_{k \geq 0}$ converges almost surely to a finite random variable, as a direct byproduct of \eqref{recursiveIneq}. Hence, one only needs to show that the limit random variable is zero $a.s$. 
For this, two assumptions are needed in \citet[Theorem 2]{li2023measure}, with standard arguments
\citep{duflo1996algorithmes}:
(i) $(\sigma_k)_{k \geq 1}$ remains in a compact subset $K$ of $(\mathcal{P}_{2,ac}(\RR^d),W_2)$ and (ii) $
\nabla \mathscr{F}(\sigma) = 0\Longrightarrow \sigma = \mu
$ for $\sigma\in K$. 
Under continuity and compact supports, \citet{li2023measure} show (i), while (ii) holds for absolutely continuous measures from \citep[Lemma~5.7.2]{bonnotte2013unidimensional}. 
Taken together, these two facts imply almost-sure convergence. 

To extend this result to Gaussian measures, we verify (i) and (ii) hereafter. 
By \citet[Propositions 4 and 5]{li2023measure}, 
the iterates $\sigma_k$ remain in a compact set of $(\mathcal{P}_{2,ac}(\mathbb{R}^d),W_2)$ if $\sigma,\mu \in \mathcal{P}_{2,ac}(\mathbb{R}^d)$ and have finite third-order moments. 
This is verified for Gaussian distributions, hence there is a set $K$ such that (i) holds. In addition, by \Cref{ControlEigSigk}, each iterate $\sigma_k$ remains Gaussian with strictly positive definite covariance $\Sigma_k$ for every finite $k$.
Therefore, regarding (ii), \Cref{KLGaussians} gives that
\begin{equation*}
    \mathscr{F}(\sigma_k)^2 \leq 2\mathrm{M}_2(\mu)\Big(1+\frac{\lambda_{\rm max}(\Lambda)}{\lambda_{\rm min}(\Sigma_k)}\Big) \| \nabla_{W_2} \mathscr{F}(\sigma_k) \|_{\sigma_k}^2,
\end{equation*}
  where we also use that $W_2^2(\sigma_k,\mu)\leq 4\mathrm{M}_2(\mu)$ from Proposition \ref{prop_momentsbounded}.
Consequently, (ii) is verified under general Gaussian covariances, and the desired result follows.

\subsection{Proof of Theorem \ref{VitesseN01}}\label{proofThmN01}
    By \Cref{corollarydiagGaussians}, the following PL condition holds for any $k \geq 1$,
\begin{equation}\label{resultCorPLIsot}
   \mathscr{F}(\sigma_k) \leq  \frac{C_{k,d}}{2}
\Bigl(1 + \frac{1}{\lambda_{\min}(\Sigma_k)}\Bigr)\Vert \nabla_{W_2} \mathscr{F}(\sigma_k) \Vert_{\sigma_k}^2\,,
\end{equation}
    with $C_{k,d} = d(d+2) M_k/m_k$, $M_k = \max(\lambda_{\rm max}(\Sigma_k),1)$ and $m_k = \min(\lambda_{\rm min}(\Sigma_k),1)$.    
    By \Cref{ControlEigSigk2}, $M_k\leq d$, thus $C_{k,d}\leq d^2(d+2)/m_k$. 
    Additionally, by \Cref{prop_momentsbounded}, we have $\mathrm{Tr}(\Sigma_k) \leq \mathrm{Tr}(\Lambda)$ where $\Lambda$ denotes the covariance matrix of the target Gaussian. A contradiction argument then implies $\lambda_{\rm min}(\Sigma_k) \leq \lambda_{\rm max}(\Lambda)$, and in the special case $\Lambda = {\bf I}_d$, this gives $\lambda_{\rm min}(\Sigma_k) \leq 1$. Therefore, $C_{k,d}\leq d^2(d+2)/\lambda_{\rm min}(\Sigma_k)$ (since $m_k = \lambda_{\rm min}(\Sigma_k)$), and
    $1 + 1/{\lambda_{\rm \min}(\Sigma_k})\leq 2/{\lambda_{\rm \min}(\Sigma_k})$. Therefore, \eqref{resultCorPLIsot} entails that 
    \begin{equation} \label{eq:pl_proof}
      \mathscr{F}(\sigma_k) 
      \leq
      \frac{d^2(d+2)}{\lambda_{\rm min}(\Sigma_k)^2}
      \Vert \nabla_{W_2} \mathscr{F}(\sigma_k) \Vert_{\sigma_k}^2 \,.
    \end{equation}
Denote by $B_k = {d^2(d+2)}/{\lambda_{\rm min}(\Sigma_k)^2}$. \Cref{ControlEigSigk2} gives us that all the moments of $B_k$ are finite: $\sup_k\mathbb{E}[B_k^p] \leq c_p<+\infty$ for all $p\in \mathbb{N}^*$.  
In other words, the expected PL inequality in \Cref{hyp_PL} for $\tau=1$ holds along the iterates $\sigma_k$. 
Thus, the result is a byproduct of  \Cref{thmPL}.

\subsection{Proof of Theorem \ref{thmPL}}
 
    
\paragraph{Random events and main recursion.}
 Since $(B_k)_{k\geq 1}$ is a sequence of random variables (\Cref{hyp_PL}), we condition the analysis on the event $G_k = \{ B_k \leq 1/g_k\}$ to apply the PL inequality. 
This is done by introducing a sequence of positive numbers $(g_k)_{k \geq 1}$ with $\lim_{k \to +\infty} g_k=0$, so that $\mathds{1}_{G_k}$ converges to $1$ almost surely.   
Denote by $G_k^c$ the complementary event: $G_k^c = \{ B_k > 1/g_k\}$. 

 \paragraph{Expected PL inequality.} We begin with the rates obtained under
Assumption \ref{hyp_PL} with $\tau =1$. 
Using the descent lemma \eqref{recursiveIneq} and the PL inequality of Assumption \ref{hyp_PL} on the event $G_k$,
    \begin{align*}
    \mathbb{E}[\mathscr{F}(\sigma_{k+1}) \vert \mathcal{A}_{k}] 
    &\leq
    (1+\gamma_k^2)\mathscr{F}(\sigma_k) -\gamma_k\Vert \nabla_{W_2} \mathscr{F}(\sigma_k) \Vert_{\sigma_k}^2 (\mathds{1}_{G_k} + \mathds{1}_{G_k^c}),\\
    &\leq (1+\gamma_k^2)\mathscr{F}(\sigma_k) -\gamma_k g_k \mathscr{F}(\sigma_k)\mathds{1}_{G_k}
    -\gamma_k
    \Vert \nabla_{W_2} \mathscr{F}(\sigma_k) \Vert_{\sigma_k}^2   \mathds{1}_{G_k^c}. 
    \end{align*}
 Now, we can plug the decomposition \eqref{decompL_Gradient}, that is $2\mathcal{F}(\sigma_k) - \Vert \nabla_{W_2} \mathscr{F}(\sigma_k) \Vert_{\sigma_k}^2 = E_k$ 
 for $E_k = \mathbb{E}_P\!\left[\Vert \overline{T}_{\sigma_k} - T_{{\sigma_k},P}\Vert_{\sigma_k}^2\right]$. 
 This implies 
 \begin{align}
    \mathbb{E}[\mathscr{F}(\sigma_{k+1}) \vert \mathcal{A}_{k}] 
    &\leq (1+\gamma_k^2)\mathscr{F}(\sigma_k) -\gamma_k g_k \mathscr{F}(\sigma_k)\mathds{1}_{G_k}
    - \gamma_k
    2\mathscr{F}(\sigma_k)   \mathds{1}_{G_k^c} 
    + \gamma_k E_k\mathds{1}_{G_k^c},\noindent\\
    &\leq (1+\gamma_k^2-\gamma_kg_k)\mathscr{F}(\sigma_k)\mathds{1}_{G_k} 
    + 
    (1-\gamma_k)^2\mathscr{F}(\sigma_k)\mathds{1}_{G_k^c}
    + \gamma_k E_k\mathds{1}_{G_k^c}.\label{finalRec3terms}
    \end{align}
    The first term holds under the event $G_k$, where the PL inequality holds. 
    Depending on the choice of $g_k$
    (which, for now, only needs to converge to zero), this will result in an exponential rate governed by $\exp(\sum_{j=0}^k \gamma_k^2 - \gamma_k g_k)$ since $\prod_i (1+ a_i) \leq \sum_i \exp(a_i)$.  
    
    On the event $G_k^c$, we obtained two terms from the decomposition  \eqref{decompL_Gradient}. 
    One first recovers $\mathscr{F}(\sigma_k)(1-\gamma_k)^2$, which, if it were alone, would also imply an exponential rate. 
    The final rate will thus be governed by $\gamma_k E_k \mathds{1}_{G_k^c}$. 
    
    \noindent In the absence of any convergence result for $E_k$, we observe that $E_k \leq 2\mathcal{F}(\sigma_k) \leq 4 \mathrm{M}_2(\mu)$ from \Cref{prop_momentsbounded}. 
    Therefore, $\gamma_k E_k \mathds{1}_{G_k^c} \leq \gamma_k 4 \mathrm{M}_2(\mu)\mathds{1}_{G_k^c}$, and the final rate will be governed solely by $\mathds{1}_{G_k^c}$. 
    Combining this with the recursion \eqref{finalRec3terms} and using that 
     $-2\gamma_k \leq -\gamma_k g_k$, we obtain
 \begin{align*}
    \mathbb{E}[\mathscr{F}(\sigma_{k+1})] 
    &\leq (1+\gamma_k^2-\gamma_kg_k)\mathbb{E}[\mathscr{F}(\sigma_k)]
    + 4 \mathrm{M}_2(\mu) \gamma_k \mathbb{E}[\mathds{1}_{G_k^c}]
    \end{align*}
    after taking the expectation. 
     By Markov's inequality, 
    \begin{equation}\label{PBkpMarkovIneq}
        \forall p\in \mathbb{N}^*, \quad \mathbb{P}(G_k^c) = \mathbb{P}(B_k > 1/g_k) \leq g_k^p \mathbb{E}[B_k^p]\leq c_pg_k^p. 
    \end{equation}
    Here, $p$ can be chosen freely. 
    Choosing a larger value of $p$ improves the decay of the factor $g_k^p$, at the cost of increasing the constant $c_p$.
    This yields the main recursion: for any $p\in \mathbb{N}^*$, 
     \begin{align}\label{RecursionFsigmak}
    \mathbb{E}[\mathscr{F}(\sigma_{k+1})] 
    &\leq (1+\gamma_k^2-\gamma_kg_k)\mathbb{E}[\mathscr{F}(\sigma_k)]
    + 4 \mathrm{M}_2(\mu)c_p \gamma_k g_k^p. 
    \end{align}
    It only remains to choose $(g_k)$ and $p$. For two different choices, we obtain two different rates, whose optimality depends on $\alpha$. 

    \paragraph{Case 1: $0<\alpha<2/3$.}
    For $\gamma_k = 1/(k+1)^\alpha$, let $g_k = 1/(k+1)^\epsilon$ so that
the objective rephrases as choosing $\epsilon$ and $p$.  
For any $k\geq 0$, one has for $\epsilon < \alpha$,
    $$
    \gamma_k^2 - \gamma_kg_k 
    =
    \frac{1}{(k+1)^{\alpha+\epsilon}}\Big(\frac{1}{(k+1)^{\alpha-\epsilon} }-1 \Big)
    \leq 
    \frac{1}{(k+1)^{\alpha+\epsilon}}\Big(\frac{1}{2^{\alpha-\epsilon} }-1 \Big).
    $$
    In other words, $\gamma_k^2 - \gamma_kg_k = -a/(k+1)^{\alpha+\epsilon}$ for $a = 1 - 1/2^{\alpha - \epsilon} \in (0,1)$. Thus,   
\eqref{RecursionFsigmak} rewrites
\begin{align}\label{RecursionFsigmak2}
    \mathbb{E}[\mathscr{F}(\sigma_{k+1})] 
    &\leq (1- \frac{a}{(k+1)^{\alpha + \epsilon}})\mathbb{E}[\mathscr{F}(\sigma_k)]
    + 4 \mathrm{M}_2(\mu)c_p \frac{1}{(k+1)^{\alpha + \epsilon p}}. 
    \end{align}
The desired rate follows from \citet[Lemma A.3]{BBAOS}, which is a variant of Chung's Lemma \citep{chung1954stochastic} (see also \citet[Theorem 1]{moulines2011non}).
To allow for $\alpha < 1/2$, we rewrite \citet[Lemma A.3]{BBAOS} in Lemma \ref{lemmaBB}.
This gives the existence of a  
 constant $C>0$ such that
\begin{equation}\label{AlmostRate}
\mathbb{E}[\mathscr{F}(\sigma_{k})]
\leq 
\frac{C}{k^{\epsilon(p-1)}},
\end{equation}
under $\epsilon < \alpha$ and conditions that rephrase as
$\alpha + \epsilon <1 < \alpha + \epsilon p.
$
Equivalently, the above needs
\begin{equation*}
\epsilon < \min(\alpha,1-\alpha)
\qquad \mbox{and} \qquad
1 -\alpha <\epsilon p.
\end{equation*}
For all $0<\epsilon< \min(\alpha,1-\alpha)$, there is $p\geq 1$ that meets this condition. 
Now, $1-\alpha-\epsilon< \epsilon(p-1)$ yields 
\begin{equation*}
\forall \epsilon < \min(\alpha,1-\alpha): \qquad 
\mathbb{E}[\mathscr{F}(\sigma_{k})]
\leq 
\frac{C}{k^{\epsilon(p-1)}}
\leq \frac{C}{k^{1-\alpha-\epsilon}}.
\end{equation*}
In particular, this is true for $0 < \alpha <1$, even if $0< \alpha <1/2$ does not satsify the Robbins-Monro conditions. 

\paragraph{Case 2: $2/3<\alpha<1$.} We now turn to the second rate, that is faster in the regime $2/3<\alpha<1$. 
One can use \Cref{prop_momentsbounded} to bound $\gamma_k^2 \mathbb{E}[\mathscr{F}(\sigma_k)]$ by $2\mathrm{M}_2(\mu)\gamma_k^2$, hence \eqref{RecursionFsigmak} becomes
\begin{align}\label{RecursionFsigmak2}
    \mathbb{E}[\mathscr{F}(\sigma_{k+1})] 
    &\leq (1-\gamma_kg_k)\mathbb{E}[\mathscr{F}(\sigma_k)]
    + 2 \mathrm{M}_2(\mu)\gamma_k^2
    + 4 \mathrm{M}_2(\mu)c_p \gamma_k g_k^p. 
    \end{align}
By choosing $\gamma_k = 1/(k+1)^\alpha$ and $g_k = 1/(k+1)^{1-\alpha}$, 
we have $\gamma_kg_k = 1/(k+1)$ and $g_k^{p}= 1/(k+1)^{(1-\alpha)p}$. 
    Therefore, $g_k^{p} \leq \gamma_k$ as soon as one chooses $p\geq \alpha/(1-\alpha)$. 
    In this case, \eqref{RecursionFsigmak2} becomes, for $C=2\mathrm{M}_2(\mu)(1 + 2c_p)$, 
    $$
    \mathbb{E}[\mathscr{F}(\sigma_{k+1}) ] \leq \Big(1  - \frac{1}{k+1}\Big)\mathbb{E}[\mathscr{F}(\sigma_k)] + \frac{C}{(k+1)^{2\alpha}} \,.
    $$
   The desired rate follows directly from Chung's Lemma \citep{chung1954stochastic}: 
    $$
    \mathbb{E}[\mathscr{F}(\sigma_{k}) ] \leq \frac{C}{k^{2\alpha-1}} \,.
    $$

\begin{remark}[Comparison of constants]
When $\alpha<1/2$, $\min(\alpha,1-\alpha)=\alpha$, and one needs $p> (1-\alpha)/\epsilon > (1-\alpha)/\alpha$. 
When $\alpha > 2/3$, the second rates requires $p \leq \alpha / (1-\alpha)$. 
Thus, when $\alpha \rightarrow 0$ or when $\alpha \rightarrow 1$, the order of magnitude of $p$ is the same, and a fortiori the constant $c_p$. Hence, constants in both rates are similar, and this does not explain the faster convergence observed when $\alpha$ is almost $0$ in practice. 
\end{remark}

\color{black}

\paragraph{Expected PL-like inequality.}
    We now turn to show the second result. 
Using the descent lemma \eqref{recursiveIneq}, Assumption \ref{hyp_PL} with $\tau =2$ would imply instead 
    \begin{equation}\label{RecInIndic0_KL}
    \mathbb{E}[\mathscr{F}(\sigma_{k+1})  \vert \mathcal{A}_{k}]  \leq  \mathscr{F}(\sigma_k)\mathds{1}_{G_k} - \gamma_k g_k \mathscr{F}(\sigma_k)^2\mathds{1}_{G_k} + \mathscr{F}(\sigma_k)\mathds{1}_{G_k^c} + \mathscr{F}(\sigma_k) \gamma_k^2,
    \end{equation} 
    where, in the last term, we also do not bound $\mathscr{F}(\sigma_k)$ by $ 2\mathrm{M}_2(\mu)$. 
    After taking the expectation, Markov inequality can be applied as in
    \eqref{PBkpMarkovIneq} to deduce from \eqref{RecInIndic0_KL} that   
    \begin{equation}\label{RecIneqIndicGkkl_}
    \mathbb{E}[\mathscr{F}(\sigma_{k+1}) ] 
    \leq \mathbb{E}[\big(\mathscr{F}(\sigma_k) 
    - \gamma_k g_k \mathscr{F}(\sigma_k)^2\big)\mathds{1}_{G_k}] + 
 2\mathrm{M}_2(\mu) \sqrt{c_p} g_k^{p/2} + \mathbb{E}[\mathscr{F}(\sigma_k)] \gamma_k^2.
    \end{equation}
    To remove $\mathds{1}_{G_k}$ above, note that
    $
    \gamma_kg_k \mathscr{F}(\sigma_k)^2 \leq (2\mathrm{M}_2(\mu))^{-1} \mathscr{F}(\sigma_k)^2  \leq \mathscr{F}(\sigma_k),
    $
    as soon as $\gamma_kg_k \leq (2\mathrm{M}_2(\mu))^{-1}$. 
    Because $g_k$ is a flexible choice, this just means that $k$ needs to be large enough.  
For such $k$, we deduce 
$\mathbb{E}[\big(\mathscr{F}(\sigma_k)-\gamma_k g_k \mathscr{F}(\sigma_k)^2\big)\mathds{1}_{G_k^c}]\geq 0$. Adding this to \eqref{RecIneqIndicGkkl_} gives
    \begin{equation}\label{recKLalmostthere}
    \mathbb{E}[\mathscr{F}(\sigma_{k+1}) ] 
    \leq \mathbb{E}[\mathscr{F}(\sigma_k)](1+\gamma_k^2)
    - \gamma_k g_k \mathbb{E}[\mathscr{F}(\sigma_k)]^2 + 
 2\mathrm{M}_2(\mu) \sqrt{c_p} g_k^{p/2},
    \end{equation}
    where we also use that $\mathbb{E}[\mathscr{F}(\sigma_k)]^2 \leq \mathbb{E}[\mathscr{F}(\sigma_k)^2]$ by Jensen's inequality. 
    Now, all that remains is to play around with the constants to obtain the recursion necessary for an extended Chung's lemma.

    Denote by $C =  2\mathrm{M}_2(\mu) \sqrt{c_p}$ and fix $\gamma= C^{3/2}$.
    Let $\gamma_k = 1/(k+\gamma)^\alpha$ with $1/2<\alpha <2/3$. 
    Let $g_k = 1/(k+\gamma)^{2-3\alpha}$ hence 
    $$
    \gamma_k g_k = \frac{1}{(k+\gamma)^{2-2\alpha}}
    \hspace{1cm} \mbox{and} \hspace{1cm} 
    g_k^{p/2} = \frac{1}{(k+\gamma)^{p(1-3\alpha/2)}}.
    $$
    This leads to $g_k^{p/2} \leq 1/(k+\gamma)^{2\alpha}$ if $p\geq 2\alpha/(2-3\alpha)$, hence \eqref{recKLalmostthere} rewrites 
\begin{equation}\label{RecIneqIndicGkkl_2}
    \mathbb{E}[\mathscr{F}(\sigma_{k+1}) ] 
    \leq \mathbb{E}[\mathscr{F}(\sigma_k)]\big(1+\frac{1}{(k+\gamma)^{2\alpha}}\big)
    - \frac{1}{(k+\gamma)^{2-2\alpha}}
     \mathbb{E}[\mathscr{F}(\sigma_k)]^2 + 
    \frac{C}{(k+\gamma)^{2\alpha}}. 
    \end{equation}
    Recall that this holds as soon as $\gamma_kg_k \leq (2\mathrm{M}_2(\mu))^{-1}$, which is equivalent to $(k+\gamma)^{2-2\alpha}\geq 2\mathrm{M}_2(\mu)$. 
But since $\alpha <2/3$, $(k+\gamma)^{2-2\alpha}\geq (k+\gamma)^{2/3}\geq \gamma^{2/3}\geq C \geq 2\mathrm{M}_2(\mu)$.
So the recursion \eqref{RecIneqIndicGkkl_2} holds for all $k\geq 0$. 

We stress that this relates to an extension of Chung's Lemma in the case of a PL-type inequality with $\tau = 2$.  
\citet[Theorem 4]{moulines2011non} deals with a similar recursion, and 
\citet[Lemma 19]{jiang2024generalizedversionchungslemma} generalizes this in several ways. 
Thus, it only remains to verify that \eqref{RecIneqIndicGkkl_2} fulfills the correct requirements. 

To stick to the notations of \citet[Lemma 19]{jiang2024generalizedversionchungslemma}, we introduce $y_{k} = \mathbb{E}[\mathscr{F}(\sigma_k)]$, $a_k = 1/(k+\gamma)^{2-2\alpha}$, $\ell_1 = \ell_2= 1$, $\ell_3=C$, $\tau = 2\alpha/(2-2\alpha)$, so that \eqref{RecIneqIndicGkkl_2} rewrites, for all $k\geq 0$,
\begin{equation*}
    y_{k+1} \leq (1 + \ell_1a_k^\tau)y_k - \ell_2 a_k y_k^2 + \ell_3 a_k^{\tau}.
\end{equation*}

This is exactly the recursion in \citet[Lemma 19]{jiang2024generalizedversionchungslemma}, and our parameters lead to their statement $(b)$. 
Again, with their notations,  
$\zeta = C^{1/2}$, $u_2 = 2\alpha - 1$, 
$p = \rho$, which fulfills the requirements 
$$
1 \geq \big(\frac{2u_2}{\zeta}\big)^\rho 
\quad \mbox{and} \quad 
\gamma \geq \max\{ (\frac{1}{\zeta})^{1/u_2},\zeta\}=\zeta,
$$
thus leading to
$$
y_{k+1}\leq 4 C^{1/2} (k+1+\gamma)^{-u_2}  
+ y_0 \big( \gamma^{-1}(k+1+\gamma)\big)^{-\zeta}.
$$
This is the desired result, as it rewrites
$$
\mathbb{E}[\mathscr{F}(\sigma_k)] \leq 
\frac{4 C^{1/2}}{ (k+\gamma)^{2\alpha - 1} }
+ \frac{\mathbb{E}[\mathscr{F}(\sigma_0)]}{
\big( \gamma^{-1}(k+\gamma)\big)^{\sqrt{C}}},
$$ 
and the second term in the above is faster than the first one. \\

\paragraph{Auxiliary lemma.} The next lemma is taken from \citet[Lemma A.3]{BBAOS}. We just verify that the proof holds without the assumptions $\beta <2$ and $\beta\leq 2\alpha$.  
\begin{lemma}\label{lemmaBB}
    Let $Z_k$ be a sequence of positive numbers satisfying, for all $k\geq 0$, 
    $$
    Z_{k+1} \leq \big( 1- \frac{a}{(k+1)^\alpha} \big)Z_k + \frac{b}{(k+1)^\beta},
    $$
    where $a,b,\alpha,\beta$ are positive constants satisfying $a\leq 1$ and $\alpha<1< \beta$. Then, there exists a positive constant $C$ such that 
    $$
    Z_k \leq \frac{C}{k^{\beta-\alpha}}.
    $$
\end{lemma}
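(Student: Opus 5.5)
This is a Chung-type lemma, and I would prove it by unrolling the recursion and controlling the products through exponentials. Set $\Pi_{j,k} = \prod_{i=j}^{k-1}\bigl(1 - a/(i+1)^\alpha\bigr)$, with the empty product equal to $1$. Since $a\le 1$, every factor lies in $[0,1)$. Iterating the inequality gives
\begin{equation*}
Z_k \le \Pi_{0,k} Z_0 + b\sum_{j=0}^{k-1} \frac{\Pi_{j+1,k}}{(j+1)^\beta}.
\end{equation*}
Using $1-x\le e^{-x}$, we get $\Pi_{j,k}\le \exp\bigl(-a\sum_{i=j+1}^{k} i^{-\alpha}\bigr)$. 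Because $\alpha<1$, comparison with an integral yields $\sum_{i=j+1}^{k} i^{-\alpha}\ge \frac{1}{1-\alpha}\bigl((k+1)^{1-\alpha}-(j+1)^{1-\alpha}\bigr)$.

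The first term, $\Pi_{0,k}Z_0$, decays like $\exp(-c k^{1-\alpha})$. This is faster than any polynomial, so it is at most $C k^{-(\beta-\alpha)}$. Nothing here uses $\beta<2$ or $\beta\le 2\alpha$.

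For the sum, I split the indices at $j = \lfloor k/2\rfloor$.

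\textbf{Early indices $j<k/2$.} Here $(k+1)^{1-\alpha}-(j+1)^{1-\alpha}\ge c' k^{1-\alpha}$, so each product is at most $\exp(-c'' k^{1-\alpha})$. The sum $\sum_j (j+1)^{-\beta}$ is finite because $\beta>1$. So this part is again stretched-exponentially small, hence $O(k^{-(\beta-\alpha)})$.

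\textbf{Late indices $k/2\le j<k$.} Here $(j+1)^{-\beta}\le 2^\beta k^{-\beta}$, so the contribution is at most $2^\beta b\,k^{-\beta}\sum_{j\ge k/2}\Pi_{j+1,k}$. It remains to show $\sum_{j<k}\Pi_{j+1,k}\lesssim k^{\alpha}$.

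\begin{itemize}
\item On this range, $i^{-\alpha}\ge k^{-\alpha}$, so $\Pi_{j+1,k}\le \exp\bigl(-a(k-j-1)k^{-\alpha}\bigr)$.
\item The resulting geometric sum is at most $1/\bigl(1-e^{-a k^{-\alpha}}\bigr)$.
\item Since $a k^{-\alpha}\le 1$, the elementary bound $1-e^{-x}\ge x/2$ on $[0,1]$ gives $\frac{1}{1-e^{-a k^{-\alpha}}}\le \frac{2k^\alpha}{a}$.
\end{itemize}

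Combining these gives $Z_k\le C k^{\alpha-\beta}$ for all $k\ge 1$, with the constant enlarged to cover small $k$.

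The main obstacle is the late-index estimate. The point to check is that the geometric-sum bound produces exactly $k^{\alpha}$ and not a worse power, and that it uses only $\alpha<1<\beta$ and $a\le 1$. In particular, nothing like $\beta\le 2\alpha$ should enter, which is why that assumption from \citet{BBAOS} can be dropped.
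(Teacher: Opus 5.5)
Your proof is correct. Its skeleton matches the paper's: you unroll the recursion, bound the products by $\exp(-a\sum i^{-\alpha})$, split the sum near $k/2$, and dispose of the $Z_0$ term and the early block by stretched-exponential decay. The paper splits at an $m$ with $k/2\le m\le 3k/4$, which gives the early block a clean gap $(k+1)^{1-\alpha}-m^{1-\alpha}$ of order $k^{1-\alpha}$. Your split at $k/2$ needs $k$ large for the same gap, and your final enlargement of $C$ covers that.

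The real difference is in the late block. The paper uses the exact identity $P_{\ell+1}^k-P_\ell^k=\frac{a}{\ell^\alpha}P_{\ell+1}^k$ to rewrite $P_{\ell+1}^k\ell^{-\beta}=\frac1a\bigl(P_{\ell+1}^k-P_\ell^k\bigr)\ell^{-(\beta-\alpha)}$. It then pulls out $\ell^{-(\beta-\alpha)}\le 2^{\beta-\alpha}k^{-(\beta-\alpha)}$ and telescopes the remaining sum to at most $1$. You instead pull out $(j+1)^{-\beta}\le 2^{\beta}k^{-\beta}$. You then bound the unweighted sum of products by a geometric series with ratio $e^{-ak^{-\alpha}}$, recovering the missing factor $k^{\alpha}$ from $1/(1-e^{-ak^{-\alpha}})\le 2k^\alpha/a$.

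What each buys:
\begin{itemize}
\item Your version relies on the step sizes $a/i^\alpha$ being decreasing, so they are bounded below by $ak^{-\alpha}$ on the whole window, and on $ak^{-\alpha}\le 1$.
\item The telescoping version needs neither. The bound $\sum_{\ell}\frac{a}{\ell^\alpha}P^k_{\ell+1}\le 1$ holds for any step sequence with values in $[0,1]$, which is why it is the standard Chung-lemma device.
\item The telescoping version also gives the slightly better constant $2^{\beta-\alpha}/a$ instead of your $2^{\beta+1}/a$.
\end{itemize}
Both arguments use only $a\le 1$ and $\alpha<1<\beta$. Both therefore confirm that the conditions $\beta<2$ and $\beta\le 2\alpha$ from the original lemma can be dropped.
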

\begin{proof}
    Proceeding as in \citet[Lemma A.3]{BBAOS}, with $0\leq a\leq 1$ and $0\leq\alpha< 1$,
    \begin{equation}\label{boundZk}
    Z_k \leq \exp(\eta(1-k^{1-\alpha}))Z_0 + b \sum_{\ell=1}^k P_{\ell+1}^k \frac{1}{\ell^\beta}
    \end{equation}
    where $\eta=a/(1-\alpha)$ and
    $$
    P_\ell^k = \prod_{i = \ell}^k (1-\frac{a}{i^\alpha}),
    $$
    with the convention that $P_{k+1}^k=1$. 
    For some integer $m$ such that $2k \leq 4m\leq 3k$, 
    \begin{equation}\label{DecompPkm}
    \sum_{\ell=1}^k P_{\ell+1}^k \frac{1}{\ell^\beta} \leq P_{m+1}^k\sum_{\ell=1}^m  \frac{1}{\ell^\beta} + \sum_{\ell=m+1}^k P_{\ell+1}^k \frac{1}{\ell^\beta},
    \end{equation}
    since $P_{m+1}^k\geq P_{\ell+1}^k$ for all $\ell \leq m$. Besides, for $\xi=1 - (3/4)^{1-\alpha}$, 
    $$
    P_{m+1}^k \leq \exp (-a\sum_{i = m+1}^k \frac{1}{i^\alpha} )\leq \exp (\eta (2-\alpha - \xi k^{1-\alpha})),
    $$
    where the first inequality uses $1-x\leq e^{-x}$ and the second
    uses that $m\leq (3/4)k$ and
    $$
    \sum_{i = m+1}^k \frac{1}{i^\alpha} \geq \frac{1}{1-\alpha} (k^{1-\alpha} - m^{1-\alpha} ) - \frac{2-\alpha}{1-\alpha} \geq \frac{\xi k^{1-\alpha} - (2-\alpha)}{1-\alpha}.
    $$
    Plugging this in \eqref{DecompPkm} together with the bound $\sum_{\ell=1}^m  \frac{1}{\ell^\beta} \leq 1 + \frac{1}{\beta-1}$ that holds since $\beta >1$, 
    \begin{equation}\label{DecompPkm2}
    \sum_{\ell=1}^k P_{\ell+1}^k \frac{1}{\ell^\beta} \leq \exp (\eta (2-\alpha - \xi k^{1-\alpha}))(1 + \frac{1}{\beta-1}) + \sum_{\ell=m+1}^k P_{\ell+1}^k \frac{1}{\ell^\beta}.
    \end{equation}
    Regarding the right term above, remark that
    $$
    P_{\ell+1}^k - P_{\ell}^k = \frac{a}{\ell^\alpha} P_{\ell+1}^k. 
    $$
    Therefore, 
    \begin{align*}
         \sum_{\ell=m+1}^k P_{\ell+1}^k \frac{1}{\ell^\beta} &= \frac{1}{a} \sum_{\ell = m+1}^k (P_{\ell+1}^k - P_{\ell}^k) \frac{1}{\ell^{\beta-\alpha}},
    \end{align*}
    where we recall that $\beta > \alpha$. Because $\ell \geq m \geq k/2 $,
        \begin{align*}
         \sum_{\ell=m+1}^k P_{\ell+1}^k \frac{1}{\ell^{\beta}}
         &\leq \frac{2^{\beta-\alpha}}{ak^{\beta-\alpha}} \sum_{\ell = m+1}^k P_{\ell+1}^k - P_{\ell}^k \leq   \frac{2^{\beta-\alpha}}{ak^{\beta-\alpha}} \big(P_{k+1}^k - P_{m+1}^k \big) 
         \leq \frac{2^{\beta-\alpha}}{ak^{\beta-\alpha}}. 
    \end{align*}
    Combining this with \eqref{DecompPkm2} and the bound \eqref{boundZk} gives the final recursion, and the desired rate, 
    $$
    Z_k \leq \exp(\eta(1-k^{1-\alpha}))Z_0 + b \exp (\eta (2-\alpha - \xi k^{1-\alpha}))(1 + \frac{1}{\beta-1}) +\frac{b}{a}\frac{2^{\beta-\alpha}}{k^{\beta-\alpha}}.
    $$
\end{proof}

\color{black}

\section{Additional Numerical Experiments}\label{OtherNumExp}

\subsection{Continuous setting with explicit updates}
In \Cref{ContinuousIDTtargetGeneral}, we extend the experiment of \Cref{ContinuousIDTtargetstandard} by considering a non-isotropic target distribution 
$\mu = \mathcal{N}(0,\Lambda)$, where $\Lambda$ is a diagonal matrix with entries drawn from a Gaussian distribution of mean $10$ and variance $1$ (negative values are discarded).  
Conclusions are similar in this general-covariance setting, where our analysis provide convergence rates only up to the condition \eqref{eq:sufficient_condition}.

\begin{figure}[h!]
    \centering
    {\subfigure[Setting]{\includegraphics[width=0.18\textwidth]{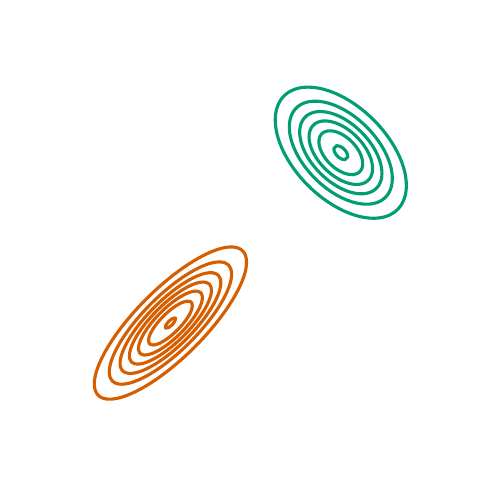} \label{Viz_BetweenGaussiansContinuous2}}}
    {\subfigure[$\alpha=0$]{\includegraphics[width=0.19\textwidth]{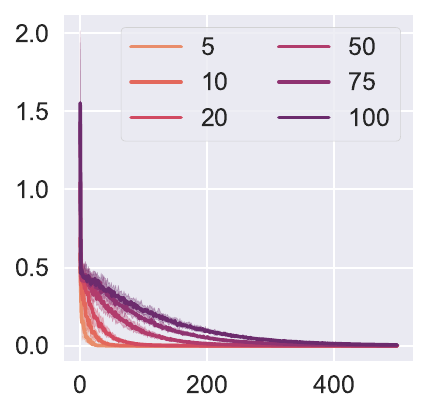} }}
    {\subfigure[$\alpha=0.1$]{\includegraphics[width=0.19\textwidth]{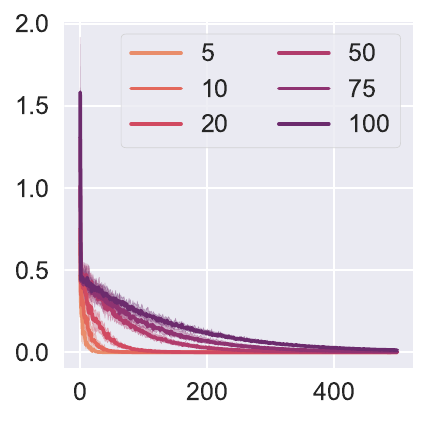} }}
    {\subfigure[$\alpha=0.51$]{\includegraphics[width=0.19\textwidth]{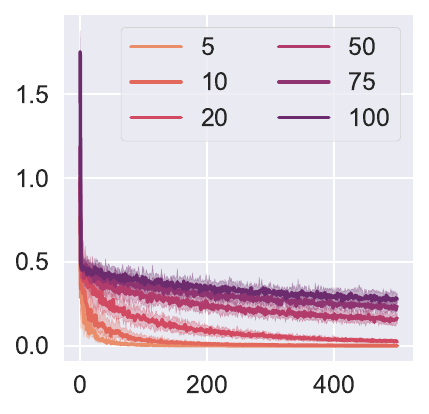} }}
    {\subfigure[$\alpha=0.9$]{\includegraphics[width=0.19\textwidth]{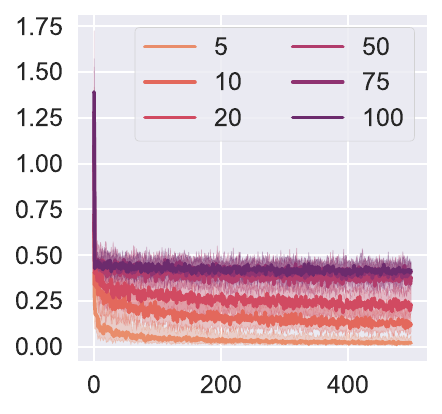} }}
    \caption{Evolution of $SW_2^2(\sigma_k, \mu)$ when $\sigma =\mathcal{N}(0,\Sigma)$ and $\mu=\mathcal{N}(0,\Lambda)$}
    \label{ContinuousIDTtargetGeneral}
\end{figure}

\subsection{Discrete source and target}

We also complement \Cref{BlobToBlob} with \Cref{BlobToN01} and \Cref{BlobToGaussian} on empirical distributions sampled with $n=500$ observations.
The source is drawn from a mixture of Gaussians. 
The target is a Gaussian distribution, either with isotropic or non-isotropic covariance.  
The evolution of the Sliced-Wasserstein distance between iterates and the target reflects again that convergence is faster for learning rates close to $1$, especially for the case $\alpha=0.1$. 
The corresponding slowly decreasing learning rate leads to faster convergence than the fixed learning rate $\gamma_k \equiv 1$ ($\alpha=0$) in all our experiments on discrete samples.

\begin{figure}[h!]
    \centering
    {\subfigure[Setting]{\includegraphics[width=0.18\textwidth]{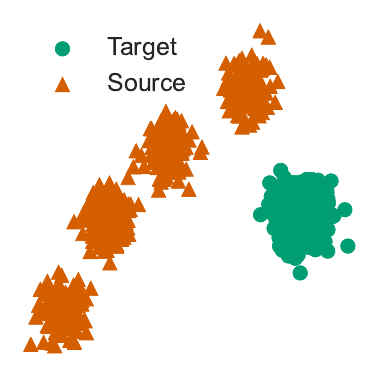}}}
    {\subfigure[$\alpha = 0$]{\includegraphics[width=0.19\textwidth]{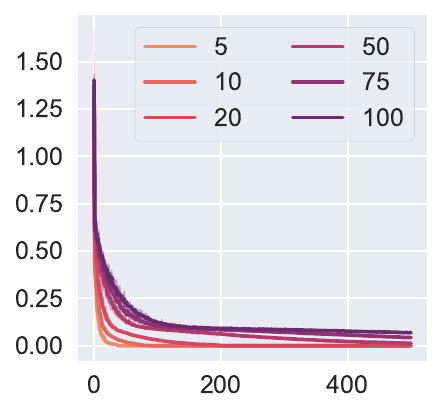}}}
    {\subfigure[$\alpha = 0.1$]{\includegraphics[width=0.19\textwidth]{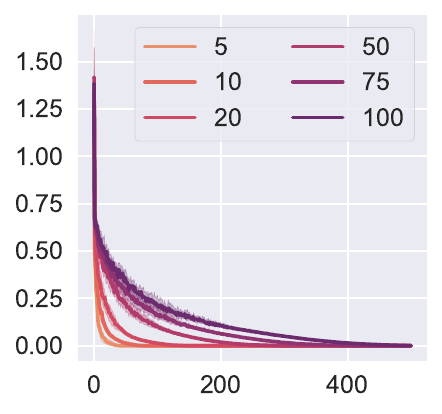}}}
    {\subfigure[$\alpha = 0.51$]{\includegraphics[width=0.19\textwidth]{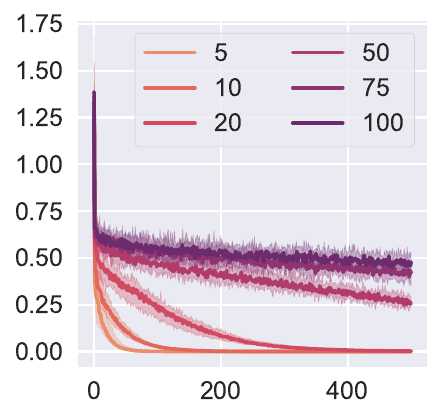}}}
    {\subfigure[$\alpha = 0.9$]{\includegraphics[width=0.19\textwidth]{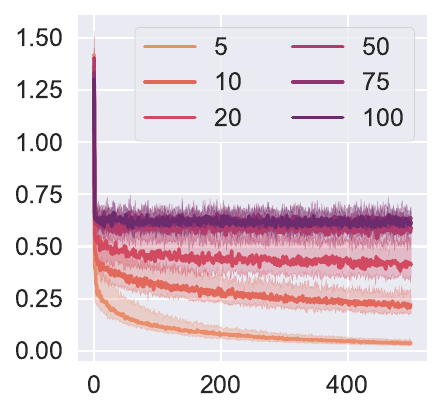}}}
    \caption{Evolution of $SW_2^2(\sigma_k, \mu)$ for discrete source and target distributions. The source is sampled from a mixture of Gaussians, and the target is sampled from $\mathcal{N}(0,{\bf I}_d)$}
    \label{BlobToN01}
\end{figure}

\begin{figure}[h!]
    \centering
    {\subfigure[Setting]{\includegraphics[width=0.18\textwidth]{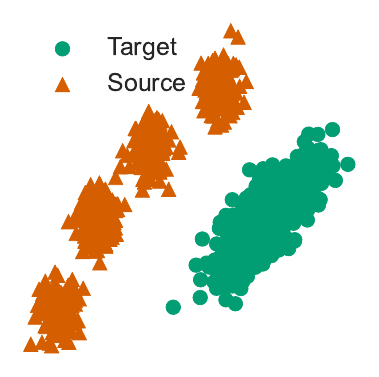} }}
    {\subfigure[$\alpha = 0$]{\includegraphics[width=0.19\textwidth]{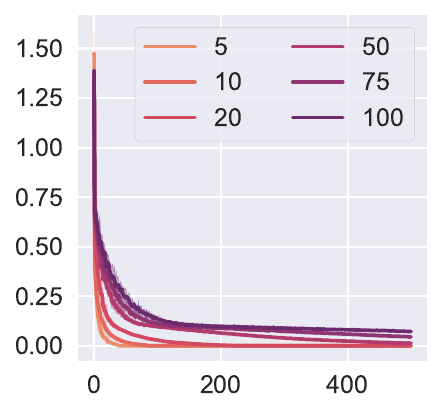} }}
    {\subfigure[$\alpha = 0.1$]{\includegraphics[width=0.19\textwidth]{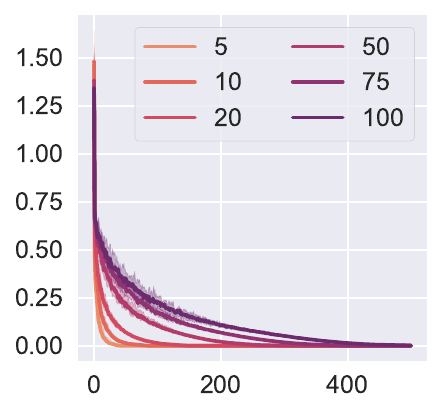} }}
    {\subfigure[$\alpha = 0.51$]{\includegraphics[width=0.19\textwidth]{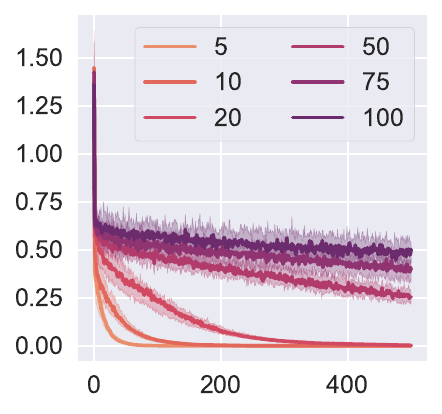}}}
    {\subfigure[$\alpha = 0.9$]{\includegraphics[width=0.19\textwidth]{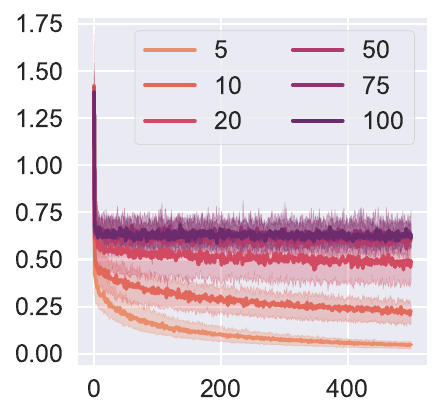} }}
    \caption{Evolution of $SW_2^2(\sigma_k, \mu)$ for discrete source and target distributions. The source is sampled from a mixture of Gaussians, and the target is sampled from $\mathcal{N}(0,\Lambda)$}
    \label{BlobToGaussian}
\end{figure}

\subsection{A single direction for the slice-matching scheme}

\Cref{ContinuousIDTtargetstandardONEDIR} and 
\Cref{curve_eigONEDIR} provide alternative experiments when one replaces the orthonormal set of directions $P_{k+1}$ by a single direction $\theta_{k+1}$. 
We consider continuous Gaussian source and target distributions, so that iterates are explicit. 
\Cref{ContinuousIDTtargetstandardONEDIR} shows the evolution of the Sliced-Wasserstein loss for this alternative algorithm, and \Cref{curve_eigONEDIR} shows the evolution of the min/max eigenvalues. Each considers $N=10$ independent runs, each with a different source covariance. 
This illustrates how the convergence is worsened for all learning rates and all dimensions $d$, as compared to our experiments with multiple directions $P_{k+1}$.

\begin{figure}[t!]
    \centering
    {\subfigure[$\alpha=0$]{\includegraphics[width=0.19\textwidth]{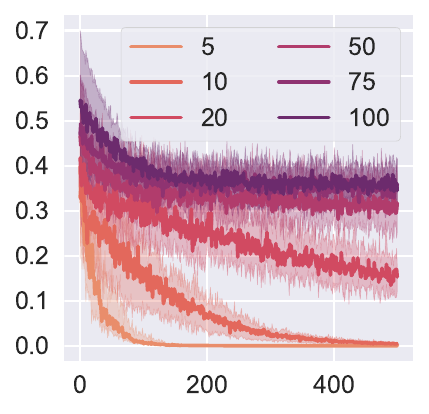} }}
    {\subfigure[$\alpha=0.1$]{\includegraphics[width=0.19\textwidth]{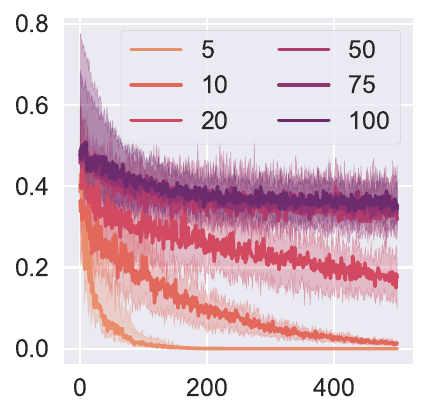}}}
    {\subfigure[$\alpha=0.51$]{\includegraphics[width=0.19\textwidth]{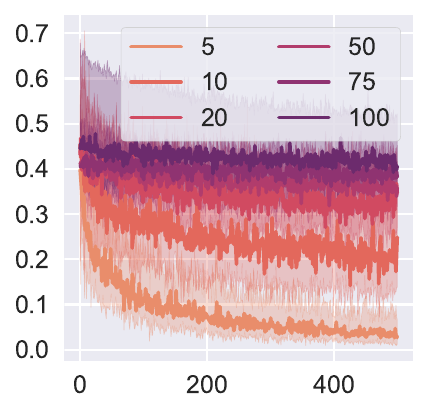} }}
    {\subfigure[$\alpha=0.9$]{\includegraphics[width=0.19\textwidth]{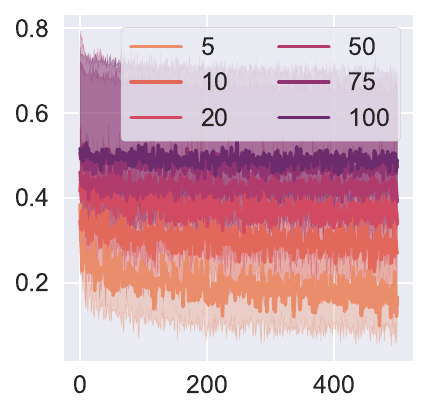}}}
    \caption{Evolution of $SW_2^2(\sigma_k, \mu)$ when $\sigma =\mathcal{N}(0,\Sigma)$ and $\mu=\mathcal{N}(0,{\bf I }_d)$, with slice-matching maps along a single direction $\theta_{k+1}$ instead of an orthonormal basis $P_{k+1}$}
    \label{ContinuousIDTtargetstandardONEDIR}
\end{figure}
\begin{figure}[t!]
    \centering
    {\subfigure[$\alpha=0$]{\includegraphics[width=0.2\textwidth]{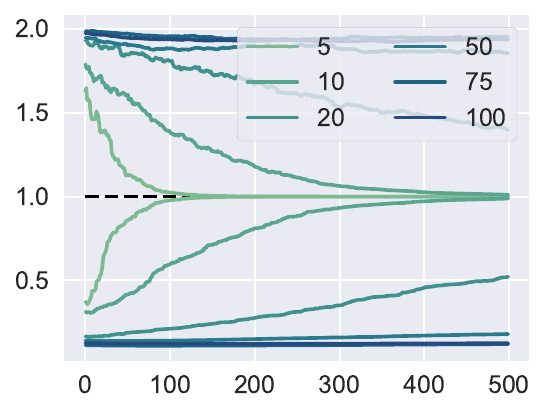} }}
    {\subfigure[$\alpha=0.1$]{\includegraphics[width=0.2\textwidth]{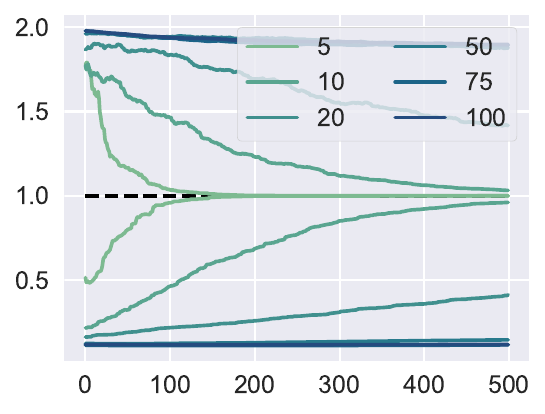} }}
    {\subfigure[$\alpha=0.51$]{\includegraphics[width=0.2\textwidth]{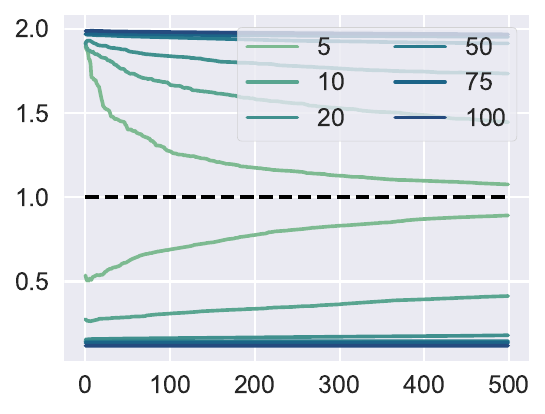} }}
    {\subfigure[$\alpha=0.9$]{\includegraphics[width=0.2\textwidth]{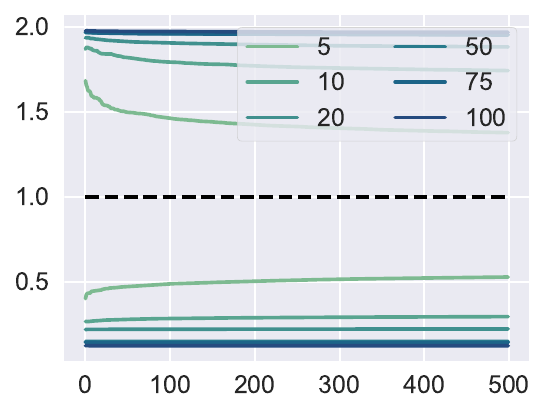} }}
    \caption{Minimum and maximum eigenvalues of the estimated covariances $\Sigma_k$ when $\sigma =\mathcal{N}(0,\Sigma)$ and $\mu=\mathcal{N}(0,{\bf I}_d)$, with slice-matching maps along a single direction $\theta_{k+1}$ instead of an orthonormal basis $P_{k+1}$}
    \label{curve_eigONEDIR}
\end{figure}

\end{document}